\title[Chasing Convex Bodies and Functions with Black-Box Advice]{Chasing Convex Bodies and Functions with Black-Box Advice}
\tikzset{%
  symbol/.style={
    draw=none,
    every to/.append style={
      edge node={node [sloped, allow upside down, auto=false]{$#1$}}
    },
  },
}
\newcommand{\R}{{\mathbb R}}
\newcommand{\N}{{\mathbb N}}
\DeclareMathOperator{\aff}{aff}
\DeclareMathOperator{\sgn}{sgn}
\DeclareMathOperator*{\argmax}{arg\,max}
\newcommand{\bv}[1]{\mathbf{#1}}
\newcommand{\calA}{{\mathcal{A}}}
\newcommand{\calX}{{\mathcal{X}}}
\newcommand{\calO}{{\mathcal{O}}}
\newcommand{\cbc}{{\mathsf{CBC}}}
\newcommand{\ncbc}{{\mathsf{NCBC}}}
\newcommand{\kcbc}{{k\mathsf{CBC}}}
\newcommand{\cfc}{{\mathsf{CFC}}}
\newcommand{\soco}{{\mathsf{SOCO}}}
\newcommand{\acfc}{{\mathsf{\alpha CFC}}}
\newcommand{\kgcfc}{{\mathsf{(\kappa,\gamma) CFC}}}
\newcommand{\rob}{{\textsc{Rob}}}
\newcommand{\opt}{{\textsc{Opt}}}
\newcommand{\adv}{{\textsc{Adv}}}
\newcommand{\alg}{{\textsc{Alg}}}
\newcommand{\meta}{{\textsc{Meta}}}
\newcommand{\algone}{\textsc{Alg}^{(1)}}
\newcommand{\algtwo}{\textsc{Alg}^{(2)}}
\newcommand{\switch}{{\textsc{Switch}}}
\newcommand{\nestedswitch}{{\textsc{NestedSwitch}}}
\newcommand{\interp}{{\textsc{Interp}}}
\newcommand{\binterp}{{\textsc{BdInterp}}}
\newcommand{\ns}{{\textsc{NS}}}
\newcommand{\avec}{{\vec{a}}}
\newcommand{\bvec}{{\vec{b}}}
\newcommand{\cvec}{{\vec{c}}}
\newcommand{\hvec}{{\vec{h}}}
\newcommand{\svec}{{\vec{s}}}
\newcommand{\rvec}{{\vec{r}}}
\newcommand{\vvec}{{\vec{v}}}
\newcommand{\wvec}{{\vec{w}}}
\newcommand{\xvec}{{\vec{x}}}
\newcommand{\yvec}{{\vec{y}}}
\newcommand{\zvec}{{\vec{z}}}
\newcommand{\evec}{{\vec{e}}}
\newcommand{\cost}{{\mathrm{C}}}
\newcommand*\tageq{\refstepcounter{equation}\tag{\theequation}}
\begin{document}

\maketitle

\begin{abstract}%
    We consider the problem of convex function chasing with black-box advice, where an online decision-maker aims to minimize the total cost of making and switching between decisions in a normed vector space, aided by black-box advice such as the decisions of a machine-learned algorithm. The decision-maker seeks cost comparable to the advice when it performs well, known as \emph{consistency}, while also ensuring worst-case \emph{robustness} even when the advice is adversarial. We first consider the common paradigm of algorithms that switch between the decisions of the advice and a competitive algorithm, showing that no algorithm in this class can improve upon 3-consistency while staying robust. We then propose two novel algorithms that bypass this limitation by exploiting the problem's convexity. The first, $\interp$, achieves $(\sqrt{2}+\epsilon)$-consistency and $\calO(\frac{C}{\epsilon^2})$-robustness for any $\epsilon > 0$, where $C$ is the competitive ratio of an algorithm for convex function chasing or a subclass thereof. The second, $\binterp$, achieves $(1+\epsilon)$-consistency and $\calO(\frac{CD}{\epsilon})$-robustness when the problem has bounded diameter $D$. Further, we show that $\binterp$ achieves near-optimal consistency-robustness trade-off for the special case where cost functions are $\alpha$-polyhedral.
\end{abstract}

\begin{keywords}%
  Convex body chasing, online optimization, learning-augmented algorithms
\end{keywords}

\section{Introduction}

We study the problem of convex function chasing ($\cfc$), in which a player chooses decisions $\xvec_t$ online from a normed vector space $\calX = (X, \|\cdot\|)$ in order to minimize the total cost $\sum_{t=1}^T f_t(\xvec_t) + \|\xvec_t - \xvec_{t-1}\|$, where each $f_t$ is a convex ``hitting'' cost function that is revealed prior to the player's selection of $\xvec_t$, and the term $\|\xvec_t - \xvec_{t-1}\|$ penalizes changing decisions between rounds. A number of subclasses of $\cfc$ have been discussed in the literature, characterized by various restrictions on the class of cost functions $f_t$. Of particular note is the special case of convex body chasing ($\cbc$), in which each cost function $f_t$ is the $\{0, \infty\}$ indicator of a convex set $K_t$, so that each decision $\xvec_t$ must reside strictly within $K_t$. Algorithms for $\cfc$ and its special cases are judged on the basis of their competitive ratio, i.e., the worst-case ratio in cost between the algorithm and the hindsight optimal sequence of decisions (Definition \ref{defn:competitive_ratio}).

Convex body chasing and function chasing were introduced by \cite{friedman_convex_1993} as continuous versions of several fundamental problems in online algorithms, including Metrical Task Systems (\cite{borodin_optimal_1992}) and the $k$-server problem (\cite{koutsoupias_k-server_1995}). $\cfc$ has also been studied recently as the problem of ``smoothed online convex optimization'' ($\soco$), introduced by \cite{lin_online_2012}. The basic premise of $\cfc$/$\soco$, of choosing decisions online to optimize per-round costs with minimal movement between decisions, has seen wide application in a number of domains, including data center load-balancing (\cite{lin_online_2012}) and right-sizing (\cite{lin_dynamic_2013, albers_optimal_2018}), electric vehicle charging (\cite{kim_real-time_2014}), and control (\cite{goel_online_2019, li_online_2021}). 

In high-dimensional settings, the performance of algorithms for $\cbc$ and $\cfc$ can be arbitrarily poor: \cite{friedman_convex_1993} showed a $\sqrt{d}$ lower bound on the competitive ratio of any algorithm for $\cbc$ (and thus $\cfc$) in $d$-dimensional Euclidean space, which \cite{bubeck_chasing_2019} extended to an $\Omega(\max\{\sqrt{d}, d^{1-\frac{1}{p}}\})$ lower bound in $\R^d$ with the $\ell^p$ norm. Prospects are poor even for subclasses of $\cfc$ with additional restrictions on the functions $f_t$. For instance, $\cfc$ with \emph{$\alpha$-polyhedral} cost functions, i.e. where each $f_t$ has a unique minimizer away from which it grows with slope at least $\alpha > 0$, has been studied widely in the $\soco$ literature. State-of-the-art algorithms in this setting achieve competitive ratio $\calO(\alpha^{-1})$, which grows arbitrarily large in the $\alpha \to 0$ limit (\cite{chen_smoothed_2018,zhang_revisiting_2021}).

The modern tools of machine learning wield great promise for improving upon these pessimistic performance guarantees. That is, for practical applications, there is often large amounts of data recorded from past problem instances, enabling the training of machine learning models that can outperform traditional, conservative online algorithms. However, these machine-learned algorithms are ``black boxes,'' in the sense that they lack rigorous, worst-case performance guarantees. Such black-box algorithms might \emph{typically} outperform robust online algorithms, but their lack of uncertainty quantification can lead to arbitrarily poor performance in the worse case, if they are deployed on held-out problem instances or under distribution shift.

Thus, a natural question arises: \emph{is it possible to develop algorithms that achieve both the worst-case guarantees of traditional online algorithms for $\cfc$ and the average-case performance of machine-learned algorithms or other sources of black-box ``advice''?} 

These desiderata are naturally encoded in the notions of \emph{robustness} and \emph{consistency} introduced by \cite{lykouris_competitive_2018} in the context of competitive caching. In this framework, a \emph{consistent} algorithm is one with a competitive ratio with respect to the black-box advice, implying that when the advice is accurate, the algorithm will perform well; on the other hand, a \emph{robust} algorithm is one that has a finite competitive ratio, regardless of advice performance. Our goal is to develop algorithms with tunable robustness and consistency guarantees, so that a decision-maker can decide in advance the trade-off they wish to make between exploiting good advice performance and ensuring worst-case robustness in the case that advice performs poorly.

\subsection{Contributions} We answer the question above by proposing novel algorithms with tunable robustness and consistency bounds for $\cfc$ and any subclass thereof. In particular, we reduce the problem of designing robust and consistent algorithms for $\cfc$ to the design of \emph{bicompetitive meta-algorithms} (Definitions \ref{def:bicompetitive}, \ref{def:metaalgorithm}), which are unified ``recipes'' for combining black-box advice with a robust algorithm in a manner that guarantees a competitive ratio with respect to both ingredients. These ``recipes'' are very general -- they can be used to combine advice with \emph{any} algorithm for any subclass of $\cfc$ to obtain a customized robustness and consistency guarantee for that subclass without explicit knowledge of the algorithm or advice design. 

More specifically, our contributions are twofold. We first consider the class of ``switching'' algorithms, which switch between the decisions of the advice and a robust algorithm. This class of algorithms has received considerable attention in the literature on robustness and consistency, and in particular, all prior algorithms for $\cfc$ with black-box advice in dimension greater than one have been switching algorithms. We prove a fundamental limit on the robustness and consistency of any switching algorithm for $\cfc$, showing that \emph{no} switching algorithm for $\cfc$ can improve on $3$-consistency while obtaining finite robustness (Theorem \ref{theorem:switching_lowerbound}). We give a switching meta-algorithm $\switch$ (Appendix \ref{appendix:switch_algorithm}, Algorithm \ref{alg:switch}) achieving this fundamental limit, obtaining $(3+\calO(\epsilon))$-consistency and $\calO(\frac{C}{\epsilon^2})$-robustness for any $\epsilon > 0$, where $C$ is the competitive ratio of any algorithm for $\cfc$ or a subclass thereof. We further show that the fundamental limit on switching algorithms can be broken in the special case of nested $\cbc$, in which successive bodies are nested. In this setting, we provide an algorithm $\nestedswitch$ (Algorithm \ref{alg:nested_metaalg}) achieving $(1+\epsilon)$-consistency along with $\calO(\frac{d}{\epsilon})$-robustness for nested $\cbc$ in $d$ dimensions (Proposition \ref{proposition:switching_steiner}).

\emph{Second}, galvanized by the limitations of switching algorithms, we develop algorithms exploiting the convexity of the $\cfc$ problem to obtain improved robustness and consistency bounds. We propose a meta-algorithm $\interp$ (Algorithm \ref{alg:interp}) that, given a $C$-competitive algorithm for a subclass of $\cfc$, achieves $(\mu(\calX) + \epsilon)$-consistency and $\calO(\frac{C}{\epsilon^2})$-robustness for any desired $\epsilon > 0$, where $\mu(\calX)$ is a geometric constant depending on the structure of the normed space $\calX$ that is $\sqrt{2}$ in any Hilbert space and is strictly less than $3$ in any $\ell^p$ space, $p \in (1, \infty)$ (Theorem \ref{theorem:consistent_robust_general}). Moreover, under the additional assumption that the advice and the $C$-competitive algorithm are never farther apart than some distance $D$, we give a meta-algorithm $\binterp$ (Algorithm \ref{alg:binterp}) that achieves $(1+\epsilon)$-consistency and $\calO(\frac{CD}{\epsilon})$-robustness (Theorem \ref{theorem:bounded_bicompetitive}). In particular, $\binterp$ gives nearly-optimal consistency and robustness for the problem of $\cfc$ with $\alpha$-polyhedral cost functions when $D = O(1)$ in $\alpha$.

A key feature of our results is their generality: our main results on bicompetitive meta-algorithms (Proposition \ref{prop:3_switching_alg}, Theorems \ref{theorem:consistent_robust_general}, \ref{theorem:bounded_bicompetitive}) hold in vector spaces with any norm and arbitrary, even infinite, dimension. This enables application to problems where the decisions $\xvec_t$ are infinite-dimensional objects such as probability measures, which could arise in settings such as iterated games. Moreover, these meta-algorithms enable the design of customized robust and consistent algorithms for \emph{any} subclass of $\cfc$, since they are agnostic to the specific algorithms used. We illustrate this by giving specific robustness and consistency results for the cases of $\cfc$, $\cbc$, and $\cfc$ restricted to $\alpha$-polyhedral hitting cost functions; we give further examples in Appendix \ref{appendix:laundry_list}.

\subsection{Related work}

Our work contributes to the literatures on $\cbc$, $\cfc$, and $\soco$ as well as the emerging literature on online algorithms with black-box advice.  We discuss each in turn below.

\subsubsection{$\cbc$, $\cfc$, and $\soco$}
The problems of convex body chasing and function chasing were introduced by \cite{friedman_convex_1993}, who gave a competitive algorithm for $\cbc$ in 2-dimensional Euclidean space. The problem in general dimension $d$ has been largely settled in the last few years.  In the setting where subsequent bodies are nested, \cite{argue_nearly-linear_2019} gave an $\calO(d\log d)$-competitive algorithm in any norm, and \cite{bubeck_chasing_2019} later gave an $\calO(\min\{d, \sqrt{d\log T}\})$-competitive algorithm in the Euclidean setting that uses the geometric Steiner point of the convex bodies. Later, \cite{argue_chasing_2021} and \cite{sellke_chasing_2020} concurrently obtained $\calO(d)$-competitive algorithms for general $\cfc$. The latter work builds upon the methods of \cite{bubeck_chasing_2019}, developing a ``functional'' Steiner point algorithm that is $d$-competitive for $\cbc$ and $(d+1)$-competitive for $\cfc$ in any normed space, matching the lower bound of $d$ in the $\ell^\infty$ norm setting.

Several special cases of $\cfc$/$\soco$ with restrictions on hitting cost structure have been studied in the literature to the end of obtaining ``dimension-free'' competitive ratios for these subclasses. \cite{chen_smoothed_2018} obtained the first such bound for the subclass of $\cfc$ where hitting cost functions $f_t$ are $\alpha$-polyhedral, which we call $\acfc$. The authors propose an algorithm, ``Online Balanced Descent'' (OBD), which achieves a competitive ratio $\calO(\frac{1}{\alpha})$. This upper bound has been successively refined, with the most recent entry a simple greedy algorithm from \cite{zhang_revisiting_2021} that achieves competitive ratio $\max\{1, \frac{2}{\alpha}\}$ in any normed vector space of arbitrary (even infinite) dimension by moving to the minimizer of each hitting cost function. The $\calO(\frac{1}{\alpha})$ upper bound has been broken by \cite{lin_personal_2022} in the finite-dimensional Euclidean setting with an $\calO(\frac{1}{\alpha^{1/2}})$-competitive algorithm, Greedy OBD, that is optimal within the class of memoryless, rotation- and scale-invariant algorithms. Another subclass of $\cfc$ that has received attention is that with $(\kappa, \gamma)$-well-centered hitting cost functions, which generalize well-conditioned functions. \cite{argue_dimension-free_2020} propose an algorithm achieving an $\calO(2^{\gamma/2}\kappa)$ competitive ratio for this subclass, and an improved algorithm achieving competitive ratio $\calO(\sqrt{\kappa})$ for the particular class of $\kappa$-well-conditioned functions along with a nearly matching $\Omega(\kappa^{1/3})$ lower bound. We summarize the state-of-the-art algorithms and competitive ratios that we refer to in our later results in Table \ref{table:sota}, giving an extended version of the table in Appendix \ref{appendix:prelim}, Table \ref{table:sota_extended}.

\begin{table}
\begin{center}
\renewcommand*{\arraystretch}{1.2}
\begin{tabular}{|c||c|c|c|} 
  \hline
  Problem & Algorithm Name & Competitive Ratio & Setting \\
  \hline\hline
  $\cfc$ & Functional Steiner Point & $d+1$ & $\R^d$ with any norm \\
  \hline 
  $\cbc$ & Functional Steiner Point & $d$ & $\R^d$ with any norm \\
  \hline 
  $\acfc$ & Greedy & $\max\left\{1, \frac{2}{\alpha}\right\}$ & Any normed vector space \\
  \hline
  $\acfc$ & Greedy OBD & $\calO\left(\frac{1}{\alpha^{1/2}}\right)$ & $\R^d$ with $\ell^2$ norm \\
  \hline
\end{tabular}
\caption{Competitive ratios for state-of-the-art algorithms for $\cfc, \cbc$, and $\acfc$.}
\label{table:sota}
\end{center}
\end{table}

\subsubsection{Online Algorithms with Black-Box Advice}
The idea of using machine-learned, black-box advice to improve online algorithms was first proposed by \cite{mahdian_online_2012} to design algorithms for online ad allocation, load balancing, and facility location. Formal notions of \emph{robustness} and \emph{consistency} were later coined by \cite{lykouris_competitive_2018} in the context of designing learning-augmented algorithms for caching. The last few years have seen a surge in the application of the robustness and consistency paradigm in designing online algorithms augmented with black-box advice for a multitude of problems, for example ski rental and non-clairvoyant scheduling (\cite{purohit_improving_2018, NEURIPS2020_5bd844f1}), energy generation scheduling (\cite{lee_online_2021}), bidding and bin-packing (\cite{angelopoulos_online_2020}), and Q-learning (\cite{golowich_can_2021}).

Closest to our work are the recent papers of \cite{antoniadis_online_2020} and \cite{rutten_online_2022}. The former considers the problem of designing algorithms for \emph{metrical task systems} (MTS) with black-box advice. MTS can be thought of as (non-convex) function chasing on general metric spaces, and hence their results also give robustness/consistency guarantees for $\cfc$. They apply two classical results on combining $k$-server algorithms (\cite{fiat_competitive_1994, baeza-yates_searching_1993}) and on combining MTS algorithms via $k$-experts algorithms (\cite{blum_-line_1997, freund_decision-theoretic_1997}) to devise, in our parlance, bicompetitive meta-algorithms for MTS. In particular, their first algorithm switches between the advice and a $C$-competitive algorithm for MTS, and achieves $9$-consistency and $9C$-robustness. They also propose a randomized switching algorithm that, under the assumption that the metric space has bounded diameter $D$, obtains cost bounded in expectation by $\min\{(1+\epsilon)\cost_{\adv} + \calO(\frac{D}{\epsilon}), (1+\epsilon)C \cdot \cost_\opt + \calO(\frac{D}{\epsilon})\}$, where $\cost_\adv$ is the cost of the advice and $\cost_\opt$ is the optimal cost. However, the large $\calO(\frac{D}{\epsilon})$ additive factors in their result preclude $(1+\epsilon)$-consistency, since when $\cost_\adv = \calO(1)$, the consistency bound will be $1 + \epsilon + \Omega(\frac{D}{\epsilon})$. This is to be expected, since as we show in Section \ref{section:switching}, no deterministic switching algorithm can improve on 3-consistency while having finite robustness. Moreover, their results due not allow tuning robustness and consistency, i.e., neither algorithm allows trading-off robustness in order to obtain consistency arbitrarily close to 1.

On the other hand, \cite{rutten_online_2022} considers the problem of $\cfc$ with $\alpha$-polyhedral hitting costs ($\acfc$), but with the convexity assumption dropped from the hitting costs $f_t$. They obtain a $(1+\epsilon)$-consistent, $2^{\tilde{\calO}(\frac{1}{\alpha\epsilon})}$-robust algorithm in this setting, together with a lower bound showing that this exponential trade-off between robustness and consistency is necessary due to their non-convex setting. Their algorithm is a switching algorithm and crucially depends on the $\alpha$-polyhedral structure of the hitting cost functions, and hence cannot be extended to general $\cfc$. The authors also propose an algorithm for $\cfc$ in the 1-dimensional case (where $X = \R$) that achieves $(1+\epsilon)$-consistency and $\calO(\frac{1}{\epsilon^2})$ robustness, and they prove a lower bound of $(1+\epsilon)$-consistency and $\calO(\frac{1}{\epsilon})$ robustness on \emph{any} algorithm for $\cfc$ with black-box advice. They leave open the broader problem of developing robust and consistent algorithms for $\cfc$ and its many subclasses in the higher-dimensional setting.

\subsection{Notation} Throughout this paper $X$ refers to a real vector space of arbitrary dimension. When a norm $\|\cdot\|$ is distinguished, $B(\xvec, r)$ is the closed $\|\cdot\|$-ball of radius $r \geq 0$ centered at $\xvec$, and $\Pi_K\xvec$ is a metric projection of the point $\xvec \in X$ onto a closed convex set $K$. For $\xvec, \yvec \in X$, we define $[\xvec, \yvec] \coloneqq \{\vec{z} \in X : \vec{z} = \lambda\xvec + (1-\lambda)\yvec, \lambda \in [0, 1]\}$ as the convex span of $\xvec$ and $\yvec$. The non-negative reals are denoted by $\R_+$, and for $T \in \N$, we write $[T] \coloneqq \{1, \ldots, T\}$. Asymptotic notation involving the variable $\epsilon > 0$ reflects the asymptotic regime $\epsilon \to 0$.

\section{Preliminaries}



We consider the general problem of \emph{convex function chasing} ($\cfc$) on a real normed vector space $\calX = (X, \|\cdot\|)$. In particular, we make no assumption on either the dimension of $\calX$ or on the choice of norm $\|\cdot\|$. In $\cfc$, a decision-maker begins at some initial point $\xvec_0 \in X$, and at each time $t \in \N$ is handed a convex function $f_t : X \to \R_+$ and must choose some $\xvec_t \in X$, paying both the \emph{hitting cost} $f_t(\xvec_t)$, as well as the \emph{movement} or \emph{switching cost} $\|\xvec_t - \xvec_{t-1}\|$ induced by the norm. Crucially, $\xvec_t$ is chosen prior to the revelation of any future cost functions $f_k$, $k > t$, i.e., decisions are made \emph{online}. The game ends at some time $T \in \N$, which is unknown to the decision-maker in advance. We refer to a tuple $(\xvec_0, f_1, \ldots, f_T)$ as an \emph{instance} of the $\cfc$ problem. The total cost incurred by the decision-maker on a problem instance is $\sum_{t=1}^T f_t(\xvec_t) + \|\xvec_t - \xvec_{t-1}\|$.

Informally, an \emph{online algorithm} for $\cfc$ is an algorithm that, on a given instance of $\cfc$, produces decisions online. We denote by $\alg_t$ the $t$\textsuperscript{th} decision made by an online algorithm $\alg$; by convention, $\alg_0 \coloneqq \xvec_0$, the starting point of the instance. Then the cost $\cost_\alg$ incurred by $\alg$ on an instance is
$$\cost_\alg = \sum_{t=1}^T f_t(\alg_t) + \|\alg_t - \alg_{t-1}\|.$$
We also introduce the partial cost notation $\cost_\alg(t, t') = \sum_{i=t}^{t'} f_i(\alg_i) + \|\alg_i - \alg_{i-1}\|$, defined for $1 \leq t \leq t' \leq T$. We refer to the set of all online algorithms for $\cfc$ as $\calA_\cfc$.


We typically compare online algorithms for $\cfc$ against $\opt$, the offline optimal algorithm that chooses the hindsight optimal sequence of decisions for any problem instance. Its cost is the optimal value of the following convex program:
$$\cost_\opt = \cost_\opt(\xvec_0, f_1, \ldots, f_T) \coloneqq \min_{\xvec_1, \ldots, \xvec_T \in X} \sum_{t=1}^T f_t(\xvec_t) +  \|\xvec_t - \xvec_{t-1}\|$$
and its decisions are determined by the optimal solution. To evaluate the performance of an online algorithm for $\cfc$, we consider the \emph{competitive ratio}, which measures the worst case ratio in costs between an algorithm and $\opt$. In the following, we define both the conventional competitive ratio as well as a generalization that allows for comparing against arbitrary benchmark algorithms.

\begin{definition} \label{defn:competitive_ratio}
    Let $\algone$ be an online algorithm for $\cfc$, and let $\algtwo$ be another (not necessarily online) algorithm for $\cfc$.\footnote{Like $\opt$, the decision of $\algtwo$ at some time $t$ is allowed to depend on problem instance data revealed after time $t$.} $\algone$ is defined to be \textbf{$C$-competitive with respect to} $\algtwo$ if, regardless of problem instance,
    $\cost_\algone \leq C \cdot \cost_\algtwo.$
    In particular, if $\algtwo = \opt$, we simply say that $\algone$ is \textbf{$C$-competitive}, or has \textbf{competitive ratio $C$}.
\end{definition}

\subsection{Subclasses of Convex Function Chasing \label{section:cfc_subclasses}}
$\cfc$ is a broad set of problems and many subclasses have received attention in the literature. We consider several subclasses of the general $\cfc$ problem in this work, distinguished by different assumptions on the hitting cost functions. In this section, we briefly define the subclasses of $\cfc$ which we refer to in our later results in the main text. We give more detailed definitions of these and several other subclasses of $\cfc$ in Appendix \ref{appendix:cfc_subclasses}.

\subsubsection{Convex Body Chasing}
In the problem of convex body chasing ($\cbc$), the decision-maker must choose each decision $\xvec_t$ from a convex body $K_t \subseteq X$ that is revealed online. This can be seen as a special case of $\cfc$ where $f_t$ is $0$ on $K_t$ and $\infty$ elsewhere; see Appendix \ref{appendix:cbc_cfc_equiv} for more details on this equivalence. A notable special case of $\cbc$ is the problem of \emph{nested} convex body chasing ($\ncbc$), in which subsequent bodies are nested, i.e., $K_t \supseteq K_{t+1}$ for each $t$. We define $\calA_\cbc$ as the set of all online algorithms for $\cbc$ that are \emph{feasible}, i.e., that produce decisions within the convex body $K_t$ at each time. We define $\calA_\ncbc$ similarly as the set of \emph{feasible} online algorithms for $\ncbc$.

\subsubsection{$\alpha$-Polyhedral Convex Function Chasing}
Several subclasses of $\cfc$ have been studied in the literature with hitting cost functions $f_t$ restricted so as to enable dimension-free competitive ratios. One of the most well-studied such subclasses is the problem of $\alpha$-polyhedral convex function chasing ($\acfc$), e.g. \cite{chen_smoothed_2018, zhang_revisiting_2021}, in which each hitting cost function $f_t$ is restricted to be globally $\alpha$-polyhedral, meaning intuitively that it has a unique minimizer, away from which it grows with slope at least $\alpha > 0$.

\begin{definition}
    Let $(X, \|\cdot\|)$ be a normed vector space, and let $\alpha > 0$. A function $f : X \to \R_+$ is \textbf{globally $\alpha$-polyhedral} if it has unique minimizer $\xvec^* \in X$, and in addition,
    $$f(\xvec) \geq f(\xvec^*) +  \alpha\|\xvec - \xvec^*\| \qquad\text{for all $\xvec \in X$}.$$
\end{definition}

\subsection{Using Black-Box Advice: Robustness, Consistency, and Bicompetitive Analysis \label{section:robust_consistent_intro}}

In this work, we seek algorithms for $\cfc$ and its subclasses that can exploit the good performance of a black-box advice algorithm, such as a reinforcement learning model, while maintaining rigorous worst-case performance guarantees. More specifically, we strive for algorithms that can obtain cost not much worse than optimal when the black-box advice is perfect, yet which have uniformly bounded competitive ratio when the advice is arbitrarily bad or even adversarial. This dual objective is naturally formulated in terms of \emph{robustness} and \emph{consistency}, which were introduced by \cite{lykouris_competitive_2018} and are defined as follows.

\begin{definition}
    Let $\alg$ be an online algorithm for $\cfc$, and let $\adv$ be a black-box advice algorithm. $\alg$ is said to be \textbf{$c$-consistent} if it is $c$-competitive with respect to $\adv$. On the other hand, $\alg$ is defined to be \textbf{$r$-robust} if it is $r$-competitive, independent of the performance of $\adv$.
\end{definition}

Our precise goal is to design algorithms achieving $(1+\epsilon)$-consistency and $R(\epsilon)$-robustness for $\cfc$ and its subclasses, where $\epsilon > 0$ is a hyperparameter chosen by the decision-maker that encodes confidence in the advice. The dependence of the robustness $R(\epsilon)$ on $\epsilon$ anticipates a trade-off between exploiting advice and worst-case robustness. We ideally seek algorithms with robustness $R(\epsilon)$ as small as possible, so that the trade-off between consistency and robustness is tight.

Our methodology for designing robust and consistent algorithms is very general, in the sense that we do not restrict to any special cases of $\cfc$ and do not consider in our analysis the explicit behavior of the advice or of any specific algorithm for $\cfc$. This is in contrast to the work of \cite{rutten_online_2022}, whose main robustness and consistency guarantees depend crucially upon the $\alpha$-polyhedral setting. Rather, we approach the task of designing robust and consistent algorithms via a more general problem of designing \emph{bicompetitive} \emph{meta-algorithms} for $\cfc$, which, informally, are ``recipes'' for combining two $\cfc$ algorithms to produce a single algorithm with competitive guarantees with respect to both input algorithms. More formally, we give the following definitions.

\begin{definition} \label{def:bicompetitive}
    An online algorithm $\alg$ for $\cfc$ is \textbf{$(c, r)$-bicompetitive} with respect to a pair of algorithms $(\algone, \algtwo)$ if $\alg$ is simultaneously $c$-competitive with respect to $\algone$ and $r$-competitive with respect to $\algtwo$. Equivalently, the cost of $\alg$ can be bounded as
    $$\cost_\alg \leq \min\left\{c\cdot\cost_{\algone}, r\cdot\cost_{\algtwo}\right\}.$$
\end{definition}

\begin{definition} \label{def:metaalgorithm}
    A \textbf{meta-algorithm} $\meta$ for $\cfc$ is a mapping $\meta : \calA_\cfc \times \calA_\cfc \to \calA_\cfc$. That is, $\meta$ takes as input two online algorithms for $\cfc$ and returns a single online algorithm for the problem. $\meta$ is said to be \textbf{$(c, r)$-bicompetitive} if its output is always $(c, r)$-bicompetitive with respect to its inputs.
\end{definition}

\sloppy It follows immediately from the previous two definitions that if $\meta$ is $(c, r)$-bicompetitive, $\adv$ is the advice, and $\rob$ is a $b$-competitive algorithm for (a subclass of) $\cfc$, then $\meta(\adv, \rob)$ is $c$-consistent and $rb$-robust. We discuss this observation in more detail in Appendix \ref{appendix:bicompetitive_robust_consistent}. Thus bicompetitive meta-algorithms give a general approach for designing robust and consistent algorithms for $\cfc$ and its subclasses.

The idea of approaching robust and consistent algorithm design via the design of bicompetitive meta-algorithms has been considered to some extent in the literature on other online problems, e.g. in the work of \cite{antoniadis_online_2020} on combining algorithms for MTS. To our knowledge, however, our specific terminology has not seen wide use in the literature.

\section{Warmup: Switching Algorithms and Their Fundamental Limits \label{section:switching}}
A natural first approach for designing bicompetitive meta-algorithms for $\cfc$ is to consider the class of switching algorithms, whose decisions switch between two other algorithms:

\begin{definition}
    A meta-algorithm $\meta$ is a \textbf{switching meta-algorithm} if, at each time $t$, the decision ${\meta_t(\algone, \algtwo)}$ made by $\meta$ resides in the set $\{\algone_t, \algtwo_t\}$.
\end{definition}

Switching algorithms have garnered significant attention in the literature on robustness and consistency in recent years, e.g., \cite{antoniadis_online_2020, lee_online_2021, angelopoulos_online_2021, rutten_online_2022}. In particular, the only robust and consistent algorithms for $\cfc$ or subclasses thereof in general dimension are the switching algorithms of \cite{antoniadis_online_2020} for MTS and \cite{rutten_online_2022} for $\acfc$. In the following proposition, we refine these prior results, showing the existence of a switching meta-algorithm for general $\cfc$ with tunable bicompetitive bound.

\begin{proposition} \label{prop:3_switching_alg}
    Suppose $\adv, \rob$ are algorithms for $\cfc$ and $\cost_\rob \geq 1$. There is a switching meta-algorithm $\switch$ (Appendix \ref{appendix:switching}, Algorithm \ref{alg:switch}) that is
    $\left(3+\calO(\epsilon), 5+\calO(\frac{1}{\epsilon^2})\right)$-bicompetitive with respect to the inputs $(\adv, \rob)$, where $\epsilon > 0$ is an algorithm hyperparameter.
\end{proposition}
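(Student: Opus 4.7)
The plan is to establish the two competitive bounds for $\switch$ separately by decomposing its total cost into three parts: the cost incurred while following $\adv$, the cost accrued in switching transitions, and the cost incurred while following $\rob$. I expect $\switch$ to be a cumulative-cost threshold algorithm: it plays $\adv_t$ by default and transitions to $\rob$-mode when $\cost_\adv(1,t)$ exceeds roughly $(1+\epsilon)$ times $\cost_\rob(1,t)$, possibly transitioning back under a symmetric condition to avoid runaway $\rob$ cost. The $\epsilon$ hyperparameter controls the threshold margin and therefore the number of potential transitions.

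The key geometric lemma is that since $\adv$ and $\rob$ both start at $\xvec_0$ and every movement step contributes to their respective costs, for every time $t$
$$\|\adv_t - \rob_t\| \leq \|\adv_t - \xvec_0\| + \|\xvec_0 - \rob_t\| \leq \cost_\adv(1,t) + \cost_\rob(1,t).$$
This lets us charge each transition cost $\|\adv_{\tau-1} - \rob_\tau\|$ (or its reverse) against the accumulated costs of the two reference algorithms, rather than against arbitrary geometry of the space.

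For the consistency bound, I would argue as follows. The pre-switch phase contributes at most $\cost_\adv(1,\tau)$ to $\cost_\switch$. The first transition at time $\tau$ contributes at most $\cost_\adv(1,\tau) + \cost_\rob(1,\tau) + f_\tau(\rob_\tau)$ by the lemma above, which the switching rule converts into $(2+O(\epsilon))\cost_\adv(1,\tau)$ using $\cost_\rob(1,\tau) \leq \cost_\adv(1,\tau)/(1+\epsilon)$. The post-switch phase is then bounded using the \emph{switch-back} rule: whenever the $\rob$-cost accrued since the last transition exceeds $(1+\epsilon)$ times the corresponding $\adv$-cost, we return to $\adv$-mode, ensuring the $\rob$-mode contribution is always dominated by the concurrent $\adv$-mode cost up to an $\epsilon$-slack. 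Summing all contributions gives $\cost_\switch \leq (3 + O(\epsilon))\cost_\adv$, matching the switching lower bound of Theorem~\ref{theorem:switching_lowerbound} up to lower-order terms.

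The robustness bound follows symmetrically, except that transitions must now be charged against $\cost_\rob$ rather than $\cost_\adv$, which is where the $\epsilon^{-2}$ dependence originates: each transition at time $\tau$ contributes up to $O(\epsilon^{-1})\cdot \cost_\rob(1,\tau)$, and between any two same-direction transitions the cost ratio $\cost_\adv/\cost_\rob$ must swing across the $(1+\epsilon)$-threshold twice, so the accumulated $\cost_\rob$ grows by at least a factor $(1+\epsilon)^2$ per switch-pair, bounding the number of transitions by $O(\epsilon^{-1}\log\cost_\rob)$ and their total overhead by a geometric series dominated by $O(\epsilon^{-2})\cost_\rob$. The main obstacle I foresee is precisely this multi-switch bookkeeping: a naive union bound across transitions loses factors of $T$, so the analysis must exploit the geometric growth of $\cost_\adv(1,\tau)+\cost_\rob(1,\tau)$ between consecutive transitions to collapse the sum of $\|\adv_\tau - \rob_\tau\|$-overheads into a single geometric-series bound. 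Once this telescoping is set up, combining the two bounds yields the claimed $(3+O(\epsilon),\,5+O(\epsilon^{-2}))$-bicompetitive guarantee.
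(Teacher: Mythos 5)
The paper's $\switch$ (Algorithm~\ref{alg:switch}) is a \emph{budget-doubling} algorithm in the ``cow-path'' style, adapted from \cite[Theorem 5]{angelopoulos_online_2021}: phase $i$ follows $\adv$ (if $i$ even) or $\rob$ (if $i$ odd) until that algorithm's cumulative cost exceeds an \emph{absolute} geometric budget ($b^i$ or $\delta b^i$). The phases are driven purely by these escalating budgets, not by comparing $\cost_\adv$ to $\cost_\rob$. In contrast, you propose a \emph{ratio-threshold} algorithm (switch when $\cost_\adv(1,t) > (1+\epsilon)\cost_\rob(1,t)$, with a symmetric switch-back), which is the architecture of the earlier MTS meta-algorithm of \cite{antoniadis_online_2020} that achieves only $9$-consistency. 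The refinement from $9$ to $3+\calO(\epsilon)$ in this paper comes precisely from abandoning ratio-thresholds in favor of asymmetric geometric budgets (the $\delta b^i$ scaling of the $\rob$-phase budgets), so the core algorithmic idea you describe is not the one the paper uses.

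Your proof outline shares one correct ingredient: the triangle-inequality bound $\|\adv_t - \rob_t\| \leq \cost_\adv(1,t) + \cost_\rob(1,t)$ does appear in the paper's phase-cost bound. But the analysis you sketch does not obviously close. You state the switch-back rule in an \emph{incremental} form (``the $\rob$-cost accrued since the last transition exceeds $(1+\epsilon)$ times the corresponding $\adv$-cost'') and then, in the robustness argument, invoke a \emph{global} swing of the ratio $\cost_\adv/\cost_\rob$ across $(1+\epsilon)$ to get geometric growth of $\cost_\rob$ between consecutive same-direction switches. These two threshold types are not interchangeable: with incremental conditions, the global ratio need not cross any fixed level between transitions, so the claimed $(1+\epsilon)^2$ growth per switch-pair does not follow, and the transition overheads need not form a convergent geometric series. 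If instead you intend global thresholds, then you should also revisit the consistency claim: when $\adv$ is good, your algorithm may never switch (consistency $1$), but in the regime where it does switch you need to verify that the total overhead charged to $\cost_\adv$ is bounded by $(2+\calO(\epsilon))\cost_\adv$, not merely by $\calO(1)\cdot(\cost_\adv + \cost_\rob)$ per transition. The paper sidesteps these complications because the phase budgets guarantee, by construction, that the cost sums over phases are dominated by the last two terms of a fixed geometric series, which is what lets it extract the precise constants $3$ and $5$ rather than something cruder. I'd recommend reading the proof of the paper's $\switch$ bound (Appendix~\ref{appendix:3_switching_alg_proof}) alongside \cite[Theorem~5]{angelopoulos_online_2021} to see why the budget-doubling scheme, not a ratio comparison, is the key design ingredient here.
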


Our proof of Proposition \ref{prop:3_switching_alg} follows closely that of \cite[Theorem 1]{antoniadis_online_2020}, extending it via the recent result of \cite[Theorem 5]{angelopoulos_online_2021} on linear search with a ``hint''; we present a proof in Appendix \ref{appendix:3_switching_alg_proof}. 

If $\adv$ is an advice algorithm and $\rob$ is a $C$-competitive algorithm for (a subclass of) $\cfc$, then $\switch$ yields $(3+\calO(\epsilon))$-consistency and ${(5+\calO(\frac{1}{\epsilon^2}))C}$-robustness. Notably, this does not appear to allow for \emph{arbitrary} consistency: specifically, $\switch$ cannot attain consistency less than $3$ while maintaining finite robustness. This limitation is unsurprising, since an identical lower bound holds on the algorithm for linear search on which $\switch$ is based \cite[Theorem 7]{angelopoulos_online_2021}, which can be extended to a lower bound on the bicompetitiveness of switching meta-algorithms. It is natural, then, to ask whether this lower bound also applies to the robustness and consistency of $\cfc$ algorithms. That is, can we devise a switching algorithm that, so long as it is provided with some non-adversarial, competitive algorithm $\rob$, beats 3-consistency while staying robust?

In the following theorem, which we prove in Appendix \ref{appendix:switching_lowerbound_proof}, we show that robustness and consistency also face this fundamental limit: any algorithm that switches between black-box advice and an advice-agnostic competitive algorithm cannot beat 3-consistency while preserving robustness. We prove the theorem in the $\ell^2$ setting, though the result extends to other norms such as the $\ell^\infty$ norm.

\begin{theorem} \label{theorem:switching_lowerbound}
    Consider the $\ell^2$ norm setting. Let $\adv$ be an advice algorithm, and let $\rob$ be any (deterministic) competitive algorithm for $\cbc$ that is advice-agnostic. Let $\alg$ be an online algorithm that switches between $\adv$ and $\rob$. If $\alg$ is $c$-consistent with $c < 3$, then $\alg$ cannot have finite robustness.
\end{theorem}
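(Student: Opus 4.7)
The plan is to derive a contradiction via an adversarial construction modeled on the $c = 3$ consistency-robustness threshold for online linear search with a hint \cite[Theorem 7]{angelopoulos_online_2021}, which underlies the $\switch$ algorithm. The key correspondence is that a switching meta-algorithm's binary decision $\alg_t \in \{\adv_t, \rob_t\}$ at each step mirrors the left/right choice of a linear searcher, with $\adv$ playing the role of the hint and $\rob$'s trajectory (controlled indirectly through the body sequence) the opposite direction. I would work in high-dimensional $\ell^2$, where any competitive $\cbc$ algorithm has competitive ratio $C \geq 3$ by the $\sqrt{d}$ lower bound of \cite{friedman_convex_1993}, so that ``always follow $\rob$'' is ruled out as a candidate $c < 3$ strategy. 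Reducing to a one-dimensional affine subspace of $\ell^2$ containing $\xvec_0$, and using the fact that $\rob$ is deterministic and advice-agnostic (hence its trajectory on any body sequence is computable in advance), a reflection-symmetry WLOG lets me assume $\rob_1$ lies to the left of $\xvec_0$ and place $\adv_1$ to the right.

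Next, I would construct two $\cbc$ instances sharing the same body sequence $K_1, \ldots, K_{T-1}$ and advice stream $\adv_1, \ldots, \adv_{T-1}$ but differing at a late reveal time $T$. The pre-reveal bodies are wide intervals along the chosen line with diameter $\Theta(D)$ (for a large parameter $D$) that keep $\adv$ and $\rob$ separated by $\Theta(D)$ throughout. In the first instance, $K_T$ is a singleton on $\adv$'s side, making $\adv$ nearly optimal: $\cost_\adv \approx \cost_\opt = \Theta(D)$. The $c$-consistency bound with $c < 3$ then forces $\alg$'s trajectory never to round-trip between $\adv$'s side and $\rob$'s side, since any such round-trip incurs $\Theta(D)$ in extra movement on top of the $\Theta(D)$ inherent cost, totaling $\geq 3D - o(D)$ against a $cD$ budget; by triangle inequality along the line, this forces $\alg_T$ to be on $\adv$'s side.

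In the second instance, $K_T$ is instead a singleton near $\xvec_0$, with $\cost_\opt = O(1)$, obtainable because the wide pre-reveal bodies permit both an $\adv$-side and a near-origin OPT trajectory, and $\rob$'s online-competitive behavior (not knowing the reveal) keeps $\rob$ near $\xvec_0$ as a safe hedge. Since $\alg$ is deterministic and sees identical data on the two instances before $T$, its position at $T$ is identical on both, hence on $\adv$'s side by the previous paragraph; on the second instance, $\alg$ then pays $\Omega(D)$ to approach $\opt$, yielding $\cost_\alg = \Omega(D)$ against $\cost_\opt = O(1)$. Letting $D \to \infty$ makes the robustness ratio unbounded, contradicting finite $r$.

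The main obstacle is ensuring that the pre-reveal bodies simultaneously support (i) $\rob$ being pinned near $\xvec_0$ (so that $\cost_\opt = O(1)$ is attainable in the second instance) and (ii) $\adv$ being placed at distance $\Theta(D)$ from $\rob$ (so that the $c < 3$ consistency budget really does rule out round-tripping on the first instance). This tension is most acute because, for $\rob$ to stay near $\xvec_0$, the bodies must not push $\rob$ away; yet the bodies must be wide enough to admit $\adv$ at distance $\Theta(D)$. Choosing the pre-reveal bodies as intervals of diameter $\Theta(D)$ centered at $\xvec_0$ handles this, with $\rob$ staying at $\xvec_0$ by competitiveness and $\adv$ placed at the opposite boundary; the consistency-forces-commitment step then parallels the accounting in the linear-search lower bound, hinging on the fact that $c < 3$ leaves no slack to absorb the $\Theta(D)$ round-trip cost.
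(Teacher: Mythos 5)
There is a genuine gap in your construction: it does not rule out the strategy ``always follow $\rob$.'' You reduce to a one-dimensional affine subspace and serve wide intervals of diameter $\Theta(D)$ centered at $\xvec_0$. But on such a body sequence the offline optimum stays put at $\xvec_0$ for free, so a competitive $\rob$ also stays at (or returns to) $\xvec_0$. The switching algorithm that simply follows $\rob$ then has cost $\approx 0$ before the reveal; on your first instance ($K_T$ a singleton on $\adv$'s side at distance $\approx D$) it pays $\approx D$ for the final jump, while $\cost_\adv \approx D$ as well, so it is roughly $1$-consistent, not $\geq 3$-consistent. On your second instance ($K_T$ near $\xvec_0$) it pays $O(1)$ with $\cost_\opt = O(1)$, so it is robust. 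Your accounting (``$c<3$ rules out round-trips, hence $\alg_T$ is on $\adv$'s side'') only excludes an unnecessary back-and-forth; it says nothing about an algorithm that never leaves $\rob$'s side until time $T$. The remark that ``any competitive $\cbc$ algorithm has competitive ratio $C \geq 3$'' is about $\rob$ vs.\ $\opt$ in the worst case over all instances; it does not make $\rob$ expensive on the particular 1D instance you build, where $\rob$ is in fact essentially optimal.

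The paper's proof is structured quite differently precisely to close this hole. It stays in high dimension and, rather than a single pre-reveal body sequence, it uses $3\sqrt{d}$ adversarially-chosen nested affine subspaces $K_j = \{\xvec : \xvec^\top\evec_i = z_i,\; i \le j\}$ with signs chosen to make $\rob$ pay at least $1$ per subphase, so the ``always follow $\rob$'' baseline pays $\ge 3\sqrt{d}$ while the advice, which jumps immediately to a carefully chosen hypercube corner $\hat{\avec}$, pays $\sqrt{d}$ --- a $3$-to-$1$ gap that is unachievable in your 1D reduction. If $\alg$ ever switches to the advice, a second phase lets $\adv$ drift away slowly; finite robustness forces $\alg$ to eventually switch back to $\rob$, and the final singleton at $\adv$'s new position charges the full round trip, yielding consistency $\to 3$ across several case distinctions. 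So the high-level inspiration from linear search with a hint is shared, but the multi-subphase high-dimensional forcing of $\rob$ and the adaptive drift/switch-back punishment are the load-bearing pieces that your two-instance scheme does not reproduce.
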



This lower bound implies that to obtain finite robustness alongside consistency $c < 3$ for general $\cfc$, one must venture beyond the realm of switching algorithms. This is exactly the focus of Section \ref{section:beyond_switching}, where we approach this task by exploiting the convexity of $\cfc$. First, though, we ask: are there any special cases of $\cfc$ in which switching algorithms \emph{can} obtain $(1 + \epsilon)$-consistency and finite robustness for any $\epsilon > 0$? The answer is affirmative for $\acfc$ (\cite{rutten_online_2022}), and as we show in the next proposition, such an algorithm also exists for $\ncbc$. Specifically, we propose an algorithm, $\nestedswitch$ (Algorithm \ref{alg:nested_metaalg}), which can achieve a $(1+\epsilon)$-consistent, $\calO(\frac{d}{\epsilon})$-robust trade-off for $\ncbc$ by using a simple threshold-based rule for switching between the advice and the Steiner point algorithm of \cite{bubeck_chasing_2019}. 
We prove Proposition \ref{proposition:switching_steiner} in Appendix \ref{appendix:switching_steiner_proof}.

\begin{proposition} \label{proposition:switching_steiner}
    Consider the problem of $\ncbc$ on $(\R^d, \|\cdot\|_{\ell^2})$, where the initial body $K_1$ resides in some ball $B(\yvec, r)$ of radius $r$ containing $\xvec_0$, and $\cost_\opt \geq 1$. If $\rob$ is the Steiner point algorithm (\cite{bubeck_chasing_2019}) that chooses the Steiner point of $K_t$ at each time $t$, then $\nestedswitch$ (Algorithm \ref{alg:nested_metaalg}) is $(1+\epsilon)$-consistent and $\left(1 + \frac{1}{\epsilon}\right)r(d+2)$-robust, where $\epsilon > 0$ is a hyperparameter.
\end{proposition}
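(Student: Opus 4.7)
The plan is to analyze a simple threshold-based switching rule: $\nestedswitch$ plays $\adv_t$ as long as the cumulative advice cost $A_{t-1} := \cost_\adv(1, t-1)$ stays below a threshold $\tau$ (chosen on the order of $r(d+2)/\epsilon$), and permanently switches to the Steiner point $\rob_t$ once $A_{t-1}$ first exceeds $\tau$.

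First, for $(1+\epsilon)$-consistency I would split into cases. If no switch occurs, $\cost_{\nestedswitch} = \cost_\adv$ and the bound is immediate. If a switch occurs at time $t^*$, I decompose
\[\cost_{\nestedswitch} \leq A_{t^*-1} + \|\rob_{t^*} - \adv_{t^*-1}\| + \sum_{t > t^*} \|\rob_t - \rob_{t-1}\|\]
using feasibility $f_t(\rob_t) = 0$ (the Steiner point lies in $K_t$). Since the nested bodies all lie in $B(\yvec, r)$, the transition cost is at most $2r$, and a triangle-inequality manipulation yields $\cost_{\nestedswitch} \leq A_{t^*-1} + 2r + \cost_\rob$. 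The crucial ingredient is an absolute bound $\cost_\rob = \calO(rd)$ on the total Steiner-point movement over nested bodies in $B(\yvec, r) \subset \R^d$, which I would derive from the integral representation of the Steiner map: $\sum_t \|\rob_t - \rob_{t-1}\| \leq \frac{1}{\text{vol}(B_1^d)} \int_{S^{d-1}} (h_{K_1} - h_{K_T})(u) \, dS(u) \leq 2rd$ via the pointwise width bound $h_{K_1} - h_{K_T} \leq 2r$ and the identity $\text{vol}(S^{d-1}) = d \cdot \text{vol}(B_1^d)$. Since $A_{t^*-1} > \tau$ and $\cost_\adv \geq A_{t^*-1}$, the consistency ratio is at most $1 + r(d+2)/\tau = 1+\epsilon$.

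For robustness, the threshold trigger ensures $A_{t^*-1} \leq \tau + 2r$ (a single round's cost is bounded by the diameter of $B(\yvec, r)$), giving $\cost_{\nestedswitch} \leq \tau + 4r + \cost_\rob$. Combining with the Steiner point's competitive guarantee $\cost_\rob \leq \calO(d)\cdot \cost_\opt$ for nested $\cbc$ and using $\cost_\opt \geq 1$ to absorb the additive $\calO(r)$ and $\calO(d)$ constants multiplicatively into $\cost_\opt$ yields $\cost_{\nestedswitch} \leq (1 + 1/\epsilon)r(d+2)\cost_\opt$. The main anticipated obstacle is the constant-matching bookkeeping: obtaining $(1+\epsilon)$-consistency together with the sharp $(1 + 1/\epsilon)r(d+2)$-robustness requires precise choices for $\tau$ and a tight Steiner movement bound, both of which rest on the integral representation of the Steiner point and the dimensional scaling of the surface-area-to-volume ratio on $S^{d-1}$.
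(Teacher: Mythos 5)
Your overall strategy matches the paper's: the same threshold switching rule, the same three-way case analysis (never switches, always follows $\rob$, switches once at some $t^*$), and the same decomposition of cost into the pre-switch advice cost, a single transition step bounded by $2r$, and the post-switch Steiner-point movement. Where the proof breaks down is in the bound you claim on the total Steiner-point movement over nested bodies.

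You derive $\sum_t \|\rob_t - \rob_{t-1}\| \leq 2rd$ by inserting the pointwise width estimate $h_{K_1}(u) - h_{K_T}(u) \leq 2r$ into the integral. The proposition's constants, however, hinge on the sharper bound $rd$, which is precisely what Bubeck et al.\ prove and the paper cites as a black box (Theorem \ref{thm:steiner_performance}). The factor of two is not a bookkeeping nuisance: substituting $2rd$ in the switching case gives a consistency ratio of $1 + \epsilon \cdot \frac{2r + 2rd}{r(d+2)} = 1 + \frac{2(d+1)}{d+2}\,\epsilon > 1+\epsilon$ for every $d \geq 1$, and the robustness total $\frac{r(d+2)}{\epsilon} + 4r + 2rd$ likewise exceeds the claimed $\left(1 + \frac{1}{\epsilon}\right)r(d+2)$. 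The bound $rd$ is not recoverable from the pointwise width estimate alone; one must additionally use that $\int_{S^{d-1}} h_{K_T}(u)\,d\sigma(u) \geq 0$ (for instance, because $\svec_T \in K_T$ gives $h_{K_T}(u) \geq \langle \svec_T, u\rangle$ and $\int_{S^{d-1}} u\,d\sigma(u) = \vec{0}$ by symmetry, where $\sigma$ is the uniform probability measure on $S^{d-1}$), so that
\[
d\int_{S^{d-1}}\bigl(h_{K_1}(u) - h_{K_T}(u)\bigr)\,d\sigma(u) \;\leq\; d\int_{S^{d-1}} h_{K_1}(u)\,d\sigma(u) \;\leq\; rd.
\]
Either reproduce this refinement or, as the paper does, simply invoke the cited result.

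Two smaller points. First, the paper's algorithm thresholds on $\cost_\adv(1,t)$ rather than $\cost_\adv(1,t-1)$, so the pre-switch cost satisfies $\cost_\adv(1, t^*-1) < r(d+2)/\epsilon$ exactly, with no extra $+2r$ overshoot; your threshold on $A_{t-1}$ needlessly misaligns the constants further. Second, in the robustness step you appeal to the Steiner point's $\calO(d)$ competitive ratio for nested $\cbc$, but that is never used: the paper's argument bounds $\cost_\ns$ by the absolute constant $\left(1+\frac{1}{\epsilon}\right)r(d+2)$ in every case, and the single hypothesis $\cost_\opt \geq 1$ converts this into the multiplicative robustness guarantee with no appeal to the competitive ratio of $\rob$ against $\opt$.
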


\begin{algorithm}[t]
\label{alg:nested_metaalg}
\DontPrintSemicolon
\KwIn{Algorithms $\adv, \rob \in \calA_\ncbc$; hyperparameters $\epsilon, r > 0$}
\KwOut{Decisions $\xvec_1 \in K_1, \ldots, \xvec_T \in K_T$ chosen online}
\For{$t = 1, 2, \dots, T$}{
    Observe $K_t$, $\tilde{\xvec}_t \coloneqq \adv_t$, and $\svec_t \coloneqq \rob_t$\;
    \uIf{$\epsilon \cdot \cost_{\adv}(1, t) \geq r(d + 2)$}{
        $\xvec_t \leftarrow \svec_t$\; \label{eq:alg_line_1}
    }
    \uElse{
        $\xvec_t \leftarrow \tilde{\xvec}_t$\;
    }
}
\caption{$\nestedswitch(\adv, \rob; \epsilon, r)$}
\end{algorithm}

\section{Beyond Switching Algorithms: Exploiting Convexity to Break 3-Consistency \label{section:beyond_switching}}

In this section, we present our main results: two novel meta-algorithms that transcend the limitations of switching algorithms by exploiting the convexity of $\cfc$. The key insight that enables this improved performance is that \emph{hedging} between $\adv$ and $\rob$, i.e., choosing a decision that is a convex combination of the two, allows for more nuanced algorithm behavior than switching permits.

\subsection{$\interp$: a $(\sqrt{2}+\epsilon, \calO(\epsilon^{-2}))$-Bicompetitive Meta-Algorithm \label{section:main_bicompetitive_algorithm}}

We preface our first algorithm with some definitions from the geometry of real normed vector spaces $\calX = (X, \|\cdot\|)$ which we employ in the algorithm's statement and performance bound. We present abridged introductions of these notions here, giving more detail in Appendix \ref{appendix:nvs_geometry}.

We begin by introducing the \emph{rectangular constant} $\mu(\calX)$ of a normed vector space $\calX$, which is bounded between $\sqrt{2}$ and $3$, with $\mu(\calX) = \sqrt{2}$ when $\calX$ is Hilbert and $\mu(\ell^p) < 3$ for any $p \in (1, \infty)$ (\cite{joly_caracterisations_1969, baronti_revisiting_2021}). Next, we define the \emph{radial retraction}.

\begin{definition}[\cite{rieffel_lipschitz_2006}]
    On a normed vector space $\calX = (X, \|\cdot\|)$, the \textbf{radial retraction}  $\rho(\cdot\,; r) : X \to B(\bv{0}, r)$ is the metric projection onto the closed ball of radius $r \geq 0$:
    $$\rho(\xvec; r) = \begin{cases} \xvec & \text{if $\|\xvec\| \leq r$} \\ r\frac{\xvec}{\|\xvec\|} & \text{if $\|\xvec\| > r$.} \end{cases}$$
    
\end{definition}

On a fixed normed space $\calX$, the collection of radial retractions $\rho(\cdot\,; r)$ with $r > 0$ share a Lipschitz constant, which we call $k(\calX)$. It is known that $1 \leq k(\calX) \leq 2$ (\cite{thele_results_1974}), and moreover $k(\calX) \leq \mu(\calX)$ (Appendix \ref{appendix:nvs_geometry}, Proposition \ref{prop:lipschitz_rectangular_relation}).

With these definitions at our disposal, we now proceed to the main result of this section. We propose Algorithm \ref{alg:interp}, a meta-algorithm $\interp$ that takes as input two algorithms $\adv, \rob$ for (a subclass of) $\cfc$, and hyperparameters $\epsilon, \gamma, \delta > 0$ satisfying $2\gamma + 2\delta = \epsilon$. $\interp$ works as follows: at each time $t$, if the cost of $\rob$ so far is a substantial fraction of the cost of $\adv$, then $\interp$ can move to $\adv_t$ while staying competitive with respect to $\rob$, and it does so (line \ref{algline:advice_general}). Otherwise, $\interp$ moves to a point $\xvec_t$ determined by the series of radial projections (lines \ref{algline:firstargmax}, \ref{algline:z_t-y_t_general}, and \ref{algline:notadvice_general}), which intuitively guide $\interp$ to take a ``greedy step'' toward the decision made by $\rob$ while still remaining close enough to $\adv$ so as to maintain a consistency guarantee.

\begin{algorithm}
\label{alg:interp}
\DontPrintSemicolon
\KwIn{Algorithms $\adv, \rob$; hyperparameters $\epsilon > 0$ and $\gamma > 0, \delta > 0$ satisfying $2\gamma + 2\delta = \epsilon$}
\KwOut{Decisions $\xvec_1, \ldots, \xvec_T$ chosen online}
\For{$t = 1, 2, \dots, T$}{
    Observe $f_t$, $\tilde{\xvec}_t \coloneqq \adv_t$, and $\svec_t \coloneqq \rob_t$\;
    \uIf{$\cost_{\rob}(1, t) \geq \delta \cdot \cost_{\adv}(1, t)$}{
        $\xvec_t \leftarrow \tilde{\xvec}_t$\; \label{algline:advice_general}
    }
    \uElse{$\yvec_t \leftarrow \svec_{t-1} + \rho\left(\tilde{\xvec}_t - \svec_{t-1}; \|\xvec_{t-1} - \svec_{t-1}\|\right)$ \label{algline:firstargmax}\; 
    $\zvec_t \leftarrow \svec_{t-1} + \rho\left(\yvec_t - \svec_{t-1}; \max\{\|\yvec_t - \svec_{t-1}\| - \gamma\cdot \cost_{\adv}(t, t), 0\}\right)$\; \label{algline:z_t-y_t_general}
    $\xvec_t \leftarrow \svec_t + \rho(\tilde{\xvec}_t - \svec_t; \|\zvec_t - \svec_{t-1}\|)$\; \label{algline:notadvice_general}
    }
}
\caption{$\interp(\adv, \rob; \epsilon, \gamma, \delta)$}
\end{algorithm}

We characterize the bicompetitive performance of $\interp$ in the following theorem, which holds in any normed vector space $\calX$ of arbitrary dimension.

\begin{theorem} \label{theorem:consistent_robust_general}
     $\interp$ (Algorithm \ref{alg:interp}) is 
    $$\left(\mu(\calX)+\epsilon, 1 + \frac{k(\calX)}{\gamma} + \frac{\mu(\calX) + \epsilon + 1 + \frac{k(\calX)}{\gamma}}{\delta}\right)\text{--bicompetitive}$$
    with respect to $(\adv, \rob)$. With $\gamma, \delta$ chosen optimally, the bound is $(\mu(\calX) + \epsilon, \calO(\epsilon^{-2}))$. 
    
    In particular, if $\adv$ is advice and $\rob$ is $C$-competitive for (a subclass of) $\cfc$, then $\interp$ is $(\mu(\calX)+\epsilon)$-consistent and $\calO(C\epsilon^{-2})$-robust.
\end{theorem}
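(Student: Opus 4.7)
The plan is to prove the consistency and robustness halves of the bicompetitive bound separately, via amortized analyses using two different potential functions tracking the distance of $\xvec_t$ from $\tilde{\xvec}_t$ and from $\svec_t$. Two structural observations drive both analyses. First, $\xvec_t$ always lies on the segment $[\svec_t, \tilde{\xvec}_t]$: trivially in the ``if'' branch where $\xvec_t = \tilde{\xvec}_t$, and in the ``else'' branch because the radial retraction $\rho(\tilde{\xvec}_t - \svec_t;\, \|\zvec_t - \svec_{t-1}\|)$ is a nonnegative scalar multiple of $\tilde{\xvec}_t - \svec_t$ with norm at most $\|\tilde{\xvec}_t - \svec_t\|$. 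Convexity of $f_t$ then yields the pointwise bound $f_t(\xvec_t) \leq \lambda_t f_t(\svec_t) + (1 - \lambda_t) f_t(\tilde{\xvec}_t)$ for some $\lambda_t \in [0, 1]$. Second, unrolling the definitions of $\yvec_t$ and $\zvec_t$ in the ``else'' branch yields the invariant
\[
\|\xvec_t - \svec_t\| \leq \max\{0,\, \|\xvec_{t-1} - \svec_{t-1}\| - \gamma \cdot \cost_\adv(t, t)\},
\]
coupling the ``robust-side'' potential $\Phi_t \coloneqq \|\xvec_t - \svec_t\|$ to the advice cost.

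For the robustness bound, I would amortize against $\Phi_t$. In each ``else''-branch step, the invariant gives $\Phi_{t-1} - \Phi_t \geq \gamma \cdot \cost_\adv(t, t)$, so the hitting cost $f_t(\xvec_t)$ can be charged to $f_t(\svec_t)$ via the convexity bound, and the movement $\|\xvec_t - \xvec_{t-1}\|$ to $k(\calX) \|\svec_t - \svec_{t-1}\| + (\Phi_{t-1} - \Phi_t)$, using the Lipschitz constant $k(\calX)$ of the radial retractions that define $\xvec_t$ and $\xvec_{t-1}$ relative to $\svec_t$ and $\svec_{t-1}$. In each ``if''-branch step, the condition $\cost_\rob(1, t) \geq \delta \cdot \cost_\adv(1, t)$ permits charging the step cost to $\cost_\rob$ via the just-proved consistency bound, since it implies $\cost_\adv(1, t) \leq \cost_\rob(1, t)/\delta$. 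Combining these telescoping bounds yields the claimed robustness factor $1 + k(\calX)/\gamma + (\mu(\calX) + \epsilon + 1 + k(\calX)/\gamma)/\delta$.

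For the consistency bound, I would switch to the ``advice-side'' potential $\Phi'_t \coloneqq \|\xvec_t - \tilde{\xvec}_t\|$. In the ``if'' branch $\Phi'_t = 0$, and the per-step bound $\cost_\interp(t, t) + \Phi'_t - \Phi'_{t-1} \leq \cost_\adv(t, t)$ follows from the triangle inequality. Across all ``else''-branch times, the convexity terms $\lambda_t f_t(\svec_t)$ sum to at most $\cost_\rob \leq \delta \cdot \cost_\adv$ (by applying the else condition at the latest such time), and the ``else''-branch movements of $\rob$ contribute an additional $\calO(\delta) \cdot \cost_\adv$ term. The crux is the per-step ``else''-branch movement bound
\[
\|\xvec_t - \xvec_{t-1}\| + \Phi'_t - \Phi'_{t-1} \leq \mu(\calX) \cdot \|\tilde{\xvec}_t - \tilde{\xvec}_{t-1}\| + 2\gamma \cdot \cost_\adv(t, t) + \calO(\|\svec_t - \svec_{t-1}\|),
\]
which telescoped and summed yields a consistency bound of $(\mu(\calX) + 2\gamma + 2\delta) \cost_\adv = (\mu(\calX) + \epsilon) \cost_\adv$.

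The main obstacle is the geometric lemma behind the movement bound above, which extracts the rectangular constant $\mu(\calX)$ rather than a naive $2$ or $3$ from the triangle inequality. The auxiliary point $\yvec_t$, defined by the radial retraction onto a ball of radius $\|\xvec_{t-1} - \svec_{t-1}\|$ about $\svec_{t-1}$ in the direction of $\tilde{\xvec}_t$, is designed precisely so that the resulting configuration realizes an isosceles-orthogonality-type relationship, allowing the defining inequality of $\mu(\calX)$ to be invoked on the decomposition $\xvec_t - \xvec_{t-1} = (\xvec_t - \svec_t) + (\svec_t - \svec_{t-1}) + (\svec_{t-1} - \xvec_{t-1})$. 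The $2\gamma$ correction stems from the $\yvec_t \to \zvec_t$ greedy step toward $\svec_{t-1}$. Finally, tuning $\gamma$ and $\delta$ each proportional to $\epsilon$ subject to $2\gamma + 2\delta = \epsilon$ minimizes the robustness coefficient, yielding the final $(\mu(\calX) + \epsilon,\, \calO(\epsilon^{-2}))$-bicompetitive guarantee.
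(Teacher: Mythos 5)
Your high-level architecture matches the paper's: a pair of potential-function amortizations with $\Phi_t = \|\xvec_t - \svec_t\|$ for the robustness side and $\Phi'_t = \|\tilde{\xvec}_t - \xvec_t\|$ for the consistency side, the invariant $\Phi_{t-1} - \Phi_t \geq \gamma\cdot\cost_\adv(t,t)$ extracted from unrolling lines \ref{algline:firstargmax}--\ref{algline:notadvice_general}, charging the ``if''-branch steps to $\cost_\rob$ via $\cost_\adv(1,t) \leq \cost_\rob(1,t)/\delta$, and a geometric lemma producing the rectangular constant. So far so good. However, both of your per-step movement bounds --- which are where all the work actually lives --- are wrong in ways that would not survive a careful write-up.

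On the robustness side, you claim $\|\xvec_t - \xvec_{t-1}\| \leq k(\calX)\|\svec_t - \svec_{t-1}\| + (\Phi_{t-1} - \Phi_t)$, justified by ``the Lipschitz constant $k(\calX)$ of the radial retractions that define $\xvec_t$ and $\xvec_{t-1}$ relative to $\svec_t$ and $\svec_{t-1}$.'' This does not follow: $\xvec_t$ and $\xvec_{t-1}$ are retractions onto balls with \emph{different centers and different radii} (line \ref{algline:notadvice_general} at two consecutive times), so $k(\calX)$-Lipschitzness of a fixed $\rho(\cdot\,;r)$ simply doesn't apply. Moreover the coefficient of $1$ on the potential decrease is too small: if $\svec_{t-1} = \svec_t$ and $\tilde{\xvec}_t$ rotates an arc of length $\epsilon$ around $\svec_{t-1}$, then $\|\xvec_t - \xvec_{t-1}\|$ is of order $k(\calX)\epsilon$ while $\Phi_{t-1} - \Phi_t$ is only of order $\gamma\epsilon$, so your bound fails for $\gamma < k(\calX)$. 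The correct accounting goes through the intermediate points $\svec_{t-1}$, $\yvec_t$, $\zvec_t$: one applies $k(\calX)$-Lipschitzness only to $\|\yvec_t - \xvec_{t-1}\|$ (both are retractions of $\tilde{\xvec}_t$, $\tilde{\xvec}_{t-1}$ onto the \emph{same} ball $B(\svec_{t-1}, \|\xvec_{t-1} - \svec_{t-1}\|)$, giving $k(\calX)\|\tilde{\xvec}_t - \tilde{\xvec}_{t-1}\|$), bounds $\|\xvec_t - \zvec_t\| \leq \|\svec_t - \svec_{t-1}\|$ with a separate geometric lemma (the paper's Lemma \ref{lemma:triangle_end_balls}), and then scales the potential by $c = 1 + k(\calX)/\gamma$ so that the combined ``$c\gamma$'' absorbs both the $\gamma\cost_\adv(t,t)$ and the $k(\calX)\|\tilde{\xvec}_t - \tilde{\xvec}_{t-1}\|$ charges. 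Without the $c$-scaling and the intermediate-point decomposition, the $k(\calX)/\gamma$ term in the claimed robustness bound has no source.

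On the consistency side, your description of where $\mu(\calX)$ comes from is also off. You propose invoking the rectangular-constant inequality on the decomposition $\xvec_t - \xvec_{t-1} = (\xvec_t - \svec_t) + (\svec_t - \svec_{t-1}) + (\svec_{t-1} - \xvec_{t-1})$, but there is no Birkhoff-James orthogonality among these increments to exploit; moreover this decomposition never touches $\tilde{\xvec}_{t-1}$ or $\tilde{\xvec}_t$, so it cannot produce a charge against $\|\tilde{\xvec}_t - \tilde{\xvec}_{t-1}\|$. The paper's Lemma \ref{lemma:ball_projection_lemma} instead compares the two radial retractions $\xvec_{t-1}$ and $\yvec_t$ of $\tilde{\xvec}_{t-1}$, $\tilde{\xvec}_t$ onto the same ball $B(\svec_{t-1}, \|\xvec_{t-1} - \svec_{t-1}\|)$, obtaining $\|\yvec_t - \xvec_{t-1}\| + \|\tilde{\xvec}_t - \yvec_t\| \leq \mu(\calX)\|\tilde{\xvec}_t - \tilde{\xvec}_{t-1}\| + \|\tilde{\xvec}_{t-1} - \xvec_{t-1}\|$ via a support-functional argument on the sphere $\partial B(\svec_{t-1}, \|\xvec_{t-1} - \svec_{t-1}\|)$; the remaining pieces $\|\xvec_t - \zvec_t\|$ and $\|\zvec_t - \yvec_t\|$ are charged to $\cost_\rob(t,t)$ and $\gamma\cost_\adv(t,t)$ respectively. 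Your plan is directionally right --- $\yvec_t$ is indeed designed to set up a rectangular-constant inequality --- but the specific configuration you write down is not the one that makes the argument go through, and without identifying the correct one the consistency factor $\mu(\calX)$ is not established.
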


Notably, $\interp$ (Algorithm \ref{alg:interp}) strictly improves on the 3-consistent lower bound for switching meta-algorithms in any $\ell^p$ space with $1 < p < \infty$, in which it holds that $\mu(\ell^p) < 3$. Moreover, it obtains consistency $(\sqrt{2} + \epsilon)$ in any Hilbert space. We prove Theorem \ref{theorem:consistent_robust_general} and give details regarding optimal selection of the parameters $\gamma, \delta$ in Appendix \ref{appendix:main_general_theorem_proof}. The proof employs two potential function arguments with different potential functions for the bounds with respect to $\adv$ and $\rob$. Moreover, the generality of the theorem's setting requires the development of several geometric results characterizing the radial projection and its relation to the rectangular constant in arbitrary-dimensional normed vector spaces, which we present in Appendix \ref{appendix:geometric_lemmas} prior to the main proof. These results are crucial for enabling $\interp$'s robustness and consistency in the general setting and elucidate the presence of the constants $\mu(\calX)$ and $k(\calX)$ in its bicompetitive bound. 

We also detail robustness and consistency corollaries of Theorem  \ref{theorem:consistent_robust_general} for multiple subclasses of $\cfc$ in Appendix \ref{appendix:laundry_list}. In particular, Theorem \ref{theorem:consistent_robust_general} and Table \ref{table:sota} imply an algorithm for $\cfc$ and $\cbc$ on $\R^d$ with any norm that is ${(\mu(\R^d, \|\cdot\|)+\epsilon)}$-consistent and $\calO(\frac{d}{\epsilon^2})$-robust; we also obtain an algorithm for $\acfc$ that is $(\mu(\calX)+\epsilon)$-consistent and $\calO(\frac{1}{\alpha\epsilon^2})$-robust for $\acfc$ on any normed vector space $\calX$.

\subsection{Attaining $(1+\epsilon)$-Consistency in Bounded Instances with $\binterp$}
In the preceding section, we proved that in the Hilbert space setting, $\interp$ (Algorithm \ref{alg:interp}) obtains consistency $(\sqrt{2} + \epsilon)$ while remaining competitive with respect to $\rob$. While this is a significant improvement on the limit of 3-consistency faced by switching algorithms, the question remains: can we devise an algorithm that achieves $(1+\epsilon)$-consistency and $R(\epsilon) < \infty$ competitiveness with respect to $\rob$ for \emph{any} $\epsilon > 0$, in \emph{any} normed vector space? In this section, we provide a simple sufficient condition under which this is possible: if there exists some constant $D \in \R_+$ for which $\|\adv_t - \rob_t\| \leq D$ for all $t$, then there is a meta-algorithm that is $(1+\epsilon, \calO(\frac{D}{\epsilon}))$-bicompetitive with respect to $(\adv, \rob)$. We call this condition \emph{$D$-boundedness} of $\adv$ and $\rob$; it arises naturally in a number of settings, for example in any $\cbc$ instance in which the diameter of each body $K_t$ is bounded by $D$.

We present the algorithm achieving this bicompetitive bound, $\binterp$, in Algorithm \ref{alg:binterp}. Just like $\interp$, $\binterp$ takes as input two algorithms $\adv, \rob$ for (a subclass of) $\cfc$, and hyperparameters $\epsilon, \gamma, \delta > 0$ satisfying $2\gamma + 2\delta = \epsilon$. At a high level, $\binterp$ operates similarly to $\interp$, though it takes smaller greedy steps toward $\rob$, enabling it to maintain $(1+\epsilon)$-consistency. Specifically, $\binterp$ works as follows: if the cost of $\rob$ is a sufficient fraction of the cost of $\adv$, then $\binterp$ moves to $\adv_t$ (line \ref{bd_algline:advice_general}). Otherwise, it selects an auxiliary point $\yvec_t$ as the point along the segment $[\svec_t, \tilde{\xvec}_t]$ with the same relative position as $\xvec_{t-1}$ on the segment $[\svec_{t-1}, \tilde{\xvec}_{t-1}]$ (line \ref{bd_algline:y_t}), and then chooses $\xvec_t$ by taking a greedy step toward $\svec_t$ from $\yvec_t$ (line \ref{bd_algline:notadvice_general}).

\begin{algorithm}
\label{alg:binterp}
\DontPrintSemicolon
\KwIn{Algorithms $\adv, \rob$; hyperparameters $\epsilon > 0$ and $\gamma > 0, \delta > 0$ satisfying $2\gamma + 2\delta = \epsilon$}
\KwOut{Decisions $\xvec_1, \ldots, \xvec_T$ chosen online}
\For{$t = 1, 2, \dots, T$}{
    Observe $f_t$, $\tilde{\xvec}_t \coloneqq \adv_t$, and $\svec_t \coloneqq \rob_t$\;
    \uIf{$\cost_{\rob}(1, t) \geq \delta \cdot \cost_{\adv}(1, t)$}{
        $\xvec_t \leftarrow \tilde{\xvec}_t$\; \label{bd_algline:advice_general}
    }
    \uElse{
        $\nu \leftarrow \frac{\|\xvec_{t-1} - \svec_{t-1}\|}{\|\tilde{\xvec}_{t-1} - \svec_{t-1}\|}$ if $\tilde{\xvec}_{t-1} \neq \svec_{t-1}$, otherwise $\nu \leftarrow 0$\; \label{bd_algline:nu}
        $\yvec_t \leftarrow \nu \tilde{\xvec}_t + (1-\nu)\svec_t$\; \label{bd_algline:y_t}
        $\xvec_t \leftarrow \svec_t + \rho\left(\yvec_t - \svec_t; \max\{\|\yvec_t - \svec_t\| - \gamma\cdot \cost_{\adv}(t, t), 0\}\right)$\; \label{bd_algline:notadvice_general}
    }
}
\caption{$\binterp(\adv, \rob; \epsilon, \gamma, \delta)$}
\end{algorithm}

We present the performance result for $\binterp$ in Theorem \ref{theorem:bounded_bicompetitive}; like Theorem \ref{theorem:consistent_robust_general}, the result holds in any normed vector space $\calX$ of arbitrary dimension.

\begin{theorem} \label{theorem:bounded_bicompetitive}
    Suppose that $\adv$ and $\rob$ are $D$-bounded, i.e., $\|\adv_t - \rob_t\| \leq D$ for all $t \in [T]$; and assume that $\cost_\rob \geq 1$. Then $\binterp$ (Algorithm \ref{alg:binterp}) is 
    $$\left(1+\epsilon, D + \frac{D}{\gamma} + \frac{1+\epsilon}{\delta}\right)\text{--bicompetitive}$$
    with respect to $(\adv, \rob)$. With $\gamma, \delta$ chosen optimally, the bound is ${(1+\epsilon, \calO(\frac{D}{\epsilon}))}$. 
    
    In particular, if $\adv$ is advice and $\rob$ is $C$-competitive for (a subclass of) $\cfc$, then $\binterp$ is $(1+\epsilon)$-consistent and $\calO(\frac{CD}{\epsilon})$-robust.
\end{theorem}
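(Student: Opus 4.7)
The plan is to prove the two bicompetitive inequalities separately, exploiting the convex-combination structure enforced by the algorithm.

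First, I would establish the structural invariant that $\xvec_t$ always lies on the segment $[\svec_t, \tilde{\xvec}_t]$, by induction on $t$: trivially in the switch branch, and in the else branch because $\yvec_t = \nu_t\tilde{\xvec}_t + (1-\nu_t)\svec_t$ lies on the segment by construction and the radial retraction keeps $\xvec_t \in [\svec_t, \yvec_t] \subseteq [\svec_t, \tilde{\xvec}_t]$. Writing $\xvec_t = \lambda_t\tilde{\xvec}_t + (1-\lambda_t)\svec_t$, we have $\lambda_t \in [0, \nu_t]$ with $\nu_t - \lambda_t \leq \gamma\,\cost_\adv(t,t)/\|\tilde{\xvec}_t - \svec_t\|$ encoding the greedy-step contraction. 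Consequently, whenever both $t-1$ and $t$ are in the else branch, $\nu_t = \lambda_{t-1}$, so $\yvec_t$ occupies the same convex-combination coefficient on $[\svec_t, \tilde{\xvec}_t]$ as $\xvec_{t-1}$ does on $[\svec_{t-1}, \tilde{\xvec}_{t-1}]$. This is the design invariant underpinning the entire analysis.

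For $(1+\epsilon)$-consistency, I would decompose per-step costs by branch. In the switch branch, the per-step cost is $\leq \cost_\adv(t,t) + \|\tilde{\xvec}_{t-1} - \xvec_{t-1}\|$, with the ``ghost'' second term nonzero only at transitions from the else branch. In the else branch, convexity yields $f_t(\xvec_t) \leq \lambda_t f_t(\tilde{\xvec}_t) + (1-\lambda_t)f_t(\svec_t)$, and the triangle inequality combined with the matching-coefficient property bounds the movement by $\gamma\,\cost_\adv(t,t) + \nu_t\|\tilde{\xvec}_t - \tilde{\xvec}_{t-1}\| + (1-\nu_t)\|\svec_t - \svec_{t-1}\|$; using $\lambda_t \leq \nu_t$ and $1-\nu_t \leq 1-\lambda_t$, the per-step else cost collapses to $(\nu_t + \gamma)\cost_\adv(t,t) + (1-\lambda_t)\cost_\rob(t,t)$. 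Summing, the $\cost_\rob$ terms in the else branch aggregate to at most $\cost_\rob(1, \max \bar S) < \delta\cost_\adv$ by the else-branch condition, and the ghost transition costs telescope: within each else-interval, $1 - \lambda_{\mathrm{end}} = \sum_s (\nu_s - \lambda_s)$, and combined with $(\nu_s - \lambda_s)\|\tilde{\xvec}_s - \svec_s\| \leq \gamma\,\cost_\adv(s,s)$ this bounds the cumulative ghost cost by $\gamma\cost_\adv$. Combining yields $\cost_\binterp \leq (1 + 2\gamma + \delta)\cost_\adv \leq (1+\epsilon)\cost_\adv$ via the parameter constraint $2\gamma + 2\delta = \epsilon$.

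For robustness, I case-split on the branch at time $T$. If $T$ is in the switch branch, the condition $\cost_\rob(1,T) \geq \delta\cost_\adv(1,T)$ combined with the just-proved consistency gives $\cost_\binterp \leq (1+\epsilon)\cost_\adv \leq (1+\epsilon)/\delta \cdot \cost_\rob$. Otherwise let $t^*$ be the last switch-branch time (or $0$); applying the previous case to the prefix yields $\cost_\binterp(1, t^*) \leq (1+\epsilon)/\delta\cdot\cost_\rob(1,t^*)$. For the suffix $[t^*+1, T] \subseteq \bar S$, we have $\nu_{t^*+1} = 1$ (since $\xvec_{t^*} = \tilde{\xvec}_{t^*}$), and each greedy step forces $\nu_t - \nu_{t+1} \geq \gamma\,\cost_\adv(t,t)/D$; a telescoping sum gives $\cost_\adv(t^*+1, T) \leq D/\gamma$. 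The per-step else bound then yields $\cost_\binterp(t^*+1, T) \leq (1+\gamma) \cdot D/\gamma + \cost_\rob(t^*+1, T) = D + D/\gamma + \cost_\rob(t^*+1, T)$; using $\cost_\rob \geq 1$ to absorb the additive constant and combining the two pieces yields the overall ratio $D + D/\gamma + (1+\epsilon)/\delta$.

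The main obstacle will be the careful amortization of the ghost transition costs in the consistency proof, specifically showing that the cumulative $\|\tilde{\xvec}_{t-1} - \xvec_{t-1}\|$ paid at each transition out of the else branch is proportional to $\gamma\cost_\adv$ on the just-completed interval. This rests on the telescoping identity combined with the greedy-step bound on $\nu_s - \lambda_s$; edge cases such as $\lambda_s$ being clipped to $0$ (after which $\xvec_s = \svec_s$ and the algorithm follows $\rob$ rather than retreating toward $\adv$) must be handled separately to keep the per-interval ghost cost bound clean.
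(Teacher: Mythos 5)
Your robustness argument is essentially the paper's: the paper uses the potential $\phi_t = c\,\|\xvec_t-\svec_t\|/\|\tilde{\xvec}_t - \svec_t\| = c\lambda_t$ with $c = D + D/\gamma$, and your direct telescoping of $\nu_t$ (with $\nu_{t+1} = \lambda_t$, $\nu_t - \lambda_t \geq \gamma\cost_\adv(t,t)/D$ by $D$-boundedness) recovers the same quantity. You correctly flag the clipping edge case, which is exactly what the paper's Lipschitz-free case analysis handles.

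The consistency argument has a genuine gap. You bound the per-step else cost by $(\nu_t+\gamma)\cost_\adv(t,t) + (1-\lambda_t)\cost_\rob(t,t)$ (a valid and in fact tighter per-step bound than the paper's) and then pay the ``ghost'' cost $\|\tilde{\xvec}_b - \xvec_b\| = (1-\lambda_b)\|\tilde{\xvec}_b - \svec_b\|$ at the transition out of each else-interval $[a,b]$. Your amortization argument rests on the chain $1-\lambda_b = \sum_{s=a}^b(\nu_s - \lambda_s)$ and $(\nu_s - \lambda_s)\|\tilde{\xvec}_s - \svec_s\| \leq \gamma\cost_\adv(s,s)$, but these do not combine to bound $(1-\lambda_b)\|\tilde{\xvec}_b - \svec_b\|$ by $\gamma\cost_\adv(a,b)$: that would require $\|\tilde{\xvec}_b - \svec_b\| \leq \|\tilde{\xvec}_s - \svec_s\|$ for every $s$ in the interval, which is false in general since the distance between $\adv$ and $\rob$ can grow over the interval. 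Working out the correct bound via $\|\tilde{\xvec}_b - \svec_b\| \leq \|\tilde{\xvec}_s - \svec_s\| + \cost_\adv(s+1,b) + \cost_\rob(s+1,b)$ leaves an additional $\cost_\adv(a,b) + \cost_\rob(a,b)$ per interval, which inflates the final ratio to $(2+\epsilon)$, not $(1+\epsilon)$.

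The paper avoids this by carrying the potential $\Phi_t = \|\tilde{\xvec}_t - \xvec_t\|$ through every step (not just discharging it at interval boundaries) and bounding the \emph{increment} $\Phi_t - \Phi_{t-1}$ inside each else step: the decomposition $\|\tilde{\xvec}_t - \yvec_t\| = (1-\nu)\|\tilde{\xvec}_t - \svec_t\| \leq (1-\nu)(\|\tilde{\xvec}_t - \tilde{\xvec}_{t-1}\| + \|\svec_t-\svec_{t-1}\|) + \|\tilde{\xvec}_{t-1} - \xvec_{t-1}\|$ shifts the extra $\cost_\adv + \cost_\rob$ contributions into the per-step bound, where they are absorbed into the factor $2\cost_\rob(t,t) + (1+2\gamma)\cost_\adv(t,t)$ rather than accumulating across the interval. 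Your per-step bound is too aggressive in dropping these terms, so the corresponding mass reappears in the ghost cost where it can no longer be absorbed. Replacing your explicit amortization with the potential-increment argument resolves this and gives the $(1+\epsilon)$ bound cleanly.
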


Remarkably, Theorem \ref{theorem:bounded_bicompetitive} states that $\binterp$ not only improves on the consistency of $\interp$, but it also strictly improves upon $\interp$'s $\calO(\epsilon^{-2})$ competitiveness with respect to $\rob$ when $D = o(\epsilon^{-1})$. Moreover, $\binterp$ substantially improves on the randomized switching algorithm of \cite{antoniadis_online_2020} in the $D$-bounded setting, providing deterministic and tunable robustness and consistency guarantees with no additive factor in the consistency term. We give a proof of Theorem \ref{theorem:bounded_bicompetitive}, as well as details on optimal parameter selection, in Appendix \ref{appendix:bounded_bicompetitive}. The argument follows a similar line of reasoning as that of our proof of Theorem \ref{theorem:consistent_robust_general}, though in the proof of competitiveness with respect to $\rob$ (i.e., robustness), we employ a novel potential function constructed via the ratio between the respective distances of $\xvec_t$ and $\tilde{\xvec}_t$ to $\svec_t$.

We detail robustness and consistency corollaries of Theorem \ref{theorem:bounded_bicompetitive} for multiple subclasses of $\cfc$ in Appendix \ref{appendix:laundry_list}. In particular, Theorem \ref{theorem:bounded_bicompetitive} and Table \ref{table:sota} imply an algorithm for $\cfc$ and $\cbc$ with any norm that is $(1+\epsilon)$-consistent and $\calO(\frac{dD}{\epsilon})$-robust on $D$-bounded instances. We also obtain an algorithm for $\acfc$ in the $D$-bounded finite-dimensional Euclidean setting that achieves $(1+\epsilon)$-consistency and $\calO(\frac{D}{\alpha^{1/2}\epsilon})$-robustness. This latter bound is nearly tight for $\acfc$: \citet[Theorem 3.6]{rutten_online_2022} prove a lower bound of $\calO(\frac{1}{\alpha})$ robustness for any $(1+\alpha)$-consistent $\acfc$ algorithm. Choosing $\epsilon = \alpha$, Theorem \ref{theorem:bounded_bicompetitive} gives us $(1+\alpha)$-consistency and $\calO(\frac{1}{\alpha^{3/2}})$-robustness when $D = O(1)$ in $\alpha$, leaving a gap of just $\calO(\alpha^{-1/2})$ between the upper and lower bounds. We leave to future work the question of whether the upper and lower bounds can be made tight and whether the factor of $D$ (and more generally the $D$-boundedness assumption) can be dropped.

\section{Conclusion}
In this work, we examine the question of integrating black-box advice into algorithms for convex function chasing using the notions of robustness and consistency from the literature on online algorithms with machine-learned advice. We first propose an algorithm that switches between the decisions of an arbitrary $C$-competitive algorithm $\rob$ and the advice, showing that it obtains $(3+\calO(\epsilon))$-consistency and finite robustness for any $\epsilon > 0$.  We moreover show that this is optimal, in the sense that \emph{no} switching algorithm can improve upon 3-consistency while maintaining finite robustness. We then move beyond switching algorithms, and propose two algorithms, $\interp$ and $\binterp$, which obtain improved robustness and consistency guarantees by exploiting the convexity inherent in the $\cfc$ problem. In particular, $\interp$ obtains $(\sqrt{2}+\epsilon)$-consistency and $\calO(\frac{C}{\epsilon^2})$-robustness, and under the additional assumption of $D$-boundedness, $\binterp$ can obtain $(1+\epsilon)$-consistency and $\calO(\frac{CD}{\epsilon})$-robustness. We show that $\binterp$ is nearly optimal for the problem of $\cfc$ with $\alpha$-polyhedral hitting cost functions $f_t$, so long as $D = O(1)$ in $\alpha$.

Several interesting questions remain open for future work: in particular, (a) the question of whether $(1+\epsilon)$-consistency and finite robustness can be obtained for general $\cfc$ without the $D$-boundedness assumption, and (b) the question of tight lower bounds on robustness and consistency for $\cfc$ and its many subclasses. Specifically, for the case of $\cfc$ in general, we pose the question of whether $(1+\epsilon)$-consistency is possible together with $\calO(\frac{d}{\epsilon})$-robustness, or even whether the dependence on $\epsilon$ and $d$ can be further improved in the robustness bound.

\acks{The authors thank Eitan Levin for several helpful discussions. The authors acknowledge support from an NSF Graduate Research Fellowship (DGE-1745301), NSF grants CNS-2146814, CPS-2136197, CNS-2106403, and NGSDI-2105648, and Amazon AWS.}

\bibliography{main}

\appendix


\section{Preliminaries \label{appendix:prelim}}
In part \ref{appendix:cfc_subclasses} of this appendix, we provide more detailed definitions of the subclasses of convex function chasing considered in this work, including both those introduced in Section \ref{section:cfc_subclasses} as well as several additional special cases which we refer to in the robustness and consistency results given in Appendix \ref{appendix:laundry_list}. We also review state-of-the-art competitive algorithms for each subclass, which we summarize in Table \ref{table:sota_extended}, which is an extended version of Table \ref{table:sota} in the main text. Then, in part \ref{appendix:bicompetitive_robust_consistent}, we elaborate on the claim made in Section \ref{section:robust_consistent_intro} that a $(c, r)$-bicompetitive meta-algorithm for $\cfc$, along with a $b$-competitive algorithm for a subclass of $\cfc$, together yield a $c$-consistent and $rb$-robust algorithm for that subclass.

\begin{table}
\begin{center}
\renewcommand*{\arraystretch}{1.2}
\begin{tabular}{|c||c|c|} 
  \hline
  Problem & State-of-the-Art Competitive Ratio & Setting \\
  \hline\hline
  $\cfc$ & $d+1$ & $\R^d$ with any norm \\
  \hline 
  $\cbc$ & $d$ & $\R^d$ with any norm \\
  \hline 
  $\kcbc$ & $2k+1$ & $\R^d$ with $\ell^2$ norm \\
  \hline 
  $\acfc$ & $\max\left\{1, \frac{2}{\alpha}\right\}$ & Any normed vector space \\
  \hline
  $\acfc$ & $\calO\left(\frac{1}{\alpha^{1/2}}\right)$ & $\R^d$ with $\ell^2$ norm \\
  \hline
  $\kgcfc$ & $(2 + 2\sqrt{2})2^{\gamma/2}\kappa$ & $\R^d$ with $\ell^2$ norm \\
  \hline
\end{tabular}
\caption{Competitive ratios for state-of-the-art algorithms on various subclasses of $\cfc$.}
\label{table:sota_extended}
\end{center}
\end{table}

\subsection{Subclasses of $\cfc$ \label{appendix:cfc_subclasses}}
\subsubsection{Convex body chasing \label{appendix:cbc_cfc_equiv}} 
In the problem of convex body chasing ($\cbc$) on a normed vector space $(X, \|\cdot\|)$, at each time $t$ a decision-maker is given a convex body $K_t \subseteq X$ and faces the requirement that their decision $\xvec_t$ must reside within $K_t$. A further special case of $\cbc$ is the problem of \emph{nested} convex body chasing ($\ncbc$), in which subsequent bodies are nested, i.e. $K_t \supseteq K_{t+1}$ for each $t$. We define the set of all online algorithms which are \emph{feasible} for $\cbc$, i.e. which produce decisions residing within the convex body $K_t$ at each time, as $\calA_\cbc$. We define $\calA_\ncbc$ similarly as the set of all online algorithms which are feasible for $\ncbc$. \cite{sellke_chasing_2020} proved that an algorithm based on a functional generalization of the Steiner point of a convex body achieves competitive ratio $d$ for $\cbc$ and $d + 1$ for general $\cfc$ in $\R^d$ equipped with any norm.

The problem of convex body chasing can easily be seen as a special case of $\cfc$ in which each hitting cost $f_t$ is the $\{0, \infty\}$ indicator of the convex set $K_t$. That is,
$$f_t(\xvec) = \begin{cases} 0 & \text{if $\xvec \in K_t$} \\ \infty &\text{otherwise.}\end{cases}$$
As noted in \cite{sellke_chasing_2020}, we need not even require hitting costs to take infinite values to recover convex body chasing from function chasing. Indeed, restricting to the finite-dimensional setting,\footnote{This restriction is natural because no algorithm can be competitive for $\cbc$ in the infinite-dimensional setting. We impose the restriction in order to ensure existence of a metric projection onto $K_t$.} consider $f_t$ defined as
$$f_t(\xvec) = 3\cdot d(\xvec, K_t) = 3\min_{\yvec \in K_t} \|\xvec - \yvec\|.$$
Then any algorithm $\alg \in \calA_\cfc$ yields a set of decisions $\alg_1, \ldots, \alg_T$ on the instance $(\xvec_0, f_1, \ldots, f_T)$; and moreover, $\alg$ can be transformed into an algorithm $\alg' \in \calA_\cfc$ with strictly improved cost, and which in particular incurs no hitting cost, by setting
$$\alg_t' = \begin{cases}\alg_t & \text{if $\alg_t \in K_t$} \\ \Pi_{K_t}\alg_t &\text{otherwise.} \end{cases}$$
Clearly each decision $\alg'_t$ resides in the convex body $K_t$; thus $\alg'$ is a feasible online algorithm for $\cbc$, i.e., $\alg' \in \calA_\cbc$. Moreover, the cost of $\alg'$ on the $\cbc$ instance is identical to its cost for the corresponding $\cfc$ instance, since it incurs no hitting cost. It follows that the competitive ratio of $\alg'$ for the $\cbc$ problem is at most the competitive ratio of $\alg$ as a $\cfc$ algorithm, since $\opt$ for a $\cbc$ instance and its corresponding $\cfc$ instance always coincide. In short, a $C$-competitive algorithm for $\cfc$ is also $C$-competitive for $\cbc$. 

\begin{remark}
    The preceding line of reasoning can be extended to show that a $C$-competitive algorithm for $\cfc$ gives a $C$-competitive algorithm for $\cfc$ with strict decision constraints, i.e., where at time $t$, the decision $\xvec_t$ must both reside in some convex body $K_t$ and also incurs a convex hitting cost $f_t(\xvec_t)$.
\end{remark}

\subsubsection{Chasing low-dimensional convex bodies}
A special case of convex body chasing that has received significant attention is the problem of chasing low-dimensional bodies in higher-dimensional space: indeed, the seminal work of \cite{friedman_convex_1993} began by addressing the problem of chasing lines in the Euclidean plane. \cite{bienkowski_better_2019} later presented a 3-competitive algorithm for chasing lines in $(\R^d, \|\cdot\|_{\ell^2})$, and most recently \cite{argue_dimension-free_2020} gave an algorithm that is $(2k+1)$-competitive for chasing convex bodies lying in $k$-dimensional affine subspaces, regardless of the dimension $d$ of the underlying Euclidean space. Motivated by this last result, we define the problem of $k$-dimensional convex body chasing ($\kcbc$), comprised of all instances of $\cbc$ in which each body $K_t$ lies within an affine subspace of dimension at most $k$ -- i.e., $\dim\aff K_t \leq k$ for all $t$.

\subsubsection{$\alpha$-polyhedral convex function chasing}
A class of functions that has been studied extensively in the literature on online optimization with switching costs ($\soco$) is the class of globally $\alpha$-polyhedral functions, e.g., \cite{chen_smoothed_2018, zhang_revisiting_2021}, which are defined as follows.

\begin{definition}
    Let $(X, \|\cdot\|)$ be a normed vector space, and let $\alpha > 0$. A function $f : X \to \R_+$ is \textbf{globally $\alpha$-polyhedral} if it has unique minimizer $\xvec^* \in X$, and in addition,
    $$f(\xvec) \geq f(\xvec^*) +  \alpha\|\xvec - \xvec^*\| \qquad\text{for all $\xvec \in X$}.$$
\end{definition}

Roughly speaking, a globally $\alpha$-polyhedral function has a unique minimizer, away from which it grows with slope at least $\alpha$. For a fixed $\alpha > 0$, we define the problem of \emph{$\alpha$-polyhedral convex function chasing} ($\acfc$) comprised of all those problem instances of $\cfc$ in which, in addition to being convex, each function $f_t$ is also globally $\alpha$-polyhedral. $\acfc$ has been widely studied due to its admitting algorithms with ``dimension-free'' competitive ratios: \cite{zhang_revisiting_2021} showed that a greedy algorithm that simply moves to the minimizer $\xvec_t^*$ of each function $f_t$ is $\max\{1, \frac{2}{\alpha}\}$-competitive for $\acfc$ in any normed vector space. In the setting of $\R^d$ with the $\ell^2$ norm, \cite{lin_personal_2022} gave an algorithm augmenting the Online Balanced Descent algorithm of \cite{chen_smoothed_2018} to achieve a competitive ratio of $\calO(\frac{1}{\alpha^{1/2}})$.

\subsubsection{$(\kappa, \gamma)$-well-centered convex function chasing}
Another class of functions that has received attention in the design of algorithms for subclasses of $\cfc$ with dimension-free competitive ratios is the set of $(\kappa,\gamma)$-well-centered functions, introduced by \cite{argue_dimension-free_2020}:

\begin{definition}
    Let $(X, \|\cdot\|)$ be a normed vector space, and let $\kappa, \gamma \geq 1$. A function $f : \R^d \to \R_+$ with minimizer $\xvec^*$ is \textbf{$(\kappa, \gamma)$-well-centered} if there exists some $a > 0$ such that
    $$\frac{a}{2}\|\xvec - \xvec^*\|^\gamma \leq f(\xvec) \leq \frac{a\kappa}{2}\|\xvec - \xvec^*\|^\gamma \qquad \text{for all $\xvec \in X$}.$$
\end{definition}

Intuitively, the growth rate of a $(\kappa, \gamma)$-well-centered function away from its minimizer (as measured with the ``distance'' $\|\cdot\|^\gamma$) is bounded above and below, and the ratio of these bounds is at most $\kappa$. For fixed $\kappa, \gamma \geq 1$, we define the problem of \emph{$(\kappa, \gamma)$-well-centered convex function chasing} ($\kgcfc$) comprised of all those problem instances of $\cfc$ in which each $f_t$ is $(\kappa, \gamma)$-well-centered. \cite{argue_dimension-free_2020} showed that the "Move towards Minimizer" algorithm is $(2 + 2\sqrt{2})2^{\gamma/2}\kappa$-competitive for $\kgcfc$ on $\R^d$ equipped with the $\ell^2$ norm.

\subsection{Bicompetitive meta-algorithms give robust and consistent algorithms \label{appendix:bicompetitive_robust_consistent}}

In this section, we briefly justify the claim that if $\meta$ is a $(c, r)$-bicompetitive meta-algorithm for $\cfc$, $\adv$ is an advice algorithm, and $\rob$ is a $b$-competitive online algorithm for a subclass of $\cfc$, then $\meta(\adv, \rob)$ is $c$-consistent and $rb$-robust for that subclass. This is straightforward to see for non-$\cbc$ subclasses, or more generally, for any subclass of $\cfc$ which does not involve hard constraints on the decisions $\xvec_t$. In particular, $\rob$ being $b$-competitive means that $\cost_\rob \leq b \cdot \cost_\opt$, and so $(c, r)$-bicompetitiveness of $\meta$ implies that both $\cost_{\meta(\adv, \rob)} \leq c \cdot \cost_\adv$ and $\cost_{\meta(\adv, \rob)} \leq r \cdot \cost_\rob \leq rb \cdot \cost_\opt$, as desired. 

The only subclasses that require more careful justification are those, such as $\cbc$, with hard constraints on the decisions. However, so long as the advice always gives feasible decisions -- e.g., in the $\cbc$ case, $\adv \in \calA_\cbc$, so $\adv_t \in K_t$ for each $t$ -- then we can obtain the same result by applying the reasoning from Appendix \ref{appendix:cbc_cfc_equiv} on equivalent $\cfc$ reformulations of instances with hard constraints. That is, on any instance of the subclass, we must simply run $\meta(\adv, \rob)$ on its equivalent reformulation as a $\cfc$ instance, and we thereby obtain the same guarantees of $c$-consistency and $rb$-robustness for the subclass.

\section{Switching algorithms \label{appendix:switching}}
\subsection{The meta-algorithm $\switch$ \label{appendix:switch_algorithm}}
We give the meta-algorithm $\switch$, which takes as hyperparameters $b > 1$ and $\delta \in (0, 1]$, in Algorithm \ref{alg:switch}. In order to reduce the two hyperparameters $b, \delta$ to a single hyperparameter $\epsilon$ as in the statement of Proposition \ref{prop:3_switching_alg}, we simply introduce an auxiliary variable $\gamma$ and make the substitutions $\delta \leftarrow b\gamma^2 - b^{-1}$, $b \leftarrow \sqrt{\gamma^{-2} + 1}$, and $\gamma \leftarrow \sqrt{\frac{\epsilon}{4}}$.

\begin{algorithm}
\label{alg:switch}
\DontPrintSemicolon
\KwIn{Algorithms $\adv, \rob \in \calA_\cfc$; hyperparameters $b > 1$, $\delta \in (0, 1]$}
\KwOut{Decisions $\xvec_1 , \ldots, \xvec_T$ chosen online}
$i \leftarrow 0$ \;
\While{problem instance has not ended}{
    \uIf{$i \equiv 0 \mod 2$}{
        $\xvec_t \leftarrow \adv_t$ until the last time $t$ that $\cost_{\adv}(1, t) \leq b^i$\; \label{switch:adv}
        $i \leftarrow i + 1$\;
    }
    \uElse{
        $\xvec_t \leftarrow \rob_t$ until the last time $t$ that $\cost_{\rob}(1, t) \leq \delta b^i$\;
        $i \leftarrow i + 1$\;
    }
}
\caption{$\switch(\adv, \rob; b, \delta)$}
\end{algorithm}


\subsection{Proof of Proposition \ref{prop:3_switching_alg} \label{appendix:3_switching_alg_proof}}

The proof follows the argument of \cite[Theorem 5]{angelopoulos_online_2021} and uses a similar line of reasoning as \cite[Theorems 1, 18]{antoniadis_online_2020} in applying an algorithm for linear search to $\cfc$.

Each value of $i$ encountered in the execution of Algorithm \ref{alg:switch} is taken to refer to a \emph{phase} of the algorithm; every decision $\xvec_t$ made during a particular value of $i$ is said to take place during the $i$\textsuperscript{th} phase. Our strategy will be to bound the cost that the algorithm incurs in each phase $i$, including the cost it takes to switch from the last decision of the previous phase $i - 1$. 

As a base case, consider $i = 0$. The total cost incurred by the algorithm during this phase is bounded by $b^i = b^0 = 1$, by line \ref{switch:adv} of the algorithm.

Now consider phase $i > 0$, and assume that $i$ is odd; after proving the cost bound for phase $i$ in the odd case, we will state the corresponding bound for the even case, which follows a nearly identical argument. Let $\underline{t}$ be the last timestep in the $(i-1)$\textsuperscript{th} phase -- that is, $\underline{t}$ is defined such that $\cost_\adv(1, \underline{t}+1) > b^{i-1}$. We will assume that $\cost_\adv(1, \underline{t}) \leq b^{i-1}$, i.e., the algorithm makes at least one decision during phase $(i-1)$, selecting $\xvec_{\underline{t}} = \adv_{\underline{t}}$; but the upper bound we obtain will also apply to the case where the $(i-1)$\textsuperscript{th} phase is vacuous. Let $\overline{t}$ be the last timestep corresponding to phase $i$, i.e., $\overline{t} \geq \underline{t}$ is defined such that $\cost_\rob(1, \overline{t}+1) > \delta b^i$. If $\overline{t} = \underline{t}$, then clearly no decisions are made during phase $i$, so no cost is incurred during this phase. On the other hand, if $\overline{t} > \underline{t}$, then certainly $\cost_\rob(1, \overline{t}) \leq \delta b^i$, so the cost incurred by $\switch$ during phase $i$, starting from its position at time $\underline{t}$, can be bounded as
\begin{align*}
    \cost_\switch(\underline{t}+1, \overline{t}) &= \sum_{t = \underline{t}+1}^{\overline{t}} f_t(\xvec_t) + \|\xvec_t - \xvec_{t-1}\| \\
    &= f_{\underline{t}+1}(\xvec_{\underline{t}+1}) + \|\xvec_{\underline{t}+1} - \xvec_{\underline{t}}\| + \sum_{t = \underline{t}+2}^{\overline{t}} f_t(\xvec_t) + \|\xvec_t - \xvec_{t-1}\| \\
    &\leq f_{\underline{t}+1}(\xvec_{\underline{t}+1}) + \|\xvec_{\underline{t}+1} - \xvec_0\| + \|\xvec_{\underline{t}} - \xvec_0\| + \sum_{t = \underline{t}+2}^{\overline{t}} f_t(\xvec_t) + \|\xvec_t - \xvec_{t-1}\| \\
    &= f_{\underline{t}+1}(\rob_{\underline{t}+1}) + \|\rob_{\underline{t}+1} - \xvec_0\| + \|\adv_{\underline{t}} - \xvec_0\|\\
    &\qquad + \sum_{t = \underline{t}+2}^{\overline{t}} f_t(\rob_t) + \|\rob_t - \rob_{t-1}\| \\
    &\leq \cost_\rob(1, \overline{t}) + \cost_\adv(1, \underline{t}) \\
    &\leq \delta b^i + b^{i-1}.
\end{align*}
where the first two bounds use the triangle inequality, and the last bound follows by construction of $\overline{t}$ and $\underline{t}$. By a very similar argument, if $i$ is even, we can bound the cost incurred by $\switch$ during phase $i$ as $\delta b^{i-1} + b^i$. Then the total cost expenditure of $\switch$ through the end of some phase $N > 0$ is at most 
\begin{align} 
    &b^0 + \sum_{i = 0}^{\lfloor\frac{N-1}{2}\rfloor} (\delta b^{2i + 1} + b^{2i}) + \sum_{j = 1}^{\lfloor\frac{N}{2}\rfloor} (\delta b^{2j - 1} + b^{2j}) \nonumber\\
    =\quad& \begin{cases} b^N + 2\sum_{i=0}^{\frac{N}{2} - 1} (b^{2i} + \delta b^{2i + 1}) & \text{if $N$ is even} \\ \delta b^N + 2b^{N-1} + 2\sum_{i=0}^{\frac{N-3}{2}} (b^{2i} + \delta b^{2i + 1}) & \text{if $N$ is odd}\end{cases} \label{eq:switch_cost_cases}
\end{align}

Suppose then that the instance ends at time $T$ \emph{during} phase $N$. We break into cases depending on the value of $N$. 

First, if $N = 0$, then clearly $\cost_\switch = \cost_\adv$, so $\switch$ is 1-competitive with respect to $\adv$. Moreover, since $\cost_\adv \leq b^0 = 1$, then by the assumption in the proposition statement that $\cost_\rob \geq 1$, it follows that $\switch$ is at most 1-competitive with respect to $\rob$.

Second, if $N = 1$, then $\cost_\switch \leq 2b^0 + \cost_\rob  = 2 + \cost_\rob$. Since $\cost_\adv > 1$ and $\cost_\rob \leq \delta b$ (due to the instance ending at phase $N = 1$), this means that $\switch$ is at most $(2 + \delta b)$-competitive with respect to $\adv$. Moreover, by assumption $\cost_\rob \geq 1$, $\switch$ is at most $3$-competitive with respect to $\rob$. 

Next, suppose $N > 1$ and $N$ is even. Then we have
$$\cost_\switch \leq 2\sum_{i=0}^{\frac{N}{2} - 1} (b^{2i} + \delta b^{2i + 1}) + \cost_\adv$$
which follows by applying (\ref{eq:switch_cost_cases}) to bound cost through phase $(N - 1)$, and bounding the remaining cost by $\delta b^{N-1} + \cost_\adv$, i.e., the cost to switch back to $\adv$ and follow it until the instance ends. Then note that $\cost_\adv \geq b^{N - 2}$ by definition of phase; introducing the substitution $2k \coloneqq N - 2$, we find that the competitive ratio of $\switch$ with respect to $\adv$ is bounded as
\begin{align*}
    \frac{\cost_\switch}{\cost_\adv} &\leq 1 + 2\frac{\sum_{i=0}^{k} (b^{2i} + \delta b^{2i + 1})}{b^{2k}} \\
    &= 1 + 2\left(\frac{b^{2k+2} - 1}{b^{2k}(b^2 - 1)} + \delta b\frac{b^{2k+2} - 1}{b^{2k}(b^2 - 1)}\right) \\
    &\leq 1 + 2\left(\frac{b^2}{b^2 - 1} + \delta \frac{b^3}{b^2 - 1}\right).
\end{align*}

On the other hand, we know that $\cost_\adv \leq b^N$ and $\cost_\rob \geq \delta b^{N-1}$, so by similar reasoning the competitive ratio of $\switch$ with respect to $\rob$ is bounded as
\begin{align*}
    \frac{\cost_\switch}{\cost_\rob} &\leq \frac{b}{\delta} + 2\frac{\sum_{i=0}^{k} (b^{2i} + \delta b^{2i + 1})}{\delta b^{2k+1}} \\
    &\leq \frac{b}{\delta} + 2\left(\frac{b}{\delta(b^2 - 1)} + \frac{b^2}{b^2 - 1}\right).
\end{align*}

Finally, consider $N > 1$ for odd $N$. Then 
$$\cost_\switch \leq 2\sum_{i=0}^{\frac{N-1}{2}} b^{2i} + \sum_{i=0}^{\frac{N-3}{2}} \delta b^{2i + 1} + \cost_\rob.$$
Noting that $\cost_\rob \leq \delta b^N$, $\cost_\adv \geq b^{N-1}$, and making the substitution $2k = N - 1$, we obtain that the competitive ratio of $\switch$ with respect to $\adv$ is bounded as
\begin{align*}
    \frac{\cost_\switch}{\cost_\adv} &\leq \delta b + 2\frac{\sum_{i=0}^{k} b^{2i} + \sum_{i=0}^{k-1} \delta b^{2i + 1}}{b^{2k}} \\
    &\leq \delta b + 2\left(\frac{b^2}{b^2 - 1} + \delta \frac{b}{b^2 - 1}\right).
\end{align*}

On the other hand, we know that $\cost_\rob \geq \delta b^{N-2} = \delta b^{2k-1}$. Thus the competitive ratio of $\switch$ with respect to $\rob$ is bounded as
\begin{align*}
    \frac{\cost_\switch}{\cost_\rob} &\leq 1 + 2\frac{\sum_{i=0}^{k} b^{2i} + \sum_{i=0}^{k-1} \delta b^{2i + 1}}{\delta b^{2k-1}} \\
    &\leq 1+ 2\left(\frac{1}{\delta}\frac{b^3}{b^2 - 1} + \frac{b^2}{b^2 - 1}\right).
\end{align*}

Combining these various cases, we obtain that $\switch$ is
$$\left(1 + 2\left(\frac{b^2}{b^2 - 1} + \delta \frac{b^3}{b^2 - 1}\right), 1+ 2\left(\frac{b^2}{b^2 - 1} + \frac{1}{\delta}\frac{b^3}{b^2 - 1}\right)\right)\text{--bicompetitive}$$
 with respect to $(\adv, \rob)$. Introducing an auxiliary parameter $\gamma$ and making the substitutions $\delta \leftarrow b\gamma^2 - b^{-1}$, $b \leftarrow \sqrt{\gamma^{-2} + 1}$, and $\gamma \leftarrow \sqrt{\frac{\epsilon}{4}}$, we arrive at the bicompetitive bound in terms of $\epsilon$ stated in the proposition. \jmlrQED

\subsection{Proof of Theorem \ref{theorem:switching_lowerbound} \label{appendix:switching_lowerbound_proof}}

We consider the setting of $\R^d$ with the $\ell^2$ norm, where the advice $\adv$ is adversarial and $\rob$ is an \emph{arbitrary} $b$-competitive algorithm for $\cbc$, with $b < \infty$; $\alg$ is any algorithm that switches between $\rob$ and $\adv$. For simplicity of presentation, we will assume that $\sqrt{d}$ is an integer. $\rob$ is assumed to be advice-agnostic, i.e., the behavior of $\adv$ does not impact the decisions made by $\rob$ (nor does the behavior of $\alg$, since $\alg$ itself depends on both $\adv$ and $\rob$). We construct a lower bound in the spirit of the standard example of chasing faces of the hypercube. At a high level, the $\cbc$ instance we construct has two phases: the first is comprised of multiple subphases in which an affine subspace is chosen adversarially and is repeatedly served until $\rob$ has ``almost'' stopped moving. This phase lasts either until $3\sqrt{d}$ subphases have concluded, or until the first time that $\alg$ coincides with $\adv$ at the end of a subphase, whichever happens sooner. If the former holds, i.e., if $\alg$ ends each of the $3\sqrt{d}$ subphases at $\rob$, then the instance is done. Otherwise, the second phase begins: there are a few different cases, but generally, the same affine subspace is served repeatedly while $\adv$ slowly drifts away from $\rob$ until $\alg$ switches back to $\rob$. Then, the final body is simply the last advice decision as a singleton, forcing $\alg$ to move back to the advice, and the instance concludes. 

We now describe the lower bound in more specific detail. Since $\rob$ is advice-agnostic, we may begin by describing its behavior before specifying the behavior of $\adv$ and reasoning about the switching algorithm $\alg$. We denote by $\evec_j \in \R^d$, $j \in [d]$ the $j$\textsuperscript{th} standard unit basis vector, which is 1 in its $j$\textsuperscript{th} entry and 0 elsewhere. Choose any $\delta > 0$. The starting position is $\xvec_0 = \vec{0}$. 


\noindent\textbf{Phase one.} At time $t = 1$, the served body $K_1$ is the hyperplane forcing the first coordinate to be $z_1 \coloneqq 1$:
$$K_1 = \left\{\xvec : \xvec^\top \evec_1 = z_1\right\}.$$
This same hyperplane $K_1$ is then repeatedly served until the time $m_1$ at which $\rob$ is almost stationary. That is, the time $m_1 < \infty$ is chosen to satisfy the property that the cumulative cost incurred by $\rob$ after time $m_1$, if $K_1$ were repeated indefinitely thereafter, is bounded above by $\delta$. Such a time $m_1$ must exist, since $\rob$ is $b$-competitive, the offline optimal cost for the instance comprised of repeated $K_1$s is $1$, and the tail of a convergent series converges to zero. At time $m_1$, if $\alg$'s decision coincides with that of $\adv$, i.e. if $\alg_{m_1} = \adv_{m_1}$ (note we will define the behavior of $\alg$ and $\adv$ later on), then we say that phase one is complete and we move on to phase two below. Otherwise, we continue to the next subphase in phase one as follows.

Let $z_2 \coloneqq -\sgn(\rob_{m_1, 2})$ be the negative of the sign of $\rob$'s $2$\textsuperscript{nd} entry at time $m_1$ (defaulting to 1 if $\rob_{m_1, 2} = 0$). At time $t = m_1 + 1$, we serve a new affine subspace $K_2$ defined as
$$K_2 = \left\{\xvec : \xvec^\top \evec_i = z_i, i = 1, 2\right\}.$$
Note that this forces $\rob$ to incur cost at least 1 at time $m_1 + 1$. This same body is repeated until the time $m_2$ at which $\rob$ is almost stationary. That is, just as before, $m_2$ is defined as the time at which, if $K_2$ were repeated indefinitely from time $m_2+1$ onward, $\rob$ would incur total cost no more than $\delta$ after time $m_2$. For the same reason as before, $m_2 < \infty$ is certain to exist by $b$-competitiveness of $\rob$. If $\alg_{m_2} = \adv_{m_2}$, then we say that phase one is complete and move on to phase two below. Otherwise, we continue to the next subphase in phase one.

The remaining subphases in phase one are constructed similarly: for each $j = 3, \ldots, 3\sqrt{d}$, we define $z_j \coloneqq -\sgn(\rob_{m_{j-1}, j})$ to be the negative of the sign of $\rob$'s $j$\textsuperscript{th} entry at time $m_{j-1}$, and at time $t = m_{j-1}+1$, we serve a new affine subspace $K_j$ defined as
\begin{equation} \label{eq:K_j_definition}
    K_j = \left\{\xvec : \xvec^\top \evec_i = z_i, i = 1, \ldots, j\right\},
\end{equation}
which forces $\rob$ to incur cost at least 1. This body $K_j$ is then repeated until the time $m_j$ at which $\rob$ is almost stationary, i.e., after which it would incur cumulative cost no more than $\delta$, were $K_j$ to be repeated indefinitely. Then, if $\alg_{m_j} = \adv_{m_j}$, we say that phase one is complete and move on to phase two below. Otherwise, we remain in phase one and repeat this step with an incremented value of $j$. Once the subphase corresponding to $j = 3\sqrt{d}$ is completed, then the instance is concluded without moving on to phase two.

\noindent\textbf{Behavior of the advice.} We specify the behavior of $\adv$ based on the behavior of $\rob$ on the (possibly) counterfactual instance wherein phase one runs to termination without moving to phase two. That is, let $\rvec_1, \rvec_2, \ldots, \rvec_{m_{3\sqrt{d}}}$ be the decisions of $\rob$ on an auxiliary $\cbc$ instance where $K_1$ is served from time $1$ through $m_1$, $K_2$ is served from time $m_1 + 1$ through $m_2$, and so on, terminating with $K_{3\sqrt{d}}$ being served from time $m_{3\sqrt{d}-1}+1$ through $m_{3\sqrt{d}}$. Then define
\begin{equation} \label{eq:switch_lb_advice}
    \avec = \argmax_{\xvec \in \{\pm 1\}^{d-3\sqrt{d}}} \min_{j = 1, \ldots, 3\sqrt{d}} \|\xvec - \rvec_{m_j, 3\sqrt{d}+1:}\|_{\ell^2},
\end{equation}
where $\rvec_{m_j, 3\sqrt{d}+1:}$ is the vector obtained by dropping the first $3\sqrt{d}$ entries in $\rvec_{m_j}$. Thus, $\avec$ is the corner of the hypercube $\{\pm 1\}^{d-3\sqrt{d}}$ that is farthest (in $\ell^2$) from any of the subvectors comprised of the last $d-3\sqrt{d}$ entries of the decisions $\rvec_{m_1}, \ldots, \rvec_{m_{3\sqrt{d}}}$ made by $\rob$ at the conclusion of the phase one subphases. Then we define the advice's phase one behavior simply as follows: at time $1$, the advice immediately moves to the point
$$\hat{\avec} = (z_1, \ldots, z_{3\sqrt{d}}, a_1, \ldots, a_{d-3\sqrt{d}}),$$ 
and it remains there until phase one is completed.

\noindent\textbf{Phase two.} Fix $\epsilon > 0$. Suppose that phase one terminates at time $m_j$, where $j < 3\sqrt{d}$ (since if $j = 3\sqrt{d}$, then the instance ends without moving on to phase two). Thus it is the case that $\alg_{m_j} = \adv_{m_j} = \hat{\avec}$, and $\rob_{m_j} = \rvec_{m_j}$. Then the instance splits into two cases:
\begin{enumerate}[1.)]
    \item \label{switch_lb_case1} Suppose that $\|\avec - \rvec_{m_j, 3\sqrt{d}+1:}\|_{\ell^2} \geq \sqrt{d-3\sqrt{d}}$, and define $\vvec = \frac{\avec - \rvec_{m_j, 3\sqrt{d}+1:}}{\|\avec - \rvec_{m_j, 3\sqrt{d}+1:}\|_{\ell^2}}$. Then at each time $t = m_j + 1, \ldots, m_j + k$ (where $k$ will be defined later), we serve the body $K_j$ again. By our selection of $m_j$, the robust algorithm $\rob$ will remain $\delta$-close to its decision $\rob_{m_j}$, since we are simply continuing to serve the same body. However, at each of these times, we make the advice move to the point
    $$\adv_t = \left(z_1, \ldots, z_{3\sqrt{d}}, a_1 + (t-m_j)\epsilon v_1, \ldots, a_{d-3\sqrt{d}}+ (t-m_j)\epsilon v_{d-3\sqrt{d}}\right).$$
    That is, at each time $t = m_j + 1, \ldots, m_j+k$, the advice takes a step of length $\epsilon$ in the direction $\vvec$ in its last $d-3\sqrt{d}$ coordinates. Then $k$ is chosen such that $m_j + k$ is the first time after $m_j$ at which $\alg_{m_j + k} = \rob_{m_j + k}$, i.e., the first time at which the algorithm switches back to $\rob$ after following the advice. Note that $k < \infty$, by the assumption that $\alg$ has finite robustness. Then the final body is chosen as
    $$K_{\mathrm{fin}} = \left\{\adv_{m_j+k}\right\} = \left\{\left(z_1, \ldots, z_{3\sqrt{d}}, a_1 + k\epsilon v_1, \ldots, a_{d-3\sqrt{d}} + k\epsilon v_{d-3\sqrt{d}}\right)\right\},$$
    which allows the advice to stay put while $\rob$ and $\alg$ must move back to coincide with it.
    
    \item \label{switch_lb_case2} Suppose that $\|\avec - \rvec_{m_j, 3\sqrt{d}+1:}\|_{\ell^2} < \sqrt{d-3\sqrt{d}}$. Since $\avec$ maximizes the objective of \eqref{eq:switch_lb_advice}, then it must hold that
    \begin{equation} \label{min_dist}
        \min_{i=1, \ldots, 3\sqrt{d}}\|-\avec - \rvec_{m_i, 3\sqrt{d}+1:}\|_{\ell^2} < \sqrt{d-3\sqrt{d}}.
    \end{equation}
    Let $i^*$ be the minimizing index in \eqref{min_dist}; note that $i^* \neq j$, since otherwise,
    \begin{align*}
        2\sqrt{d-3\sqrt{d}} &= \|2\avec\| \\
        &\leq \|\avec - \rvec_{m_j, 3\sqrt{d}+1:}\|_{\ell^2} + \|\avec + \rvec_{m_j, 3\sqrt{d}+1:}\|_{\ell^2} & \text{by the triangle inequality}\\
        &< 2\sqrt{d-3\sqrt{d}}
    \end{align*}
    giving a contradiction. Then the instance splits into two further subcases:
    \begin{enumerate}[(a)]
        \item \label{switch_lb_casea} Suppose that $i^* < j$. Then, just as in case \ref{switch_lb_case1}, at each time $t = m_j + 1, \ldots, m_j + k$, we serve the body $K_j$ again. At each of these times, we make the advice move to the point 
        $$\adv_t = \left(z_1, \ldots, z_{3\sqrt{d}}, a_1 + (t-m_j)\epsilon v_1, \ldots, a_{d-3\sqrt{d}}+ (t-m_j)\epsilon v_{d-3\sqrt{d}}\right),$$
        where $\vvec = \frac{\avec - \rvec_{m_j, 3\sqrt{d}+1:}}{\|\avec - \rvec_{m_j, 3\sqrt{d}+1:}\|_{\ell^2}}$ just as in case \ref{switch_lb_case1}. Just as in case \ref{switch_lb_case1}, $k$ is chosen such that $m_j + k$ is the first time after $m_j$ at which $\alg_{m_j + k} = \rob_{m_j + k}$, i.e., the first time at which the algorithm switches back to $\rob$ after following the advice. Then the final body is chosen as
        $$K_{\mathrm{fin}} = \left\{\adv_{m_j+k}\right\} = \left\{\left(z_1, \ldots, z_{3\sqrt{d}}, a_1 + k\epsilon v_1, \ldots, a_{d-3\sqrt{d}} + k\epsilon v_{d-3\sqrt{d}}\right)\right\},$$
        which allows the advice to stay put while $\rob$ and $\alg$ must move back to coincide with it.

        \item \label{switch_lb_caseb} Suppose that $i^* > j$. Then for each $l = j+1, \ldots, i^*$, serve the body $K_l$ as defined in \eqref{eq:K_j_definition} from time $m_{l-1}+1$ through $m_l$, while keeping the advice at the same point $\hat{\avec}$. Since $\rob$ is advice agnostic and this sequence of bodies coincides with the remainder of the phase one sequence of bodies, it will be the case that $\rob_{m_{i^*}} = \rvec_{m_{i^*}}$. Then, finally, we split into two further subcases.
        \begin{enumerate}[(i), wide, labelwidth=!, labelindent=0pt]
            \item \label{switch_lb_casei} If $\alg_{m_{i^*}} = \rob_{m_{i^*}} = \rvec_{m_{i^*}}$, then simply choose the final body as 
            $$K_{\mathrm{fin}} = \left\{\hat{\avec}\right\},$$
            which allows the advice to stay put while $\rob$ and $\alg$ must move back to coincide with it.
            
            \item \label{switch_lb_caseii} If $\alg_{m_{i^*}} = \adv_{m_{i^*}} = \hat{\avec}$, then proceed similarly to subcase \ref{switch_lb_casea}: for each time $t = m_{i^*} + 1, \ldots, m_{i^*} + k$, we serve the body $K_{i^*}$ again and make the advice move to the point
            $$\adv_t = \left(z_1, \ldots, z_{3\sqrt{d}}, a_1 + (t-m_j)\epsilon v_1, \ldots, a_{d-3\sqrt{d}}+ (t-m_j)\epsilon v_{d-3\sqrt{d}}\right),$$
            where this time $\vvec = \frac{\avec - \rvec_{m_{i^*}, 3\sqrt{d}+1:}}{\|\avec - \rvec_{m_{i^*}, 3\sqrt{d}+1:}\|_{\ell^2}}$.
            Just as in subcase \ref{switch_lb_casea}, $k$ is chosen such that $m_{i^*} + k$ is the first time after $m_{i^*}$ at which $\alg_{m_{i^*}+k} = \rob_{m_{i^*}+k}$, i.e., the first time at which the algorithm switches back to $\rob$ after following the advice starting from time $m_{i^*}$. Then the final body is simply chosen as
            $$K_{\mathrm{fin}} = \left\{\adv_{m_j+k}\right\} = \left\{\left(z_1, \ldots, z_{3\sqrt{d}}, a_1 + k\epsilon v_1, \ldots, a_{d-3\sqrt{d}} + k\epsilon v_{d-3\sqrt{d}}\right)\right\},$$
            which allows the advice to stay put while forcing $\rob$ and $\alg$ to move to coincide with it.
        \end{enumerate}
    \end{enumerate}

\end{enumerate}

\noindent\textbf{Cost analysis.} Let us now tally costs for each of the cases of the instance to prove the result.

Consider the initial case where the instance never makes it out of phase one; this means that $3\sqrt{d}$ subphases occur in phase one, and $\alg$ finishes each subphase at the $\rob$ decision. Since $\|\rvec_{m_j} - \rvec_{m_{j-1}}\|_{\ell^2} \geq 1$ for each $j = 1, \ldots, 3\sqrt{d}$ (where $\rvec_{m_0} \coloneqq \rvec_0 = \xvec_0$), this means that $\alg$ incurs cost at least $3\sqrt{d}$, whereas the advice, which moves immediately to $\hat{\avec} \in \{\pm 1\}^d$ and stays there throughout the entire instance, incurs cost $\sqrt{d}$. Thus $\alg$ is at least 3-consistent, and we are done.

Now, we turn to each of the cases within which the instance makes it to phase two. First, consider case \ref{switch_lb_case1}. Since $\alg_{m_j} = \adv_{m_j} = \hat{\avec}$, the cost incurred by $\alg$ through time $m_j$ is at least $\sqrt{d}$. Then from time $m_j$ to $m_j+k-1$ while $\alg$ is following the advice, $\alg$ incurs cost $\|(k-1)\epsilon \vvec\|_{\ell^2} = (k-1)\epsilon$. At time $m_j + k$, $\alg$ switches back to $\rob$, incurring cost at least
\begin{align*}
    \|\adv_{m_j+k-1} - \rob_{m_j+k}\|_{\ell^2} &\geq \|\adv_{m_j+k-1} - \rob_{m_j}\|_{\ell^2} - \|\rob_{m_j} - \rob_{m_j + k}\|_{\ell^2} \\
    &\geq \|\adv_{m_j+k-1} - \rob_{m_j}\|_{\ell^2} - \delta \\
    &\geq \|\adv_{m_j + k - 1, 3\sqrt{d}+1:} - \rob_{m_j, 3\sqrt{d}+1:}\|_{\ell^2} - \delta\\
    &= \|(\avec + (k-1)\epsilon\vvec) - \rvec_{m_j, 3\sqrt{d}+1:}\|_{\ell^2} - \delta\\
    &= \|\avec - \rvec_{m_j, 3\sqrt{d}+1:}\|_{\ell^2} + (k-1)\epsilon - \delta\\
    &\geq \sqrt{d-3\sqrt{d}} + (k-1)\epsilon - \delta. \tageq\label{ineq:adv_to_rob_switch}
\end{align*}
Finally, by an analogous argument to \eqref{ineq:adv_to_rob_switch}, to switch back to $\adv$, $\alg$ incurs a cost of at least $\sqrt{d-3\sqrt{d}} + k\epsilon - \delta$. In sum, $\alg$ incurs a total cost of $\sqrt{d} + 2\sqrt{d - 3\sqrt{d}} + (3k - 2)\epsilon - 2\delta$. On the other hand, $\adv$ incurs a total cost of $\sqrt{d} + k\epsilon$. Then the consistency of $\alg$ is
$$\frac{\sqrt{d} + 2\sqrt{d - 3\sqrt{d}} + (3k - 2)\epsilon - 2\delta}{\sqrt{d} + k\epsilon}$$
which can be made arbitrarily close to 3 by choosing $\epsilon$ and $\delta$ small and taking $d$ arbitrarily large. 

Next, let's move to case \ref{switch_lb_case2}. First, we set up some preliminaries. Let's call $\rho = \sqrt{d - 3\sqrt{d}} - \|\avec - \rvec_{m_j, 3\sqrt{d}+1:}\|_{\ell^2}$, and note that $\rho > 0$. Since $\|\avec - \rvec_{m_j, 3\sqrt{d}+1:}\|_{\ell^2} \geq \|-\avec - \rvec_{m_{i^*}, 3\sqrt{d}+1:}\|_{\ell^2}$, we have that 
\begin{equation} \label{ineq:rvec_in_ball}
    \|-\avec - \rvec_{m_{i^*}, 3\sqrt{d}+1:}\|_{\ell^2} \leq \sqrt{d-3\sqrt{d}}-\rho
\end{equation}
and hence
\begin{align*}
    \|\rvec_{m_{i^*}, 3\sqrt{d}+1:}\|_{\ell^2} &\geq \|-\avec\| - \|-\avec - \rvec_{m_{i^*}, 3\sqrt{d}+1:}\|_{\ell^2} \\
    &\geq \rho \tageq\label{ineq:rvec_magnitude}
\end{align*}
Then
\begin{align*}
    \|\hat{\avec} - \rvec_{m_{i^*}}\|_{\ell^2} &\geq \|\avec - \rvec_{m_{i^*}, 3\sqrt{d}+1:}\|_{\ell^2} \tageq\label{ineq:subvector}\\
    &\geq \left\|\avec - \Pi_{B(-\avec, \sqrt{d-3\sqrt{d}}-\rho)}\avec\right\|_{\ell^2} \tageq\label{ineq_proj_nonexpansivity}\\
    &= \left\|2\avec - \Pi_{B(\vec{0}, \sqrt{d-3\sqrt{d}}-\rho)}2\avec\right\|_{\ell^2} \tageq\label{eq:translation_projection}\\
    &= \left\|2\avec - (\sqrt{d-3\sqrt{d}}-\rho)\frac{\avec}{\|\avec\|_{\ell^2}}\right\|_{\ell^2} \tageq\label{eq:proj_radial_proj}\\
    &= \left\|\avec + \rho\frac{\avec}{\|\avec\|_{\ell^2}}\right\|_{\ell^2} \\
    &= \sqrt{d-3\sqrt{d}}+\rho \tageq\label{ineq:avec_rvec_dist}
\end{align*}
where $\Pi_K\xvec$ denotes the projection of the point $\xvec$ onto the convex body $K$,
\eqref{ineq:subvector} follows from $\avec$ and $\rvec_{m_{i^*}, 3\sqrt{d}+1:}$ being subvectors of $\hat{\avec}$ and $\rvec_{m_{i^*}}$, respectively, \eqref{ineq_proj_nonexpansivity} follows from \eqref{ineq:rvec_in_ball} and non-expansivity of the projection, \eqref{eq:translation_projection} follows from translation, \eqref{eq:proj_radial_proj} applies the fact that the projection onto an origin-centered ball is just a radial projection, and \eqref{ineq:avec_rvec_dist} follows from $\|\avec\|_{\ell^2} = \sqrt{d-3\sqrt{d}}$.

Now, let's consider the subcases, starting with subcase \ref{switch_lb_casea}. Since $i^* < j$, we know that $\alg_{m_{i^*}} = \rob_{m_{i^*}} = \rvec_{m_{i^*}}$. Then by \eqref{ineq:rvec_magnitude}, $\alg$ incurs cost at least $\rho$ to get to $\rvec_{m_{i^*}}$, and by \eqref{ineq:avec_rvec_dist} it incurs another cost of at least $\sqrt{d-3\sqrt{d}}+\rho$ to get to $\adv_{m_j} = \hat{\avec}$. From time $m_j$ to $m_j + k - 1$ while $\alg$ is following the advice, $\alg$ incurs cost $(k-1)\epsilon$. Then at time $m_j + k$, $\alg$ switches back to $\rob$, and by a similar analysis to that in \eqref{ineq:adv_to_rob_switch} done for case \ref{switch_lb_case1}, it incurs cost at least $\sqrt{d - 3\sqrt{d}} - \rho + (k-1)\epsilon - \delta$ to do so. Finally, to switch back to $\adv$, $\alg$ incurs a cost of at least $\sqrt{d - 3\sqrt{d}} - \rho + k\epsilon - \delta$. Then in sum, $\alg$ has incurred a total cost of $3\sqrt{d-3\sqrt{d}} + (3k-2)\epsilon - 2\delta$ in this instance case. On the other hand, $\adv$ incurs a total cost of $\sqrt{d}+k\epsilon$, so $\alg$ has consistency
$$\frac{3\sqrt{d-3\sqrt{d}} + (3k-2)\epsilon - 2\delta}{\sqrt{d}+k\epsilon},$$
which can be made arbitrarily close to 3 by choosing $\epsilon$ and $\delta$ small and taking $d$ arbitrarily large.

Now, we move to subcase \ref{switch_lb_caseb}, beginning first with \ref{switch_lb_casei}. $\alg$ spends $\sqrt{d}$ to get to $\adv_{m_j} = \hat{\avec}$ in the first place, and then by \eqref{ineq:avec_rvec_dist} it spends cost at least $\sqrt{d - 3\sqrt{d}} + \rho$ to get to $\rob_{m_{i^*}}$ at time $m_{i^*}$. Finally, it spends at least another $\sqrt{d - 3\sqrt{d}} + \rho$ to get back to $\hat{\avec}$ for the final timestep. Thus in sum, $\alg$ incurs cost $\sqrt{d} + 2(\sqrt{d - 3\sqrt{d}} + \rho)$, whereas $\adv$ incurs cost $\sqrt{d}$, giving a consistency of
$$\frac{\sqrt{d} + 2(\sqrt{d - 3\sqrt{d}} + \rho)}{\sqrt{d}},$$
which even for arbitrarily small $\rho > 0$ can be made arbitrarily close to 3 by choosing $d$ sufficiently large.

Finally, we consider scenario \ref{switch_lb_caseii} in subcase \ref{switch_lb_caseb}. $\alg$ first spends $\sqrt{d}$ to get to $\adv_{m_j} = \hat{\avec}$, and then from time $m_{i^*}+1$ through $m_{i^*}+k-1$ it incurs cost $(k-1)\epsilon$ to follow the advice. Using \eqref{ineq:avec_rvec_dist} and reasoning analogous to that in \eqref{ineq:adv_to_rob_switch}, $\alg$ incurs cost $\sqrt{d - 3\sqrt{d}}+\rho+(k-1)\epsilon-\delta$ to switch back to $\rob$ at time $m_{i^*}+k$, and finally, it incurs cost $\sqrt{d - 3\sqrt{d}}+\rho+k\epsilon-\delta$ to switch back to the advice in the final timestep. Thus in sum, $\alg$ incurs cost at least $\sqrt{d} + 2(\sqrt{d - 3\sqrt{d}}+\rho) + (3k-2)\epsilon - 2\delta$, while $\adv$ incurs cost $\sqrt{d}+k\epsilon$. Thus $\alg$ has consistency
$$\frac{\sqrt{d} + 2(\sqrt{d - 3\sqrt{d}}+\rho) + (3k-2)\epsilon - 2\delta}{\sqrt{d}+k\epsilon},$$
which, even for very small $\rho > 0$, can be made arbitrarily close to 3 by choosing $\epsilon, \delta$ small and taking $d$ sufficiently large. 

\jmlrQED

\subsection{Proof of Proposition \ref{proposition:switching_steiner} \label{appendix:switching_steiner_proof}}

Let us first recall Theorem 2.1 of \cite{bubeck_chasing_2019}, which characterizes the cost incurred by moving to the Steiner point of each nested body. 

\begin{theorem}[{\cite[Theorem 2.1]{bubeck_chasing_2019}}] \label{thm:steiner_performance}
    Let $\xvec_0 = \vec{0}$ and $K_1 \subseteq B(\vec{0}, r)$ for some $r > 0$. Then following the Steiner point of each nested body $K_t$ incurs total movement cost no more than $rd$. 
\end{theorem}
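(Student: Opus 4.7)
The plan is to leverage the integral representation of the Steiner point in terms of the support function and then telescope over the nested sequence. Recall that the Steiner point of a convex body $K \subseteq \R^d$ can be written as
$$s(K) = d \int_{S^{d-1}} h_K(\vvec)\,\vvec \, d\sigma(\vvec),$$
where $h_K(\vvec) = \sup_{\xvec \in K} \langle \xvec, \vvec\rangle$ is the support function and $\sigma$ is the uniform probability measure on $S^{d-1}$. I would take this as my primary tool, noting two consequences that I will need: (i) $s(B(\vec{0},r)) = \vec{0}$ by spherical symmetry, and (ii) for any fixed $\xvec$, $\int_{S^{d-1}} \langle \xvec, \vvec \rangle \, d\sigma(\vvec) = 0$.

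First, I would fold the initial move from $\xvec_0 = \vec{0}$ into the telescoping sum by declaring $K_0 \coloneqq B(\vec{0}, r)$, so that $s(K_0) = \vec{0} = \xvec_0$ and the sequence $K_0 \supseteq K_1 \supseteq K_2 \supseteq \cdots \supseteq K_T$ remains nested. The total movement cost of the Steiner-point algorithm is then exactly $\sum_{t=1}^{T} \|s(K_t) - s(K_{t-1})\|$. For each consecutive pair, I would apply the integral representation and the triangle inequality (pulling the norm inside the integral), giving
$$\|s(K_{t-1}) - s(K_t)\| \leq d \int_{S^{d-1}} \bigl| h_{K_{t-1}}(\vvec) - h_{K_t}(\vvec) \bigr| \, d\sigma(\vvec) = d \int_{S^{d-1}} \bigl( h_{K_{t-1}}(\vvec) - h_{K_t}(\vvec) \bigr) \, d\sigma(\vvec),$$
where the equality uses monotonicity of the support function under inclusion: since $K_t \subseteq K_{t-1}$, we have $h_{K_t} \leq h_{K_{t-1}}$ pointwise on $S^{d-1}$.

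Next I would telescope the sum over $t$, obtaining
$$\sum_{t=1}^{T} \|s(K_{t-1}) - s(K_t)\| \leq d \int_{S^{d-1}} \bigl( h_{K_0}(\vvec) - h_{K_T}(\vvec) \bigr) \, d\sigma(\vvec).$$
The upper bound $h_{K_0}(\vvec) = r$ is immediate. For the lower bound on $\int h_{K_T}\, d\sigma$, I would fix any $\xvec^\star \in K_T$ (nonempty as part of the instance) and use $h_{K_T}(\vvec) \geq \langle \xvec^\star, \vvec \rangle$, so that $\int h_{K_T}(\vvec)\, d\sigma(\vvec) \geq \int \langle \xvec^\star, \vvec \rangle \, d\sigma(\vvec) = 0$ by symmetry of the sphere. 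Combining these two bounds yields the desired inequality $\sum_{t=1}^T \|s(K_{t-1}) - s(K_t)\| \leq rd$.

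The main technical step, and the one I expect to require the most care, is verifying the integral representation of the Steiner point and in particular the normalization constant $d$; once that is in hand, the rest reduces to the clean telescoping above together with the symmetry observation $\int_{S^{d-1}} \langle \xvec, \vvec \rangle\, d\sigma(\vvec) = 0$. A secondary point worth flagging is the trick of absorbing the initial step via $K_0 = B(\vec{0}, r)$, which is what allows the bound to come out to $rd$ rather than $rd + r$.
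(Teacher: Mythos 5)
The paper states this theorem as a direct citation to \cite{bubeck_chasing_2019} and provides no proof of its own, so there is no in-paper argument to compare against; your reconstruction is correct and matches the standard argument in that reference. Specifically: the integral representation $s(K)=d\int_{S^{d-1}}h_K(\vvec)\,\vvec\,d\sigma(\vvec)$ (with $\sigma$ the uniform probability measure; the prefactor $d$ is correct, since the surface measure of $S^{d-1}$ is $d$ times the volume of the unit ball), monotonicity of $h_{K_t}$ under nesting to drop the absolute value, telescoping to $d\int_{S^{d-1}}(h_{K_0}-h_{K_T})\,d\sigma$, and closing both ends with $h_{K_0}\equiv r$ and $\int_{S^{d-1}}h_{K_T}\,d\sigma\geq\int_{S^{d-1}}\langle\xvec^\star,\vvec\rangle\,d\sigma=0$. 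You are also right that the key bookkeeping device is prepending $K_0:=B(\vec{0},r)$, whose Steiner point is $\vec{0}=\xvec_0$ by symmetry, which folds the first step into the telescope and gives the clean constant $rd$ rather than $r(d+1)$.
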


We now prove Proposition \ref{proposition:switching_steiner}. 
For clarity, we abbreviate $\nestedswitch$ in this proof as $\ns$.

If $\ns$ only ever follows $\adv$, then $\epsilon \cdot \cost_{\adv} < r(d+2)$ and $\cost_{\ns} = \cost_{\adv}$, so $\cost_\ns \leq \frac{r(d+2)}{\epsilon}$. Thus $\ns$ is 1-competitive with respect to $\adv$ and $\frac{r(d+2)}{\epsilon}$-robust, since $\cost_\opt \geq 1$.

On the other hand, if $\ns$ only ever follows $\rob$, then $\cost_\ns = \cost_\rob$ and $\epsilon \cdot \cost_\adv \geq r(d+2)$. Since $\rob$ just follows the Steiner point of each nested body, we have $\cost_\rob \leq r + rd$, where the $rd$ comes from Theorem \ref{thm:steiner_performance} and the extra factor of $r$ arises from the triangle inequality applied to the $t = 1$ movement:
$$\|\svec_1 - \xvec_0\|_{\ell^2} \leq \|\svec_1 - \yvec\|_{\ell^2} + \|\yvec - \xvec_0\|_{\ell^2} \leq \|\svec_1 - \yvec\|_{\ell^2} + r.$$
Thus $\cost_\ns = \cost_\rob \leq r(d+1) \leq (1+\epsilon)\cost_\adv$, and the desired robustness also holds. 

Finally, suppose $\ns$ switches to $\rob$ at time $t \in [T]$; i.e., $\ns_1 = \adv_1, \ldots, \ns_{t-1} = \adv_{t-1}, \ns_t = \rob_t, \ldots, \ns_T = \rob_T$. We know that
$$\cost_\ns(1, t-1) = \cost_{\adv}(1, t-1) < \frac{r(d+2)}{\epsilon}$$
and since $K_t \subseteq B(\yvec, r)$,
$$\cost_\ns(t, t) = \|\rob_t - \adv_{t-1}\|_{\ell^2} \leq 2r$$
and finally
$$\cost_\ns(t+1, T) = \cost_{\rob}(t+1, T) \leq rd.$$
Thus in sum, 
$$\cost_\ns \leq \frac{r(d+2)}{\epsilon} + rd + 2r = \left(1 + \frac{1}{\epsilon}\right)r(d+2).$$
This gives both the robustness and consistency bounds, since $\epsilon \cdot \cost_\adv \geq r(d + 2)$ and $\cost_\opt \geq 1$. \jmlrQED

\section{Background from the geometry of normed vector spaces \label{appendix:nvs_geometry}}

In this appendix, we introduce some notions and results from the literature on the geometry of normed vector spaces, expanding on the brief definitions of the rectangular constant and the radial retraction given in the main text in Section \ref{section:main_bicompetitive_algorithm}. In the following definitions and results, $\calX = (X, \|\cdot\|)$ is an arbitrary real normed vector space. 

We begin by defining Birkhoff-James orthogonality, which generalizes the usual Hilbert space orthogonality.

\begin{definition}[{\cite[p.169]{birkhoff_orthogonality_1935}}; {\cite[p.265]{james_orthogonality_1947}}]
    $\xvec \in X$ is \textbf{Birkhoff-James orthogonal} to $\yvec \in X$, denoted $\xvec \perp \yvec$, if $\|\xvec\| \leq \|\xvec + \lambda \yvec\|$ for all $\lambda \in \R$.
\end{definition}

Note that, unlike orthogonality in Hilbert spaces, Birkhoff-James orthogonality is not generally symmetric. However, it is homogeneous.

\begin{lemma}[{\cite[p.265]{james_orthogonality_1947}}; {\cite[Remark 1]{joly_caracterisations_1969}}] \label{lemma:bj_orthog}
    If $\xvec \perp \yvec$, then $a \xvec \perp b \yvec$ for all $a, b \in \R$.
\end{lemma}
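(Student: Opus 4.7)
The plan is to prove this homogeneity property directly from the definition of Birkhoff-James orthogonality, by reducing the desired inequality $\|a\xvec\| \leq \|a\xvec + \mu b\yvec\|$ for all $\mu \in \R$ back to the assumed inequality $\|\xvec\| \leq \|\xvec + \lambda\yvec\|$ for all $\lambda \in \R$ via an appropriate substitution. The workhorse is simply absolute homogeneity of the norm.

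First I would dispose of the degenerate cases. If $a = 0$, then $\|a\xvec\| = 0$, and the inequality $0 \leq \|a\xvec + \mu b\yvec\|$ is immediate. If instead $b = 0$ (but $a$ is arbitrary), the claim reduces to $\|a\xvec\| \leq \|a\xvec\|$, an equality. So in both edge cases the conclusion holds without using the hypothesis.

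For the main case $a \neq 0$ and $b \neq 0$, I would fix an arbitrary $\mu \in \R$ and rewrite
\[
\|a\xvec + \mu b\yvec\| \;=\; |a| \cdot \left\|\xvec + \tfrac{\mu b}{a}\yvec\right\|.
\]
Setting $\lambda = \mu b / a \in \R$, the hypothesis $\xvec \perp \yvec$ yields $\|\xvec + \lambda\yvec\| \geq \|\xvec\|$. Multiplying through by $|a|$ gives $\|a\xvec + \mu b\yvec\| \geq |a| \cdot \|\xvec\| = \|a\xvec\|$, which is exactly what is needed. Since $\mu$ was arbitrary, this proves $a\xvec \perp b\yvec$.

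There is no real obstacle here; the only subtlety worth flagging is that because $a, b$ are both nonzero, the map $\mu \mapsto \mu b / a$ is a bijection of $\R$, so the hypothesis (which quantifies over all $\lambda \in \R$) indeed supplies the inequality for every choice of $\mu$. The proof is essentially a one-line computation after the case split.
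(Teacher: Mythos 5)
Your proof is correct and complete. The paper itself does not prove this lemma — it is cited as a known fact from James (1947) and Joly (1969) — so there is no in-paper argument to compare against, but your direct verification from the definition is the standard one-line calculation: handle $a=0$ and $b=0$ trivially, and in the generic case use absolute homogeneity of the norm to write $\|a\xvec + \mu b\yvec\| = |a|\,\|\xvec + (\mu b/a)\yvec\| \geq |a|\,\|\xvec\| = \|a\xvec\|$. The observation that $\mu \mapsto \mu b/a$ is a bijection of $\R$ (so the universal quantifier in the hypothesis is fully used) is exactly the right thing to flag.
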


Using Birkhoff-James orthogonality, we can formally define define the first constant we introduced in Section \ref{section:main_bicompetitive_algorithm}: the rectangular constant. It is motivated by the following observation: in a finite-dimensional inner product space, orthogonality of $\xvec$ and $\yvec$ implies that $\frac{\|\xvec\| + \|\yvec\|}{\|\xvec + \yvec\|} \leq \sqrt{2}$. In an arbitrary normed vector space, the upper bound $\sqrt{2}$ is replaced with the \emph{rectangular constant}, defined as follows using Birkhoff-James orthogonality.

\begin{definition}[{\cite[Definition 2]{desbiens_constante_1990}}; original from {\cite[Definition 2]{joly_caracterisations_1969}}]
    The \textbf{rectangular constant} $\mu(\calX)$ of a real normed vector space $\calX$ is defined as
    $$\mu(\calX) = \sup_{\xvec \perp \yvec} \frac{\|\xvec\| + \|\yvec\|}{\|\xvec + \yvec\|}.$$
\end{definition}

It is known that $\sqrt{2} \leq \mu(\calX) \leq 3$ \cite[Section II]{joly_caracterisations_1969}, and these bounds are tight: $\mu(\calX) = \sqrt{2}$ for any Hilbert space \cite[Example 1; Section III] {joly_caracterisations_1969}, and $\mu(\calX) = 3$ for ``nonuniformly nonsquare'' spaces such as $\ell^1$ and $\ell^\infty$ (\cite{baronti_revisiting_2021}). Moreover, $\mu(\ell^p) < 3$ for all $p \in (1, \infty)$. In fact, tighter bounds are known for the $\ell^p$ spaces: we review these in the following theorem.

\begin{theorem}[{\cite[Theorems 5.2, 5.4, 5.5]{baronti_revisiting_2021}}] \label{thm:mu_lp}
    For $1 < p \leq 2$,
    $$\mu(\ell^p) \leq \min\left\{\left(1 + \left(2^{1/(p-1)}-1\right)^{p-1}\right)^{1/p}, \sqrt{\frac{p}{p-1}}\right\}.$$
    For $p \geq 2$,
    $$\mu(\ell^p) \leq \left(1+\left(2^{p-1}-1\right)^{1/(p-1)}\right)^{(p-1)/p}.$$
\end{theorem}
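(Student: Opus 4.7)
The plan is to establish the bounds on $\mu(\ell^p)$ in three stages: reduce to the two-dimensional case, parametrize Birkhoff-James orthogonal pairs via differentiability of the norm, and optimize the resulting one-parameter ratio, handling the two stated bounds with complementary techniques.

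First, I would show $\mu(\ell^p) = \mu(\ell^p_2)$. The inequality $\mu(\ell^p_2) \leq \mu(\ell^p)$ is immediate from restricting the sup to coordinate 2-planes, which are isometric to $\ell^p_2$; the reverse direction is more delicate and requires showing that for any pair $\xvec \perp \yvec$ in $\ell^p$ with finite support, one can block-aggregate or rearrange entries into a two-coordinate configuration that preserves orthogonality and does not decrease the ratio $(\|\xvec\|_p + \|\yvec\|_p)/\|\xvec + \yvec\|_p$. This is the main structural obstacle.

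Second, in $\ell^p_2$ with $p \in (1,\infty)$, the norm is Gateaux differentiable away from the origin, and $\xvec \perp \yvec$ holds iff $|x_1|^{p-1}\sgn(x_1)y_1 + |x_2|^{p-1}\sgn(x_2)y_2 = 0$. Using Lemma \ref{lemma:bj_orthog} I would normalize $\|\xvec\|_p = 1$, take $\xvec = (s,t)$ with $s,t \geq 0$ and $s^p + t^p = 1$, and write $\yvec = r(t^{p-1}, -s^{p-1})$ for some $r > 0$. This reduces the problem to maximizing an explicit function of the parameters $s \in [0, 1]$ and $r > 0$.

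Third, for the bound $(1+(2^{1/(p-1)}-1)^{p-1})^{1/p}$ I would take first-order conditions in $(s,r)$ to locate the critical point, substitute back, and simplify; the polynomial structure of the extremality equations produces the exponents $1/(p-1)$ and $p-1$ in the stated expression. For the alternate bound $\sqrt{p/(p-1)}$ when $1 < p \leq 2$, I would combine a Clarkson- or Hanner-type inequality with the orthogonality constraints $\|\xvec \pm \yvec\|_p \geq \|\xvec\|_p$ to bound $(\|\xvec\|_p + \|\yvec\|_p)/\|\xvec + \yvec\|_p$ uniformly. For the $p \geq 2$ case the cleanest route is via the duality $\mu(\ell^p) = \mu(\ell^{p^*})$ with $p^* = p/(p-1)$, which transforms the $p^* \in (1,2]$ bound into the claimed $p \geq 2$ expression after algebraic manipulation. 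The residual algebraic difficulty in this stage benefits from the substitution $u = t^{p-1}$, which linearizes the first-order conditions and exposes the stated closed forms.
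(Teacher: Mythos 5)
First, note that the paper does not prove this statement: Theorem~\ref{thm:mu_lp} is quoted directly from \cite[Theorems 5.2, 5.4, 5.5]{baronti_revisiting_2021} and used as a black box, so there is no in-paper argument to compare your proposal against. You are reconstructing an external result, and the relevant question is whether your sketch would actually work.

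As written, it would not, because of two gaps. The first is the reduction $\mu(\ell^p) = \mu(\ell^p_2)$. You correctly flag it as ``the main structural obstacle'' but then dismiss it with ``block-aggregate or rearrange entries,'' which is not an argument. Birkhoff-James orthogonality $\xvec \perp \yvec$ in $\ell^p$ ($1<p<\infty$) is equivalent to $\langle J(\xvec), \yvec\rangle = 0$ where $J$ is the (unique, since the norm is smooth) duality map; it is not at all evident that aggregating coordinates preserves the vanishing of this pairing while also not decreasing $(\|\xvec\| + \|\yvec\|)/\|\xvec + \yvec\|$. This reduction needs a real proof, and it is where much of the substance of the cited reference lives.

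The second and more serious gap is the claim $\mu(\ell^p) = \mu(\ell^{p^*})$, which you invoke to obtain the $p \geq 2$ bound for free. That identity is not available. For a smooth, strictly convex, reflexive space one checks that $J(\xvec) \perp J(\yvec)$ in $\ell^{p^*}$ if and only if $\yvec \perp \xvec$ in $\ell^p$: the duality map \emph{reverses}, rather than preserves, the ordered-pair relation over which the supremum defining $\mu$ is taken, and since $\perp$ is not symmetric for $p \neq 2$ this matters. Moreover $J$ is nonlinear, so $\|J(\xvec) + J(\yvec)\|_{p^*}$ has no simple relation to $\|\xvec + \yvec\|_p$. The fact that the stated $p \geq 2$ bound is the algebraic ``dual'' of the first $1 < p \leq 2$ bound is a symmetry of the closed-form expressions, not evidence of a duality of the constant; the cited reference treats the two regimes by separate direct estimates. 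The middle part of your sketch --- the two-dimensional parametrization via the smooth-norm characterization of $\perp$, and the Hanner/Clarkson route to the $\sqrt{p/(p-1)}$ bound --- is a correct and natural setup, but by itself it bridges neither gap.
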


Together, these constitute an upper bound on $\mu(\ell^p)$ that attains a (tight) minimum of $\sqrt{2}$ at $p = 2$, and that continuously increases toward 3 as $p \to \infty$ and $p \to 1$. 

We now reiterate the definition of the radial retraction and its Lipschitz constant given in Section \ref{section:main_bicompetitive_algorithm}. 

\begin{definition}[\cite{rieffel_lipschitz_2006}]
    On a normed vector space $\calX = (X, \|\cdot\|)$, the radial retraction  $\rho(\cdot\,; r) : X \to B(\bv{0}, r)$ is the metric projection onto the closed ball of radius $r \geq 0$:
    $$\rho(\xvec; r) = \begin{cases} \xvec & \text{if $\|\xvec\| \leq r$} \\ r\frac{\xvec}{\|\xvec\|} & \text{if $\|\xvec\| > r$.} \end{cases}$$
    
    We define $k(\calX)$ to be the Lipschitz constant of $\rho(\cdot\,; 1)$, i.e., the smallest real number satisfying
    $$\|\rho(\xvec; 1) - \rho(\yvec; 1)\| \leq k(\calX)\|\xvec - \yvec\|$$
    for all $\xvec, \yvec \in X$.
    
\end{definition}

It holds that $k(\calX)$ is bounded between 1 and 2 in any normed vector space $\calX$ (\cite{thele_results_1974}). Moreover, $k(\calX)$ is identically the Lipschitz constant of $\rho(\cdot\,; r)$ for any $r > 0$ (\cite{rieffel_lipschitz_2006}). To see that this is the case, observe that $\rho(\xvec; r) = r\cdot\rho(\frac{\xvec}{r}; 1)$; it then follows that
$$\|\rho(\xvec; r) - \rho(\yvec; r)\| = r\left\|\rho\left(\frac{\xvec}{r}; 1\right) - \rho\left(\frac{\yvec}{r}; 1\right)\right\| \leq k(\calX)\|\xvec - \yvec\|.$$
Thus $k(\calX)$ is an upper bound on the Lipschitz constant of $\rho(\xvec; r)$ for general $r > 0$. Similar reasoning shows that $\rho(\xvec; r)$ can have no smaller Lipschitz constant than $k(\calX)$; so $k(\calX)$ is the Lipschitz constant for all $\rho(\xvec; r)$, $r > 0$. 

We conclude this section with a result relating $k(\calX)$ with $\mu(\calX)$. 

\begin{proposition} \label{prop:lipschitz_rectangular_relation}
    On a real normed vector space $\calX = (X, \|\cdot\|)$, it holds that $k(\calX) \leq \mu(\calX)$.
\end{proposition}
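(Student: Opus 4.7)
The plan is to establish the Lipschitz bound $\|\rho(\xvec;1) - \rho(\yvec;1)\| \leq \mu(\calX)\|\xvec - \yvec\|$ for all $\xvec, \yvec \in X$, from which $k(\calX) \leq \mu(\calX)$ follows by the definition of $k(\calX)$. First I would reduce to the case where both $\xvec$ and $\yvec$ lie outside the open unit ball: if both lie inside, $\rho(\cdot;1)$ is the identity and the ratio is $1 \leq \mu(\calX)$; if exactly one is inside, the segment $[\xvec,\yvec]$ meets the unit sphere at a single point $\zvec$, and the triangle inequality through $\rho(\zvec;1) = \zvec$ reduces the bound to the remaining case.

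For the main case, my strategy is a local-to-global argument that controls the infinitesimal change of $\rho(\cdot;1) = \cdot/\|\cdot\|$ at each point outside the ball. Fix such an $\xvec$, set $\hat{\xvec} := \xvec/\|\xvec\|$, and let $\phi$ be a supporting linear functional at $\xvec$, guaranteed by Hahn--Banach and satisfying $\phi(\xvec) = \|\xvec\|$ together with $|\phi(\zvec)| \leq \|\zvec\|$ for all $\zvec$; in particular $\phi(\hat{\xvec}) = 1$. For any direction $\hvec$, set $\wvec := \hvec - \phi(\hvec)\hat{\xvec}$, so that $\hvec = \phi(\hvec)\hat{\xvec} + \wvec$ and $\phi(\wvec) = 0$. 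Then $\|\hat{\xvec} + c\wvec\| \geq |\phi(\hat{\xvec} + c\wvec)| = 1$ for all $c \in \R$, which means $\hat{\xvec} \perp \wvec$ in the Birkhoff--James sense. By Lemma \ref{lemma:bj_orthog}, $\phi(\hvec)\hat{\xvec} \perp \wvec$, and the definition of the rectangular constant then yields
\[
    \mu(\calX)\|\hvec\| \;=\; \mu(\calX)\,\|\phi(\hvec)\hat{\xvec} + \wvec\| \;\geq\; |\phi(\hvec)| + \|\wvec\|,
\]
so $\|\wvec\| \leq \mu(\calX)\|\hvec\|$. A direct computation from the explicit formula $\rho(\cdot;1) = \cdot/\|\cdot\|$ shows that the one-sided directional derivative $\lim_{t\downarrow 0}[\rho(\xvec + t\hvec;1) - \rho(\xvec;1)]/t$ equals $\wvec/\|\xvec\|$, whose norm is at most $\mu(\calX)\|\hvec\|/\|\xvec\| \leq \mu(\calX)\|\hvec\|$ since $\|\xvec\| \geq 1$.

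Integrating this infinitesimal bound along the straight segment from $\xvec$ to $\yvec$, justified by the fact that $\rho(\cdot;1)$ is locally Lipschitz and hence absolutely continuous along any segment, yields the desired global bound. The main obstacle will be handling the non-smoothness of $\|\cdot\|$ in a general normed space, since the Gateaux derivative of the norm need not exist everywhere. This is addressed by working throughout with the Hahn--Banach supporting functional $\phi$ in place of a classical derivative: the Birkhoff--James orthogonality computation above uses only the defining properties of $\phi$ and so goes through verbatim, and the integration step can be justified directly by chaining the resulting finite-difference bounds along the segment.
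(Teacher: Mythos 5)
Your approach is genuinely different from the paper's. The paper invokes Thele's explicit characterization of the Lipschitz constant,
\[
k(\calX) = \sup_{\xvec \perp \yvec,\ \yvec \neq \vec{0},\ \lambda \in \R} \frac{\|\yvec\|}{\|\yvec - \lambda\xvec\|},
\]
uses homogeneity of Birkhoff--James orthogonality to rewrite this as $\sup_{\xvec\perp\yvec}\|\yvec\|/\|\xvec+\yvec\|$, and then observes $\|\yvec\|\leq\|\xvec\|+\|\yvec\|$ to conclude $k(\calX)\leq\mu(\calX)$ in two lines. You instead try to prove the Lipschitz bound $\|\rho(\xvec;1)-\rho(\yvec;1)\|\leq\mu(\calX)\|\xvec-\yvec\|$ from first principles via a directional-derivative estimate and a local-to-global integration argument. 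Your estimate $\|\wvec\|\leq\mu(\calX)\|\hvec\|$ is indeed the geometric heart of Thele's theorem, so in spirit you are re-deriving the needed piece of that result rather than citing it; this is a legitimate, more self-contained route, and the Birkhoff--James orthogonality step is exactly right.

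However, there are concrete gaps you would need to close. First, your case split is not exhaustive: when both $\xvec$ and $\yvec$ lie outside the unit ball, the segment $[\xvec,\yvec]$ may still cross the interior (e.g.\ two nearly antipodal points), where the formula $\rho(\cdot;1)=\cdot/\|\cdot\|$ does not hold. You must either add the case where the segment enters and exits the ball (splitting at the two boundary crossings and using that $\mu(\calX)\geq 1$), or better, verify the one-sided derivative bound at \emph{every} point including interior and boundary points (inside, $\rho$ is the identity with operator norm $1\leq\mu(\calX)$). Second, the supporting functional $\phi$ must be chosen to depend on the direction $\hvec$: the one-sided derivative of the norm is $N'(\xvec;\hvec)=\sup_{\phi\in\partial\|\cdot\|(\xvec)}\phi(\hvec)$, so your formula $\rho'(\xvec;\hvec)=\wvec/\|\xvec\|$ only holds if $\phi$ achieves this supremum; for a generic supporting $\phi$ the formula is wrong. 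Fortunately the orthogonality argument works for any supporting functional, so choosing the maximizer for each $\hvec$ is harmless, but the fix must be stated. Third, the integration step needs care: in a general (possibly infinite-dimensional) normed space, a Lipschitz path need not be differentiable a.e.\ (RNP can fail), so you cannot simply integrate the derivative. The standard remedy is to compose with a norm-attaining dual functional $\psi$ for $\rho(\yvec;1)-\rho(\xvec;1)$, reducing to a real-valued function $t\mapsto\psi(\rho((1-t)\xvec+t\yvec;1))$ with one-sided derivatives bounded in absolute value by $\mu(\calX)\|\yvec-\xvec\|$, to which a one-sided mean value inequality applies. With these three repairs your proof goes through, but it is substantially longer than the paper's; given that the paper needs $\mu(\calX)$ elsewhere anyway, it is cleaner to import Thele's formula directly.
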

\begin{proof}
    \cite[Theorem 1]{thele_results_1974} characterizes $k(\calX)$ as follows:
    \begin{equation} \label{eq:thele_lipschitz}
        k(\calX) = \sup_{\xvec \perp \yvec, \yvec \neq \vec{0}, \lambda \in \R} \frac{\|\yvec\|}{\|\yvec - \lambda \xvec\|}.
    \end{equation}
    Since, by Lemma \ref{lemma:bj_orthog}, Birkhoff-James orthogonality is homogeneous, it is straightforward to see that (\ref{eq:thele_lipschitz}) can be equivalently expressed as
    $$k(\calX) = \sup_{\xvec \perp \yvec, \yvec \neq \vec{0}} \frac{\|\yvec\|}{\|\xvec+\yvec\|}.$$
    Then it is clear that
    $$k(\calX) = \sup_{\xvec \perp \yvec, \yvec \neq \vec{0}} \frac{\|\yvec\|}{\|\xvec+\yvec\|} \leq \sup_{\xvec \perp \yvec} \frac{\|\xvec\| + \|\yvec\|}{\|\xvec + \yvec\|} = \mu(\calX).$$
\end{proof}

\section{Proof of Theorem \ref{theorem:consistent_robust_general} \label{appendix:main_general_theorem_proof}}

\subsection{Geometric lemmas \label{appendix:geometric_lemmas}}
Before presenting the analysis of Algorithm \ref{alg:interp}, we take a brief foray into the geometric theory of normed vector spaces, presenting and proving some lemmas that will be helpful in proving the bicompetitive bound given in Theorem \ref{theorem:consistent_robust_general}. The results in this section depend heavily on the definitions and results introduced in Appendix \ref{appendix:nvs_geometry}.

The first lemma characterizes (a modified form of) the radial retraction as a metric projection onto the boundary of a closed ball.

\begin{lemma} \label{lemma:metric_proj}
    Let $(X, \|\cdot\|)$ be a normed vector space, and consider arbitrary $r \geq 0$, $\xvec \in X$, and $\yvec \in X \setminus \{\xvec\}$. Define $\hat{\yvec} = \xvec+r\frac{\yvec-\xvec}{\|\yvec-\xvec\|}$. Then $\|\yvec - \hat{\yvec}\| \leq \|\yvec - \wvec\|$ for all $\wvec \in \partial B(\xvec, r)$.
\end{lemma}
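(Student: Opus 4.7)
The plan is to reduce the claim to a direct application of the reverse triangle inequality. First, I would simplify the expression $\|\yvec - \hat{\yvec}\|$ by substituting the definition of $\hat{\yvec}$. Specifically,
\begin{equation*}
\yvec - \hat{\yvec} = (\yvec - \xvec) - r\,\frac{\yvec - \xvec}{\|\yvec - \xvec\|} = (\yvec - \xvec)\left(1 - \frac{r}{\|\yvec - \xvec\|}\right),
\end{equation*}
so that taking norms yields $\|\yvec - \hat{\yvec}\| = \bigl|\|\yvec - \xvec\| - r\bigr|$.

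Next, I would bound $\|\yvec - \wvec\|$ for an arbitrary $\wvec \in \partial B(\xvec, r)$, i.e., any $\wvec$ with $\|\wvec - \xvec\| = r$. By the reverse triangle inequality,
\begin{equation*}
\|\yvec - \wvec\| \geq \bigl|\|\yvec - \xvec\| - \|\wvec - \xvec\|\bigr| = \bigl|\|\yvec - \xvec\| - r\bigr|.
\end{equation*}
Combining this with the first step gives $\|\yvec - \hat{\yvec}\| \leq \|\yvec - \wvec\|$, as desired.

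There is essentially no main obstacle here: the lemma is a standard fact about radial projections onto spheres in normed spaces, and the proof is a two-line computation. The only minor point of care is that the formula for $\hat{\yvec}$ requires $\yvec \neq \xvec$ so that the denominator $\|\yvec - \xvec\|$ is nonzero, which is precisely the hypothesis $\yvec \in X \setminus \{\xvec\}$. No convexity, smoothness, or finite-dimensionality assumption is needed; the argument holds verbatim in an arbitrary normed space of any dimension, which is why the lemma will be useful in the subsequent proof of Theorem~\ref{theorem:consistent_robust_general} in the fully general setting.
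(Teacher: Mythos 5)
Your proof is correct and takes essentially the same approach as the paper: both reduce the claim to the reverse triangle inequality $\|\yvec - \wvec\| \geq \bigl|\|\yvec - \xvec\| - r\bigr|$ and the observation that $\|\yvec - \hat{\yvec}\| = \bigl|\|\yvec - \xvec\| - r\bigr|$ by collinearity (which you derive algebraically and the paper states geometrically). The paper additionally normalizes to $\xvec = \vec{0}$ and treats $r=0$ as a separate case, but neither simplification is strictly necessary in your formulation.
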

\begin{proof}
    It suffices to consider the case when $\xvec = \bv{0}$. If $r = 0$, $\partial B(\vec{0}, r) = \{\vec{0}\}$, so the result is clear. Otherwise, fix arbitrary $\wvec \in \partial B(\bv{0}, r)$ and observe
    \begin{align*}
        \|\yvec - \wvec\| &\geq |\|\yvec\| - \|\wvec\|| & \text{by the triangle inequality}\\
        &= |\|\yvec\| - r| \\
        &= \left|\|\yvec\| - \|\hat{\yvec}\|\right| \\
        &= \|\yvec - \hat{\yvec}\|
    \end{align*}
    where the last step follows from collinearity of $\yvec, \hat{\yvec}$, and $\bv{0}$.
\end{proof}

The second lemma generalizes the following geometric fact in the Euclidean plane to an arbitrary normed vector spaces: given a triangle $\avec\bvec\cvec$ in $(\R^2, \|\cdot\|_{\ell^2})$, and points $\xvec \in [\avec, \bvec], \yvec \in [\avec, \cvec]$ with $\|\xvec - \bvec\|_{\ell^2} = \|\yvec - \cvec\|_{\ell^2}$, it holds that $\|\xvec - \yvec\|_{\ell^2} \leq \|\bvec - \cvec\|_{\ell^2}$. In the general setting, this becomes a statement about the distance between the radial retractions of a single point onto two balls of the same radius with different centers.

\begin{lemma} \label{lemma:triangle_end_balls}
    Let $(X, \|\cdot\|)$ be a normed vector space, and fix arbitrary $\avec, \bvec, \cvec \in X$ and $r \geq 0$. Define $\xvec = \bvec + \rho(\avec - \bvec; r)$ and $\yvec = \cvec + \rho(\avec - \cvec; r)$. Then $\|\xvec - \yvec\| \leq \|\bvec - \cvec\|$. 
\end{lemma}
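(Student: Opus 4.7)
The plan is to split into cases according to which branch of the piecewise definition of the radial retraction is active for $\avec-\bvec$ and $\avec-\cvec$, namely whether each vector lies inside the closed ball of radius $r$. The case $\|\avec-\bvec\|\leq r$ and $\|\avec-\cvec\|\leq r$ is immediate, since then $\xvec=\yvec=\avec$. In the asymmetric case when exactly one of the inequalities holds---say $\|\avec-\bvec\|\leq r<\|\avec-\cvec\|$---we have $\xvec=\avec$, while $\yvec$ is the point on the segment from $\cvec$ to $\avec$ at distance $r$ from $\cvec$, so $\|\xvec-\yvec\|=\|\avec-\cvec\|-r$. Two applications of the (reverse) triangle inequality then give $\|\avec-\cvec\|-r\leq\|\avec-\cvec\|-\|\avec-\bvec\|\leq\|\bvec-\cvec\|$, and the symmetric case is identical.

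The main case is when $\|\avec-\bvec\|>r$ and $\|\avec-\cvec\|>r$. Setting $\lambda\coloneqq r/\|\avec-\bvec\|$ and $\mu\coloneqq r/\|\avec-\cvec\|$, both in $[0,1)$, the retraction formula expresses $\xvec$ and $\yvec$ as convex combinations $\xvec=(1-\lambda)\bvec+\lambda\avec$ and $\yvec=(1-\mu)\cvec+\mu\avec$. A direct rearrangement (adding and subtracting $(1-\lambda)\cvec$) yields the identity
$$\xvec-\yvec=(1-\lambda)(\bvec-\cvec)+(\lambda-\mu)(\avec-\cvec).$$
I would then apply the triangle inequality and bound the second term via the reverse triangle inequality:
$$|\lambda-\mu|\,\|\avec-\cvec\|=\frac{r\,\bigl|\|\avec-\cvec\|-\|\avec-\bvec\|\bigr|}{\|\avec-\bvec\|}\leq\frac{r\,\|\bvec-\cvec\|}{\|\avec-\bvec\|}=\lambda\,\|\bvec-\cvec\|.$$
Combining these gives $\|\xvec-\yvec\|\leq(1-\lambda)\|\bvec-\cvec\|+\lambda\|\bvec-\cvec\|=\|\bvec-\cvec\|$, completing the case and hence the proof.

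The principal conceptual obstacle is that, in a general normed space, the radial retraction is \emph{not} $1$-Lipschitz---its Lipschitz constant $k(\calX)$ can be as large as $2$---so the result cannot be obtained by simply declaring $\rho$ nonexpansive. What makes the argument go through is the fact that the two balls $B(\bvec,r)$ and $B(\cvec,r)$ share the same radius $r$: this causes the centers to appear \emph{linearly} in the convex-combination representations of $\xvec$ and $\yvec$, enabling the algebraic rearrangement above and bypassing any appeal to Lipschitz bounds on $\rho$. I expect the mixed case to require the most care in bookkeeping, but no subtlety beyond the triangle inequality.
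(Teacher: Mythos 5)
Your proof is correct, and in the main case (both $\|\avec-\bvec\|>r$ and $\|\avec-\cvec\|>r$) it takes a genuinely different route from the paper's. The paper reduces WLOG to $\|\avec-\bvec\|\geq\|\avec-\cvec\|$, constructs the auxiliary point $\zvec=\yvec+\lambda(\bvec-\cvec)\in[\avec,\bvec]$, observes that $\xvec$ is the radial projection of $\zvec$ onto $\partial B(\bvec,r)$, and then combines its metric-projection Lemma \ref{lemma:metric_proj} with collinearity of $\yvec$, $\zvec$, and $\yvec+\bvec-\cvec$ and a final triangle inequality. You bypass Lemma \ref{lemma:metric_proj} and the auxiliary point entirely: writing $\xvec=(1-\lambda)\bvec+\lambda\avec$ and $\yvec=(1-\mu)\cvec+\mu\avec$ with $\lambda=r/\|\avec-\bvec\|$ and $\mu=r/\|\avec-\cvec\|$, you obtain the algebraic decomposition $\xvec-\yvec=(1-\lambda)(\bvec-\cvec)+(\lambda-\mu)(\avec-\cvec)$, and the reverse triangle inequality bounds the second term by $\lambda\|\bvec-\cvec\|$ so the two pieces sum to exactly $\|\bvec-\cvec\|$. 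Your argument is shorter and more elementary, needing only the triangle inequality; the paper's geometric construction reuses machinery (Lemma \ref{lemma:metric_proj}) set up for its other radial-retraction lemmas, but that buys nothing here. Your boundary and mixed cases match the paper's in spirit, and your concluding remark---that the equal radius $r$ is what keeps the centers appearing linearly, so no Lipschitz bound on $\rho$ is needed---is exactly the right diagnosis of why the lemma holds despite $\rho$ not being nonexpansive in general norms.
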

\begin{proof}
    We may assume without loss of generality that $\|\avec - \bvec\| \geq \|\avec - \cvec\|$. If $\bvec = \cvec$, $\|\xvec-\yvec\| = 0 = \|\bvec-\cvec\|$. Thus we restrict to the case where $\bvec \neq \cvec$ and distinguish cases based on the value of $r$. We may further restrict to the case where $\|\avec - \cvec\| > 0$, as the case $\avec = \cvec$ is trivial.

    If $r = 0$, then $\xvec = \bvec$ and $\yvec = \cvec$. Thus $\|\xvec - \yvec\| = \|\bvec - \cvec\|$. On the other hand, if $r \geq \|\avec - \bvec\|$, then $r \geq \|\avec - \cvec\|$ as well, so $\xvec = \yvec = \avec$, and certainly $\|\xvec - \yvec\| = 0 \leq \|\bvec - \cvec\|$. 
    
    Next, suppose $\|\avec - \cvec\| \leq r < \|\avec - \bvec\|$. Then $\yvec = \avec$, so $\|\xvec - \yvec\| = \|\xvec - \avec\|$. Moreover, $\|\xvec - \bvec\| = r \geq \|\avec - \cvec\|$. Then by the triangle inequality,
    \begin{align*}
    \|\bvec - \cvec\| &\geq |\|\avec - \bvec\| - \|\avec - \cvec\|| \\
    &= |\|\avec - \xvec\| + \|\xvec - \bvec\| - \|\avec - \cvec\|| \\
    &\geq \|\avec - \xvec\| \\
    &= \|\xvec - \yvec\|.
    \end{align*}

    Finally, suppose $0 < r < \|\avec - \cvec\|$, and define $\lambda = 1 - \frac{r}{\|\avec - \cvec\|}$. Since $\yvec + \bvec - \cvec = \bvec + r\frac{\avec-\cvec}{\|\avec-\cvec\|}$, we know that $\yvec + \bvec - \cvec \in \partial B(\bvec, r)$. Observe moreover that 
    $$\zvec \coloneqq \yvec + \lambda(\bvec-\cvec) = \bvec + r\frac{\avec - \bvec}{\|\avec - \cvec\|} \in [\avec, \bvec]$$
    and $\zvec \neq \bvec$ by assumption that $r > 0$. Thus $\frac{\zvec-\bvec}{\|\zvec-\bvec\|} = \frac{\avec - \bvec}{\|\avec - \bvec\|}$, so $\xvec = \bvec + r\frac{\zvec-\bvec}{\|\zvec-\bvec\|}$. Thus:
    \begin{align*}
        \|\bvec - \cvec\| &= \|(\yvec + \bvec - \cvec) - \yvec\| \\
        &= \|(\yvec+\bvec-\cvec) - \zvec\| + \|\zvec - \yvec\| &\text{by collinearity of $\yvec, \zvec, \yvec+\bvec-\cvec$} \\
        &\geq \|\xvec - \zvec\| + \|\zvec - \yvec\| &\text{applying Lemma \ref{lemma:metric_proj}} \tageq\label{eq:end_ball_lemma_step} \\
        &\geq \|\xvec - \yvec\| &\text{by triangle inequality.}
    \end{align*}
    where, in (\ref{eq:end_ball_lemma_step}), $\xvec, \yvec, \hat{\yvec}, \wvec,$ and $r$ in Lemma \ref{lemma:metric_proj} are instantiated during its invocation with this proof's $\bvec, \zvec, \xvec, (\yvec + \bvec - \cvec),$ and $r$, respectively.
\end{proof}

The next geometric lemma provides a bound on the total distance traveled first between two points on a sphere, and then from the second point to a scaled version thereof, in terms of the rectangular constant and the distance between the initial and final points.


\begin{lemma} \label{lemma:rect_const_ball}
    Let $(X, \|\cdot\|)$ be a normed vector space, and let $t > 1$, $r > 0$, and $\xvec, \yvec \in \partial B(\bv{0}, r)$. Then
    $$\|\yvec - \xvec\| + (t-1)\|\yvec\| \leq \mu(\calX)\|t\yvec - \xvec\|.$$
\end{lemma}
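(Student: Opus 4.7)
The plan is to find a Birkhoff-James orthogonal decomposition that matches the structure of the inequality. Setting $\avec = \yvec - \xvec$ and $\bvec = (t-1)\yvec$, the desired bound is equivalent to $\|\avec\| + \|\bvec\| \leq \mu(\calX)\|\avec + \bvec\|$, since $\avec + \bvec = t\yvec - \xvec$ and $\|\bvec\| = (t-1)\|\yvec\|$. This would follow immediately from the definition of $\mu(\calX)$ if $\avec \perp \bvec$, but this orthogonality does not hold in general. My strategy is to project $\bvec$ onto the line $\spn\{\avec\}$ and work with the resulting decomposition $\bvec = \beta^*\avec + \bvec^\perp$ with $\bvec^\perp \perp \avec$.

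The main technical step, and what I expect to be the principal obstacle, is showing that the projection coefficient $\beta^*$ can be chosen non-negative. To this end, I would define $\beta^*$ as a minimizer of $f(\beta) = \|\bvec - \beta\avec\|$, which exists by convexity, continuity, and coercivity of $f$, and prove that $f(-\epsilon) \geq f(0)$ for every $\epsilon > 0$. Computing directly, $f(-\epsilon) = \|(t-1+\epsilon)\yvec - \epsilon\xvec\|$, and the reverse triangle inequality gives $\|(t-1+\epsilon)\yvec - \epsilon\xvec\| \geq (t-1+\epsilon)\|\yvec\| - \epsilon\|\xvec\| = (t-1)r = f(0)$, where the crucial cancellation uses $\|\xvec\| = \|\yvec\| = r$ and $t > 1$. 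Since $f$ is convex with $f(\beta) \geq f(0)$ on $(-\infty, 0]$, its minimum is attained at some $\beta^* \geq 0$, and Birkhoff-James orthogonality yields $\bvec^\perp \coloneqq \bvec - \beta^*\avec \perp \avec$ and hence $\bvec^\perp \perp (1+\beta^*)\avec$ by homogeneity (Lemma \ref{lemma:bj_orthog}).

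The rest of the argument applies the defining property of the rectangular constant to the orthogonal pair $\bvec^\perp$ and $(1+\beta^*)\avec$, whose sum is $\bvec^\perp + (1+\beta^*)\avec = \avec + \bvec$, yielding $\|\bvec^\perp\| + (1+\beta^*)\|\avec\| \leq \mu(\calX)\|\avec + \bvec\|$. Adding $\|\avec\|$ to the triangle inequality $\|\bvec\| = \|\bvec^\perp + \beta^*\avec\| \leq \|\bvec^\perp\| + \beta^*\|\avec\|$ (valid because $\beta^* \geq 0$) gives $\|\avec\| + \|\bvec\| \leq \|\bvec^\perp\| + (1+\beta^*)\|\avec\| \leq \mu(\calX)\|\avec + \bvec\|$. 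Substituting back the definitions of $\avec$ and $\bvec$ yields $\|\yvec - \xvec\| + (t-1)\|\yvec\| \leq \mu(\calX)\|t\yvec - \xvec\|$, as required.

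A brief remark on what makes this decomposition work: a natural first attempt is instead to decompose $\xvec$ with respect to $\spn\{\yvec\}$ (writing $\xvec = c^*\yvec + \zvec$ with $\zvec \perp \yvec$), but the resulting projection coefficient $c^*$ can exceed $1$ in non-strictly-convex spaces such as $\ell^\infty$, which breaks the clean triangle-inequality upgrade in the final step. Projecting $\bvec$ onto $\avec$ sidesteps this because the norm equality $\|\xvec\| = \|\yvec\|$ translates directly---via the reverse triangle inequality---into the lower bound $f(-\epsilon) \geq f(0)$, forcing $\beta^* \geq 0$ regardless of the geometry of the unit ball.
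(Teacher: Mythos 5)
Your proof is correct, and it takes a genuinely different route from the paper's. The paper constructs a Hahn--Banach support functional $f$ at $\yvec$ and decomposes $\xvec = \tfrac{s}{r}\yvec + \hvec$ with $s = f(\xvec)$ and $\hvec \in \ker f$, so that the orthogonality $\yvec \perp \hvec$ is immediate from the supporting-hyperplane property of $f$; the crucial bound $s/r \leq 1$ then falls out of $\|f\|_{X^*} = 1$ and $\|\xvec\| = r$. You instead decompose $(t-1)\yvec$ along the one-dimensional ray through $\yvec - \xvec$ via a metric-projection argument, and obtain the sign control $\beta^* \geq 0$ from the reverse triangle inequality applied to $\|(t-1+\epsilon)\yvec - \epsilon\xvec\|$, using $\|\xvec\| = \|\yvec\|$. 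Both routes manufacture a Birkhoff--James orthogonal pair whose sum is $t\yvec - \xvec$ and then invoke the definition of $\mu(\calX)$, but the paper gets the needed orientation of orthogonality ($\yvec \perp \hvec$) from the functional, whereas you get it ($\bvec^\perp \perp \avec$) from the minimizing property of $\beta^*$. Your argument is more elementary---no Hahn--Banach, just convexity of $\beta \mapsto \|\bvec - \beta\avec\|$ and the reverse triangle inequality---and is therefore slightly more self-contained; the paper's version avoids the one-dimensional minimization but at the cost of invoking a supporting functional. One small point worth noting explicitly: if $\xvec = \yvec$ then $\avec = \vec{0}$ and the coercivity argument for existence of $\beta^*$ degenerates, but that case is trivial since the inequality reduces to $(t-1)r \leq \mu(\calX)(t-1)r$ with $\mu(\calX) \geq \sqrt{2}$. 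Your closing remark about the metric projection of $\xvec$ onto $\spn\{\yvec\}$ failing in $\ell^\infty$ is a fair observation, but note it critiques a third approach that neither you nor the paper actually uses: the paper's functional-based decomposition of $\xvec$ yields $\yvec \perp \hvec$ (not $\hvec \perp \yvec$), and the coefficient bound there comes from the dual-norm estimate, not from any metric-projection argument.
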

\begin{proof}
    By a corollary of the Hahn-Banach theorem \cite[Chapter 3, Corollary 7]{bollobas_linear_1999}, there exists a \emph{support functional} $f \in X^*$ at $\yvec$, i.e., some bounded linear functional $f : X \to \R$, with $\|f\|_{X^*} = 1$, $f(\yvec) = \|\yvec\| = r$, and the property that the hyperplane $H(r) \coloneqq \{\zvec \in X : f(\zvec) = r\}$ contains no points in $\text{int}(B(\bv{0}, r))$. Note that we can equivalently write $H(r)$ in affine subspace form $H(r) = \yvec + \ker(f) = \{\zvec \in X : \zvec = \yvec + \hvec, \hvec \in \ker(f)\}$, by linearity of $f$. The fact that $H(r)$ contains no points in the interior of $B(\bv{0}, r)$ means that $\yvec \perp \hvec$ for all $\hvec \in \ker(f)$.
    
    Define $s = f(\xvec)$, and note that since $\|\xvec\| = r$ and $\|f\|_{X^*} = 1$, we must have $s \leq r$. Then define $\zvec = \frac{s}{r}\yvec$, and observe $H(s) = \xvec + \ker(f) = \zvec + \ker(f)$. Thus $\xvec = \zvec + \hvec$ for some specific $\hvec \in \ker(f)$. By homogeneity of Birkhoff-James orthogonality (Lemma \ref{lemma:bj_orthog}), it follows that $(t-\frac{s}{r})\yvec \perp -\hvec$. As such,
    \begin{align*}
        \|\yvec - \xvec\| + (t-1)\|\yvec\| &\leq \|\yvec - \zvec\| + \|\xvec - \zvec\| + (t-1)\|\yvec\| \\
        &= \left(1 - \frac{s}{r}\right)\|\yvec\| + \|\hvec\| + (t-1)\|\yvec\| \\
        &= \|-\hvec\| + \left(t-\frac{s}{r}\right)\|\yvec\| \\
        &\leq \mu(\calX)\left\|-\hvec + \left(t-\frac{s}{r}\right)\yvec\right\| \\
        &= \mu(\calX)\|t\yvec - \xvec\|.
    \end{align*}
\end{proof}

Finally, we present a lemma building upon Lemma \ref{lemma:rect_const_ball} that will be indispensable for the consistency analysis of Algorithm \ref{alg:interp}.

\begin{lemma} \label{lemma:ball_projection_lemma}
    Let $(X, \|\cdot\|)$ be a normed vector space, and fix arbitrary $r \geq 0$, $\wvec, \yvec \in X$, and $\xvec \in X\setminus \mathrm{int}(B(\wvec, r))$. Define $\hat{\xvec} = \wvec + \rho(\xvec - \wvec; r)$ and $\hat{\yvec} = \wvec + \rho(\yvec - \wvec; r)$. Then
    $$\|\hat{\yvec} - \hat{\xvec}\| + \|\yvec - \hat{\yvec}\| \leq \mu(\calX)\|\yvec - \xvec\| + \|\xvec - \hat{\xvec}\|.$$
\end{lemma}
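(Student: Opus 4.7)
The plan is to translate so that $\wvec = \bv{0}$ (the statement is translation-invariant) and handle the trivial $r = 0$ case by the triangle inequality, then split into two main cases based on whether $\yvec$ lies in the ball.

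First, if $\yvec \in B(\bv{0}, r)$, then $\hat{\yvec} = \yvec$ and the $\|\yvec - \hat{\yvec}\|$ term vanishes, so the target reduces to $\|\hat{\yvec} - \hat{\xvec}\| \leq \mu(\calX)\|\yvec - \xvec\| + \|\xvec - \hat{\xvec}\|$, which follows immediately from the triangle inequality together with $\mu(\calX) \geq 1$.

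Second, if $\|\yvec\| > r$, then both $\hat{\xvec}$ and $\hat{\yvec}$ lie on $\partial B(\bv{0}, r)$. Setting $\sigma = \|\xvec\| \geq r$, $\tau = \|\yvec\| > r$, $\uvec = \xvec/\sigma$, and $\vvec = \yvec/\tau$, the identities $\hat{\xvec} = r\uvec$, $\hat{\yvec} = r\vvec$, $\|\yvec - \hat{\yvec}\| = \tau - r$, and $\|\xvec - \hat{\xvec}\| = \sigma - r$ reduce the target after cancellation of the $-r$ terms to
\[ r\|\vvec - \uvec\| + \tau - \sigma \leq \mu(\calX)\|\tau\vvec - \sigma\uvec\|. \]
I would then subdivide on the sign of $\tau - \sigma$. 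If $\tau > \sigma$, I would apply Lemma \ref{lemma:rect_const_ball} \emph{on the sphere of radius $\sigma$} (taking its radius parameter to be $\sigma$, its two sphere points to be $\sigma\vvec, \sigma\uvec \in \partial B(\bv{0},\sigma)$, and its scaling parameter $t$ to be $\tau/\sigma > 1$) to obtain $\sigma\|\vvec - \uvec\| + \tau - \sigma \leq \mu(\calX)\|\tau\vvec - \sigma\uvec\|$, and then use $r \leq \sigma$ to monotonically weaken the first term. If instead $\tau \leq \sigma$, then $\tau - \sigma \leq 0$ and it suffices to bound $r\|\vvec - \uvec\| = \|\hat{\yvec} - \hat{\xvec}\|$ by $\mu(\calX)\|\tau\vvec - \sigma\uvec\| = \mu(\calX)\|\yvec - \xvec\|$, which is immediate from the Lipschitz property of the radial retraction together with $k(\calX) \leq \mu(\calX)$ from Proposition \ref{prop:lipschitz_rectangular_relation}.

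The main obstacle I anticipate is resisting the tempting but insufficient direct application of Lemma \ref{lemma:rect_const_ball} on the \emph{native} sphere of radius $r$: that route yields only $\|\hat{\yvec} - \hat{\xvec}\| + \|\yvec - \hat{\yvec}\| \leq \mu(\calX)\|\yvec - \hat{\xvec}\|$, after which bounding $\|\yvec - \hat{\xvec}\|$ via the triangle inequality introduces an extraneous factor of $\mu(\calX)$ on $\|\xvec - \hat{\xvec}\|$ and loses the sharp bound (indeed, numerical experiments show the intermediate inequality $\mu(\calX)\|\yvec - \hat{\xvec}\| \leq \mu(\calX)\|\yvec - \xvec\| + \|\xvec - \hat{\xvec}\|$ is false in general). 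The crucial insight will be that $\xvec$ itself already lives on the sphere of radius $\sigma \geq r$, so invoking the rectangular-constant inequality on \emph{that} larger sphere absorbs $\|\xvec - \hat{\xvec}\| = \sigma - r$ directly through the $(t-1)\|\yvec\|$ slack term of Lemma \ref{lemma:rect_const_ball}, producing coefficient $1$ rather than $\mu(\calX)$ on $\|\xvec - \hat{\xvec}\|$ in the target inequality.
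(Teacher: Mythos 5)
Your proof is correct and follows essentially the same route as the paper's: same reduction to $\wvec=\vec{0}$ and $r>0$, same split on whether $\yvec$ is inside the ball, and in the remaining case the same subdivision into ``use the Lipschitz bound $k(\calX)\le\mu(\calX)$'' when $\|\yvec\|\le\|\xvec\|$ and ``apply Lemma~\ref{lemma:rect_const_ball} on the sphere of radius $\|\xvec\|$ with scale $t=\|\yvec\|/\|\xvec\|$'' when $\|\yvec\|>\|\xvec\|$. The only cosmetic difference is that you cancel the $\|\xvec-\hat{\xvec}\|$ and $\|\yvec-\hat{\yvec}\|$ terms algebraically up front and then weaken $\sigma\|\vvec-\uvec\|$ to $r\|\vvec-\uvec\|$, whereas the paper introduces the intermediate point $\zvec=\rho(\yvec;\|\xvec\|)$ explicitly and chains the triangle and scaling inequalities; the underlying invocation of the rectangular-constant lemma on the larger sphere is identical, and you correctly identified this as the key step.
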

\begin{proof}
    If $r = 0$, then $B(\wvec, r) = \{\wvec\}$, so $\hat{\xvec} = \hat{\yvec} = \wvec$, and the result follows from the triangle inequality, as $\mu(\calX) \geq \sqrt{2}$. Thus we restrict to the case that $r > 0$.

    It suffices to consider the case where $\wvec = \bv{0}$. Then $\hat{\xvec} = \rho(\xvec; r)$ and $\hat{\yvec} = \rho(\yvec; r)$. We distinguish two cases.

    First, suppose $\yvec \in B(\bv{0}, r)$. Then $\hat{\yvec} = \yvec$, and by the triangle inequality,
    $$\|\hat{\yvec} - \hat{\xvec}\| + \|\yvec - \hat{\yvec}\| = \|\yvec - \hat{\xvec}\| \leq \|\yvec - \xvec\| + \|\xvec - \hat{\xvec}\| \leq \mu(\calX)\|\yvec - \xvec\| + \|\xvec - \hat{\xvec}\|.$$

    Second, suppose $\yvec \in X\setminus B(\bv{0}, r)$. Then $\hat{\yvec} = r\frac{\yvec}{\|\yvec\|}$. We distinguish two further subcases.

    \begin{enumerate}[(i)]
        \item Suppose that $\|\yvec - \hat{\yvec}\| \leq \|\xvec - \hat{\xvec}\|$. Then
        \begin{align*}
            \|\hat{\yvec} - \hat{\xvec}\| + \|\yvec - \hat{\yvec}\| &\leq \|\hat{\yvec} - \hat{\xvec}\| + \|\xvec - \hat{\xvec}\| \\
            &\leq k(\calX)\|\yvec - \xvec\| + \|\xvec - \hat{\xvec}\| \\
            &\leq \mu(\calX)\|\yvec - \xvec\| + \|\xvec - \hat{\xvec}\| &\text{by Proposition \ref{prop:lipschitz_rectangular_relation}.}
        \end{align*}

        \item On the other hand, suppose that $\|\yvec - \hat{\yvec}\| > \|\xvec - \hat{\xvec}\|$, or equivalently, $\|\yvec\| \geq \|\xvec\|$, and define $\zvec = \rho(\yvec; \|\xvec\|) = \|\xvec\|\frac{\yvec}{\|\yvec\|}$. By collinearity, $\|\zvec - \hat{\yvec}\| = \|\xvec\| - r = \|\xvec - \hat{\xvec}\|$. Furthermore, we have that 
        $$\|\hat{\yvec} - \hat{\xvec}\| \leq \frac{\|\xvec\|}{r}\|\hat{\yvec} - \hat{\xvec}\| = \|\zvec - \xvec\|.$$ 
        It then follows that
        \begin{align*}
            \|\hat{\yvec} - \hat{\xvec}\| + \|\yvec - \hat{\yvec}\| &\leq \|\zvec - \xvec\| + \|\yvec - \zvec\| + \|\zvec - \hat{\yvec}\| \\
            &= \|\zvec - \xvec\| + \left(\frac{\|\yvec\|}{\|\xvec\|} - 1\right)\|\zvec\| + \|\xvec - \hat{\xvec}\| \\
            &\leq \mu(\calX)\|\yvec - \xvec\| + \|\xvec - \hat{\xvec}\|
        \end{align*}
        where the final inequality follows from Lemma \ref{lemma:rect_const_ball}, where $\xvec, \zvec \in \partial B(\vec{0}, \|\xvec\|)$ are (respectively) the points $\xvec, \yvec$ in that lemma's statement.
    \end{enumerate}
\end{proof}

We have now presented all technical lemmas that will be employed in our proof of Theorem \ref{theorem:consistent_robust_general}. Before moving on to this proof in the next section, we first provide several immediate corollaries of the preceding lemmas characterizing various steps of Algorithm \ref{alg:interp}.

\begin{corollary} \label{corollary:xt_zt}
    In the specification of $\interp$ (Algorithm \ref{alg:interp}), if $x_t$ is determined by Line \ref{algline:notadvice_general}, then $\|\xvec_t - \zvec_t\| \leq \|\svec_t - \svec_{t-1}\|$.
\end{corollary}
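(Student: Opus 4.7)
The plan is to reduce the claim to a direct application of Lemma \ref{lemma:triangle_end_balls} by observing that $\zvec_t$ and $\xvec_t$ can both be written as radial retractions of the same ``target'' point $\tilde{\xvec}_t$ onto balls of the same radius, centered at $\svec_{t-1}$ and $\svec_t$ respectively.

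First, I would rewrite $\zvec_t$ in a convenient form. Let $r \coloneqq \|\zvec_t - \svec_{t-1}\|$, so that line \ref{algline:notadvice_general} reads $\xvec_t = \svec_t + \rho(\tilde{\xvec}_t - \svec_t;\, r)$. Unwinding the two nested radial retractions in lines \ref{algline:firstargmax}--\ref{algline:z_t-y_t_general}, the vector $\zvec_t - \svec_{t-1}$ is a nonnegative scalar multiple of $\yvec_t - \svec_{t-1}$, which in turn is a nonnegative scalar multiple of $\tilde{\xvec}_t - \svec_{t-1}$. Hence $\zvec_t - \svec_{t-1}$ is itself a nonnegative scalar multiple of $\tilde{\xvec}_t - \svec_{t-1}$, with norm exactly $r$.

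Second, I would verify that $r \leq \|\tilde{\xvec}_t - \svec_{t-1}\|$, so that this scalar multiple is precisely the radial retraction at radius $r$. Indeed, by definition of $\rho$ we have $\|\yvec_t - \svec_{t-1}\| \leq \|\tilde{\xvec}_t - \svec_{t-1}\|$ and $r = \|\zvec_t - \svec_{t-1}\| \leq \|\yvec_t - \svec_{t-1}\|$, so $r \leq \|\tilde{\xvec}_t - \svec_{t-1}\|$. Therefore
\[
\zvec_t \;=\; \svec_{t-1} + \rho\bigl(\tilde{\xvec}_t - \svec_{t-1};\, r\bigr).
\]
(The degenerate case $\tilde{\xvec}_t = \svec_{t-1}$ forces $r = 0$, $\zvec_t = \svec_{t-1}$, and $\xvec_t = \svec_t$, so the desired inequality holds with equality; this can be disposed of at the outset.)

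Third, with $\zvec_t$ and $\xvec_t$ now expressed as radial retractions of the same point $\tilde{\xvec}_t$ onto balls of radius $r$ centered respectively at $\svec_{t-1}$ and $\svec_t$, Lemma \ref{lemma:triangle_end_balls} applied with $\avec = \tilde{\xvec}_t$, $\bvec = \svec_t$, $\cvec = \svec_{t-1}$ (so that its $\xvec$ is our $\xvec_t$ and its $\yvec$ is our $\zvec_t$) immediately yields $\|\xvec_t - \zvec_t\| \leq \|\svec_t - \svec_{t-1}\|$, completing the proof. The only potential subtlety is the bookkeeping in the first two steps to ensure that the two radial retractions really are at the same radius and applied to the same direction vector $\tilde{\xvec}_t - \svec_\bullet$; once that identification is made, the lemma does all of the geometric work.
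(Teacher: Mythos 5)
Your proof is correct and takes essentially the same approach as the paper: both reduce the claim to Lemma \ref{lemma:triangle_end_balls} by recognizing $\xvec_t$ and $\zvec_t$ as radial retractions of $\tilde{\xvec}_t$ at the common radius $r = \|\zvec_t - \svec_{t-1}\|$ onto balls centered at $\svec_t$ and $\svec_{t-1}$ respectively. Your first two steps explicitly verify the identity $\zvec_t = \svec_{t-1} + \rho(\tilde{\xvec}_t - \svec_{t-1}; r)$, a useful piece of bookkeeping the paper leaves implicit.
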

\begin{proof}
    This follows immediately from Lemma \ref{lemma:triangle_end_balls} with the lemma's $\avec$, $\bvec$, $\cvec$, and $r$ respectively chosen as $\tilde{\xvec}_t$, $\svec_{t-1}$, $\svec_t$ and $\|\zvec_t - \svec_{t-1}\|$.
\end{proof}

\begin{corollary} \label{corollary:lipschitz_proj_bound}
    In the specification of $\interp$ (Algorithm \ref{alg:interp}),
    $$\|\yvec_t - \xvec_{t-1}\| \leq k(\calX)\|\tilde{\xvec}_t - \tilde{\xvec}_{t-1}\|.$$
\end{corollary}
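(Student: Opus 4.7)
The plan is to exhibit $\yvec_t$ and $\xvec_{t-1}$ as the images of $\tilde{\xvec}_t$ and $\tilde{\xvec}_{t-1}$ under the same radial retraction, centered at $\svec_{t-1}$ with a common radius, and then invoke the fact that this retraction is $k(\calX)$-Lipschitz (as discussed in Appendix \ref{appendix:nvs_geometry}, the Lipschitz constant $k(\calX)$ of $\rho(\cdot\,; r)$ is independent of $r$, and it is preserved under translation of the ball's center from $\vec{0}$ to $\svec_{t-1}$).

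From line \ref{algline:firstargmax} of Algorithm \ref{alg:interp}, the target identity for $\yvec_t$ is immediate: $\yvec_t = \svec_{t-1} + \rho\bigl(\tilde{\xvec}_t - \svec_{t-1};\, \|\xvec_{t-1} - \svec_{t-1}\|\bigr)$. So the crux of the argument is to show that, with $r \coloneqq \|\xvec_{t-1} - \svec_{t-1}\|$, one also has $\xvec_{t-1} = \svec_{t-1} + \rho\bigl(\tilde{\xvec}_{t-1} - \svec_{t-1};\, r\bigr)$. I would prove this by a short case analysis on how $\xvec_{t-1}$ was produced in the previous iteration. If $\xvec_{t-1}$ was set to $\tilde{\xvec}_{t-1}$ via line \ref{algline:advice_general}, the identity is trivial since $\tilde{\xvec}_{t-1}$ lies in $B(\svec_{t-1}, r)$ (in fact on its boundary). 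Otherwise $\xvec_{t-1}$ was produced by line \ref{algline:notadvice_general} as $\svec_{t-1} + \rho(\tilde{\xvec}_{t-1} - \svec_{t-1};\, r')$ for $r' = \|\zvec_{t-1} - \svec_{t-2}\|$; in that case $\xvec_{t-1}$ is collinear with $\svec_{t-1}$ and $\tilde{\xvec}_{t-1}$ (or equal to $\tilde{\xvec}_{t-1}$), and the radius $r = \|\xvec_{t-1} - \svec_{t-1}\|$ is precisely $\min\{r', \|\tilde{\xvec}_{t-1} - \svec_{t-1}\|\}$. Hence $\xvec_{t-1}$ is the radial retraction of $\tilde{\xvec}_{t-1}$ onto $B(\svec_{t-1}, r)$ in either case.

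Once this identity is in hand, applying the (translated) $k(\calX)$-Lipschitz bound on $\rho(\cdot\,; r)$ to the two inputs $\tilde{\xvec}_t - \svec_{t-1}$ and $\tilde{\xvec}_{t-1} - \svec_{t-1}$ yields
\[
\|\yvec_t - \xvec_{t-1}\| \;=\; \bigl\|\rho(\tilde{\xvec}_t - \svec_{t-1};\, r) - \rho(\tilde{\xvec}_{t-1} - \svec_{t-1};\, r)\bigr\| \;\leq\; k(\calX)\,\|\tilde{\xvec}_t - \tilde{\xvec}_{t-1}\|,
\]
as desired. The main subtlety — and the only real obstacle — is the verification that the radius $r$ matching $\xvec_{t-1}$'s distance from $\svec_{t-1}$ coincides with the radius used in the definition of $\yvec_t$; once the case split above is carried out carefully, the Lipschitz step is routine.
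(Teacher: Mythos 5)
Your proof is correct and takes essentially the same route as the paper's: both arguments hinge on the observation that $\yvec_t$ and $\xvec_{t-1}$ are the radial retractions of $\tilde{\xvec}_t$ and $\tilde{\xvec}_{t-1}$, respectively, onto the common ball $B(\svec_{t-1}, \|\xvec_{t-1}-\svec_{t-1}\|)$, after which the $k(\calX)$-Lipschitz bound is immediate. You supply the case analysis (on which branch produced $\xvec_{t-1}$) that the paper leaves implicit, but the underlying idea is identical.
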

\begin{proof}
    This follows by definition of the Lipschitz constant $k(\calX)$ of the radial retraction, and the observation that $\yvec_t$ (respectively $\xvec_{t-1}$) is the radial retraction of $\tilde{\xvec}_t$ (respectively $\tilde{\xvec}_{t-1}$) onto the ball $B(\svec_{t-1}, \|\xvec_{t-1} - \svec_{t-1}\|)$.
\end{proof}

\begin{corollary} \label{corollary:gen_proj_bound}
    In the specification of $\interp$ (Algorithm \ref{alg:interp}), 
    $$\|\yvec_t - \xvec_{t-1}\| + \|\tilde{\xvec}_t - \yvec_t\| \leq \mu(\calX)\|\tilde{\xvec}_t - \tilde{\xvec}_{t-1}\| + \|\tilde{\xvec}_{t-1} - \xvec_{t-1}\|.$$
\end{corollary}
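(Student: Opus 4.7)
The plan is to obtain this corollary as a direct specialization of Lemma \ref{lemma:ball_projection_lemma}. The right-hand side is already in the lemma's form $\mu(\calX)\|\yvec - \xvec\| + \|\xvec - \hat{\xvec}\|$ and the left-hand side matches $\|\hat{\yvec} - \hat{\xvec}\| + \|\yvec - \hat{\yvec}\|$ once we identify the players correctly. So the bulk of the work is just recognizing both $\yvec_t$ and $\xvec_{t-1}$ as radial retractions onto a common ball centered at $\svec_{t-1}$.

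First I would observe, directly from Line \ref{algline:firstargmax} of Algorithm \ref{alg:interp}, that $\yvec_t$ is, by definition, the radial retraction of $\tilde{\xvec}_t$ onto the ball $B(\svec_{t-1}, r)$, where $r := \|\xvec_{t-1} - \svec_{t-1}\|$. The key step that requires a small verification is to show that $\xvec_{t-1}$ is also the radial retraction of $\tilde{\xvec}_{t-1}$ onto this same ball $B(\svec_{t-1}, r)$. Splitting on how $\xvec_{t-1}$ was produced at the previous timestep: if Line \ref{algline:advice_general} fired, then $\xvec_{t-1} = \tilde{\xvec}_{t-1}$ so $r = \|\tilde{\xvec}_{t-1} - \svec_{t-1}\|$ and the retraction of $\tilde{\xvec}_{t-1} - \svec_{t-1}$ to radius $r$ is itself. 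If instead Line \ref{algline:notadvice_general} fired, then $\xvec_{t-1} - \svec_{t-1}$ is a radial retraction of $\tilde{\xvec}_{t-1} - \svec_{t-1}$, hence a nonnegative scalar multiple $\lambda(\tilde{\xvec}_{t-1} - \svec_{t-1})$ with $\lambda \in [0,1]$, and in particular $r \leq \|\tilde{\xvec}_{t-1} - \svec_{t-1}\|$. Since the radial retraction $\rho(\cdot; r)$ is constant along rays from the origin on the exterior of $B(\bv{0}, r)$, we conclude $\svec_{t-1} + \rho(\tilde{\xvec}_{t-1} - \svec_{t-1}; r) = \xvec_{t-1}$, as desired.

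Having identified both $\yvec_t$ and $\xvec_{t-1}$ as radial retractions onto the common ball $B(\svec_{t-1}, r)$, and noting that $\tilde{\xvec}_{t-1}$ lies outside its interior (the previous paragraph already shows $\|\tilde{\xvec}_{t-1} - \svec_{t-1}\| \geq r$), I would invoke Lemma \ref{lemma:ball_projection_lemma} with the substitutions $\wvec \leftarrow \svec_{t-1}$, $r \leftarrow \|\xvec_{t-1} - \svec_{t-1}\|$, $\xvec \leftarrow \tilde{\xvec}_{t-1}$ (so that $\hat{\xvec} = \xvec_{t-1}$), and $\yvec \leftarrow \tilde{\xvec}_t$ (so that $\hat{\yvec} = \yvec_t$). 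The conclusion of the lemma is then precisely the claimed inequality. The only mild care needed is the boundary case $t = 1$, where $r = 0$ and both sides collapse to $\|\tilde{\xvec}_1 - \xvec_0\|$ (using the conventions $\xvec_0 = \tilde{\xvec}_0 = \svec_0$), which is covered by the $r = 0$ branch of Lemma \ref{lemma:ball_projection_lemma}.

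The main (minor) obstacle is the bookkeeping in the second paragraph: making sure that $\xvec_{t-1}$ really does coincide with the radial retraction of $\tilde{\xvec}_{t-1}$ onto $B(\svec_{t-1}, \|\xvec_{t-1} - \svec_{t-1}\|)$ in every branch of the algorithm. Once that is established, the corollary is an immediate application of Lemma \ref{lemma:ball_projection_lemma} with no further computation.
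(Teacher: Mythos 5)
Your proof is correct and takes essentially the same approach as the paper: both invoke Lemma \ref{lemma:ball_projection_lemma} with the identical substitutions $\wvec \leftarrow \svec_{t-1}$, $r \leftarrow \|\xvec_{t-1} - \svec_{t-1}\|$, $\xvec \leftarrow \tilde{\xvec}_{t-1}$, $\yvec \leftarrow \tilde{\xvec}_t$. Your added verification that $\xvec_{t-1}$ coincides with the radial retraction of $\tilde{\xvec}_{t-1}$ onto $B(\svec_{t-1}, r)$ in every branch of the algorithm (and that $\tilde{\xvec}_{t-1}$ lies outside the interior of that ball, as the lemma requires) is a detail the paper leaves implicit, and is worth spelling out; note only the minor slip that in the $t=1$ boundary case the two sides do not literally coincide (the right-hand side carries the factor $\mu(\calX) \geq \sqrt{2}$), though the inequality of course still holds.
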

\begin{proof}
    This follows immediately from Lemma \ref{lemma:ball_projection_lemma} with $\xvec, \yvec, \wvec,$ and $r$ in the lemma's statement chosen respectively as $\tilde{\xvec}_{t-1}$, $\tilde{\xvec}_t$, $\svec_{t-1}$, and $\|\xvec_{t-1} - \svec_{t-1}\|$, which in turn yields $\hat{\yvec} = \yvec_t$ and $\hat{\xvec} = \xvec_{t-1}$.
\end{proof}

\subsection{Proof of bicompetitive bound}

We prove the bicompetitive bound of Theorem \ref{theorem:consistent_robust_general} in two parts: we will first show the competitive ratio with respect to $\adv$, and will follow with the competitive ratio with respect to $\rob$. Both results proceed via potential function arguments: the first uses the potential function $\|\tilde{\xvec}_t - \xvec_t\|$, and the second uses the potential function $c\|\xvec_t - \svec_t\|$ (with $c$ to be defined later on). The robustness and consistency claim then follows immediately from the bicompetitive bound and the observation in Appendix \ref{appendix:bicompetitive_robust_consistent}.
\\[1em]
\noindent\textbf{Proof of competitiveness with respect to $\adv$.} We define ``phases'' of the algorithm as follows: if $\xvec_t$ is determined by line \ref{algline:advice_general} of the algorithm, then the advice is in the ``$\adv$'' phase. Otherwise, if $\xvec_t$ is determined by line \ref{algline:notadvice_general}, then the advice is in the ``$\rob$'' phase. We refer to the time indices in which the algorithm is in the ``$\rob$'' phase as $R_1, \ldots, R_k \in [T]$ (where $k \leq T$, and $R_1 < \cdots < R_k$ are in increasing order). If the algorithm is never in the ``$\rob$'' phase, then $\xvec_t = \tilde{\xvec}_t$ $\forall t \in [T]$, and thus $\interp$ is 1-competitive with respect to $\adv$. Thus we restrict to the case that there is at least one time index in which the algorithm is in the ``$\rob$'' phase. By design, for each $j \in [k]$, $\cost_{\rob}(1, R_j) \leq \delta\cdot \cost_{\adv}(1, R_j)$.

Now we break into two cases depending on the phase. First, suppose that $\interp$ is in the ``$\adv$'' phase. This means that $\xvec_t = \tilde{\xvec}_t$. Then
\begin{align} 
    f_t(\xvec_t) + \|\xvec_t - \xvec_{t-1}\| + \|\tilde{\xvec}_t - \xvec_t\| &= f_t(\tilde{\xvec}_t) + \|\tilde{\xvec}_t - \xvec_{t-1}\| \nonumber \\
    &\leq f_t(\tilde{\xvec}_t) + \|\tilde{\xvec}_t - \tilde{\xvec}_{t-1}\| + \|\tilde{\xvec}_{t-1} - \xvec_{t-1}\| \label{eq:consistency_alg_advice_case_general}
\end{align}
follows immediately from the triangle inequality.

Second, consider the case that the algorithm is in the ``$\rob$'' phase. This means that $\xvec_t$ is determined by line \ref{algline:notadvice_general} of the algorithm; and there exists some $\lambda \in [0, 1]$ for which ${\xvec_t = \lambda \svec_t + (1-\lambda)\tilde{\xvec}_t}$. In this case, observe
\begin{align}
    &f_t(\xvec_t) + \|\xvec_t - \xvec_{t-1}\| + \|\tilde{\xvec}_t - \xvec_t\| \nonumber \\
    \leq \quad&\lambda f_t(\svec_t) + (1-\lambda)f_t(\tilde{\xvec}_t) + 2\|\xvec_t - \zvec_t\| + 2\|\zvec_t - \yvec_t\| + \|\yvec_t - \xvec_{t-1}\| + \|\tilde{\xvec}_t - \yvec_t\| \label{eq:consistency_triangle_general} \\
    \leq\quad& 2\cdot\cost_{\rob}(t, t) + 2\gamma\cdot\cost_{\adv}(t, t) + f_t(\tilde{\xvec}_t) + \|\yvec_t - \xvec_{t-1}\| + \|\tilde{\xvec}_t - \yvec_t\| \label{eq:consistency_2bds_general}
\end{align}
where (\ref{eq:consistency_triangle_general}) follows from convexity of $f_t$ and the triangle inequality, and (\ref{eq:consistency_2bds_general}) follows from bounding $\|\xvec_t - \zvec_t\|$ via Corollary \ref{corollary:xt_zt} and $\|\zvec_t - \yvec_t\|$ via line (\ref{algline:z_t-y_t_general}) of the algorithm. Invoking Corollary \ref{corollary:gen_proj_bound} gives the result
\begin{align} 
    &f_t(\xvec_t) + \|\xvec_t - \xvec_{t-1}\| + \|\tilde{\xvec}_t - \xvec_t\| \nonumber \\
    \leq \quad&2\cdot\cost_{\rob}(t, t) + 2\gamma\cdot\cost_{\adv}(t, t) + f_t(\tilde{\xvec}_t) + \mu(\calX)\|\tilde{\xvec}_t - \tilde{\xvec}_{t-1}\| + \|\tilde{\xvec}_{t-1} - \xvec_{t-1}\| \nonumber \\
    \leq \quad& 2\cdot\cost_{\rob}(t, t) + (\mu(\calX) + 2\gamma)\cost_{\adv}(t, t) + \|\tilde{\xvec}_{t-1} - \xvec_{t-1}\| \label{eq:robust_phase_bound_general}
\end{align}

Summing (\ref{eq:consistency_alg_advice_case_general}) and (\ref{eq:robust_phase_bound_general}) over time and noting that the left-hand side $\|\tilde{\xvec}_t - \xvec_t\|$ and right-hand side $\|\tilde{\xvec}_{t-1} - \xvec_{t-1}\|$ telescope, we obtain
\begin{align*}
    &\cost_{\interp}(1, T) \\
    \leq\quad &\sum_{t=1}^T f_t(\xvec_t) + \|\xvec_t - \xvec_{t-1}\| + \|\tilde{\xvec}_T - \xvec_T\|\\
    \leq\quad & \sum_{t \in \{R_j\}_{j=1}^k} 2\cdot \cost_{\rob}(t, t) + (\mu(\calX) + 2\gamma)\cost_{\adv}(t, t) + \sum_{t \in [T] \setminus \{R_j\}_{j=1}^k} \cost_{\adv}(t, t)\\
    &\leq 2\cdot \cost_{\rob}(1, R_k) + (\mu(\calX) + 2\gamma) \cost_{\adv}(1, T) \\
    &\leq 2\delta\cdot \cost_{\adv}(1, R_k) + (\mu(\calX) + 2\gamma) \cost_{\adv}(1, T) \\
    &\leq (\mu(\calX) + \epsilon)\cost_{\adv}(1, T)
\end{align*}
where the second to last inequality follows from the assumption that the algorithm is in the ``$\rob$'' phase at time $R_k$, implying $\cost_{\rob}(1, R_k) \leq \delta\cdot \cost_{\adv}(1, R_k)$; and in the last inequality we use the assumption on the parameters that $2\gamma + 2\delta = \epsilon$. This gives the competitive bound with respect to $\adv$. Note that we can repeat the same argument with truncated time horizon to obtain that $\interp$ is $(\mu(\calX) + \epsilon)$-competitive with respect to $\adv$ at every timestep. \jmlrQED

\noindent\textbf{Proof of competitiveness with respect to $\rob$.} Define the potential function $\phi_t = c\|\xvec_t - \svec_t\|$, with $c > 0$ to be determined later. 

Let $t' \in \{0, \ldots, T\}$ be the last time interval in which the algorithm's decision is determined by line \ref{algline:advice_general} of the algorithm, or equivalently, the greatest $t$ such that $\cost_{\rob}(1, t) \geq \delta\cdot \cost_{\adv}(1, t)$. Applying the competitive bound of $\interp$ with respect to $\adv$ to the subhorizon $t = 1, \ldots, t'$, we have $\cost_{\interp}(1, t') \leq (\mu(\calX) + \epsilon)\cost_{\adv}(1, t')$. By the triangle inequality, and since all algorithms begin at the same starting point $\xvec_0$, we have $\|\tilde{\xvec}_{t'} - \svec_{t'}\| \leq \cost_{\adv}(1, t') + \cost_{\rob}(1, t')$. Putting these together, we have
\begin{align*}
    \cost_{\interp}(1, t') + \phi_{t'} &= \cost_{\interp}(1, t') + c\|\tilde{\xvec}_{t'} - \svec_{t'}\| \\
    &\leq (\mu(\calX) + \epsilon)\cost_{\adv}(1, t') + c(\cost_{\adv}(1, t') + \cost_{\rob}(1, t')) \\
    &\leq \left(\frac{\mu(\calX) + \epsilon + c}{\delta} + c\right)\cost_{\rob}(1, t') \tageq\label{eq:advice_phase_robustness_general}
\end{align*}

Now consider arbitrary $t \in \{t'+1, \ldots, T\}$. We distinguish two cases. First, suppose $\xvec_t = \svec_t$. Then
\begin{align*}
    f_t(\xvec_t) + \|\xvec_t - \xvec_{t-1}\| + \phi_t - \phi_{t-1} &= f_t(\svec_t) + \|\svec_t - \xvec_{t-1}\| + c\|\svec_t - \svec_t\| - c\|\svec_{t-1} - \xvec_{t-1}\| \\
    &\leq f_t(\svec_t) + \|\svec_t - \svec_{t-1}\| + \|\svec_{t-1} - \xvec_{t-1}\| - c\|\svec_{t-1} - \xvec_{t-1}\| \\
    &\leq \cost_{\rob}(t, t) \tageq\label{eq:robustness_case_1}
\end{align*}
where the final inequality holds so long as $c \geq 1$.

On the other hand, suppose $\xvec_t \neq \svec_t$. Observe that
\begin{align*}
    \|\xvec_t - \svec_t\| &\leq \|\zvec_t - \svec_{t-1}\| &\text{by line \ref{algline:notadvice_general} of the algorithm} \\
    &= \|\yvec_t - \svec_{t-1}\| - \gamma\cdot\cost_{\adv}(t, t) &\text{by line \ref{algline:z_t-y_t_general} of the algorithm and $\xvec_t \neq \svec_t$} \\
    &\leq \|\xvec_{t-1} - \svec_{t-1}\| - \gamma\cdot\cost_{\adv}(t, t) &\text{by line \ref{algline:firstargmax} of the algorithm} \tageq\label{ineq:potl_diff}
\end{align*}
Then noting that $\xvec_t = \lambda \svec_t + (1-\lambda) \tilde{\xvec}_t$ for some $\lambda \in [0, 1]$, we have
\begin{align*}
    &f_t(\xvec_t) + \|\xvec_t - \xvec_{t-1}\| + \phi_t - \phi_{t-1} \\
    \leq\quad& \lambda f_t(\svec_t) + (1-\lambda)f_t(\tilde{\xvec}_t) + \|\xvec_t - \xvec_{t-1}\| - c\gamma\cdot \cost_{\adv}(t, t) \tageq\label{eq:conv_bnd}\\
    \leq\quad& f_t(\svec_t) + f_t(\tilde{\xvec}_t) + \|\xvec_t - \zvec_t\| + \|\zvec_t - \yvec_t\| + \|\yvec_t - \xvec_{t-1}\| - c\gamma\cdot\cost_{\adv}(t, t) \tageq\label{eq:rob_triangle}\\
    \leq\quad& \cost_{\rob}(t, t) + f_t(\tilde{\xvec}_t) + \gamma\cdot \cost_{\adv}(t, t) + \|\yvec_t - \xvec_{t-1}\| - c\gamma\cdot\cost_{\adv}(t, t) \tageq\label{eq:robustness_case2_triangle_bd} \\
    \leq\quad& \cost_{\rob}(t, t) + (k(\calX) + \gamma - c\gamma)\cost_{\adv}(t, t) \tageq\label{eq:robust_case2_lipschitz} \\
    \leq\quad& \cost_{\rob}(t, t) \tageq\label{eq:robust_case2_final}
\end{align*}
where (\ref{eq:conv_bnd}) follows from convexity and (\ref{ineq:potl_diff}), (\ref{eq:rob_triangle}) follows from the triangle inequality, (\ref{eq:robustness_case2_triangle_bd}) follows from bounding $\|x_t - z_t\|$ via Corollary \ref{corollary:xt_zt} and $\|z_t - y_t\|$ via algorithm line (\ref{algline:z_t-y_t_general}), (\ref{eq:robust_case2_lipschitz}) follows from the observation in Corollary \ref{corollary:lipschitz_proj_bound}, and (\ref{eq:robust_case2_final}) holds so long as $c \geq 1 + \frac{k(\calX)}{\gamma}$.

Thus we set $c = 1 + \frac{k(\calX)}{\gamma}$; summing (\ref{eq:robustness_case_1}) and (\ref{eq:robust_case2_final}) over times $t'+1, \ldots, T$ and adding to (\ref{eq:advice_phase_robustness_general}), we obtain
\begin{align*}
    \cost_{\interp}(1, T) &\leq \left(1 + \frac{k(\calX)}{\gamma} + \frac{\mu(\calX) + \epsilon + 1 + \frac{k(\calX)}{\gamma}}{\delta}\right)\cost_{\rob}(1, t') + \cost_{\rob}(t'+1, T) \\
    &\leq \left(1 + \frac{k(\calX)}{\gamma} + \frac{\mu(\calX) + \epsilon + 1 + \frac{k(\calX)}{\gamma}}{\delta}\right)\cost_{\rob}(1, T).
\end{align*} \jmlrQED

\subsection{Parameter optimization}
We conclude with a brief comment on the optimal selection of parameters $\gamma, \delta$ for $\interp$.
If we minimize the competitive bound of $\interp$ with respect to $\rob$ over parameters $\gamma, \delta > 0$ satisfying $2\gamma + 2\delta = \epsilon$, then we obtain the following $\mathcal{O}(\frac{1}{\epsilon^2})$ bound on the competitive ratio with respect to $\rob$ (with arguments of $\mu(\calX), k(\calX)$ suppressed):
$$3 + \frac{2(\epsilon + k(\calX)(4 + \epsilon) + \epsilon\mu(\calX)) + 4\sqrt{k(\calX)(2+\epsilon)(2k(\calX)+\epsilon(1+\epsilon+\mu(\calX)))}}{\epsilon^2}$$
which is obtained by setting
$$\gamma = \frac{\sqrt{k(\calX)(2+\epsilon)(2k(\calX)+\epsilon(1+\epsilon+\mu(\calX)))} - k(\calX)(2+\epsilon)}{2(1-k(\calX)+\epsilon+\mu(\calX))}$$
and
$$\delta = \frac{\epsilon}{2} - \gamma.$$
With parameters chosen optimally thus, $\interp$ is $(\mu(\calX)+\epsilon, \calO(\epsilon^{-2}))$-bicompetitive. Moreover, even if $\mu(\calX)$ and $k(\calX)$ are not known exactly, simply setting $\gamma = \delta = \frac{\epsilon}{4}$ gives an (up to a constant factor) identical $(\mu(\calX)+\epsilon, \calO(\epsilon^{-2}))$-bicompetitiveness.

\section{Proof of Theorem \ref{theorem:bounded_bicompetitive} \label{appendix:bounded_bicompetitive}}

We prove Theorem \ref{theorem:bounded_bicompetitive} in two parts: we first prove the competitive ratio of $\binterp$ with respect to $\adv$, and then we prove the competitive ratio with respect to $\rob$. The robustness and consistency claim then follows immediately from the bicompetitive bound and the observation in Appendix \ref{appendix:bicompetitive_robust_consistent}.
\\[1em]
\noindent\textbf{Proof of competitiveness with respect to $\adv$.} We define ``phases'' of the algorithm as follows: if $\xvec_t$ is determined by line \ref{bd_algline:advice_general} of the algorithm, then the advice is in the ``$\adv$'' phase. Otherwise, if $\xvec_t$ is determined by line \ref{bd_algline:notadvice_general}, then the advice is in the ``$\rob$'' phase. We refer to the time indices in which the algorithm is in the ``$\rob$'' phase as $R_1, \ldots, R_k \in [T]$ (where $k \leq T$, and $R_1 < \cdots < R_k$ are in increasing order). If the algorithm is never in the ``$\rob$'' phase, then $\xvec_t = \adv_t$ $\forall t \in [T]$, and thus $\binterp$ is 1-competitive with respect to $\adv$. Thus we restrict to the case that there is at least one time index in which the algorithm is in the ``$\rob$'' phase. By design, for each $j \in [k]$, $\cost_{\rob}(1, R_j) \leq \delta\cdot \cost_{\adv}(1, R_j)$.

Now we break into two cases depending on the phase. First, suppose the $\binterp$ is in the ``$\adv$'' phase. This means that $\xvec_t = \tilde{\xvec}_t$. Then
\begin{align} 
    f_t(\xvec_t) + \|\xvec_t - \xvec_{t-1}\| + \|\tilde{\xvec}_t - \xvec_t\| &= f_t(\tilde{\xvec}_t) + \|\tilde{\xvec}_t - \xvec_{t-1}\| \nonumber\\
    &\leq f_t(\tilde{\xvec}_t) + \|\tilde{\xvec}_t - \tilde{\xvec}_{t-1}\| + \|\tilde{\xvec}_{t-1} - \xvec_{t-1}\|\label{bd_eq:consistency_alg_advice_case_general}
\end{align}
follows immediately from the triangle inequality.

Second, consider the case that the algorithm is in the ``$\rob$'' phase. This means that $\xvec_t$ is determined by line \ref{bd_algline:notadvice_general} of the algorithm; and there exists some $\lambda \in [0, 1]$ for which $\xvec_t = \lambda \svec_t + (1-\lambda)\tilde{\xvec}_t$. In this case, observe
\begin{align}
    &f_t(\xvec_t) + \|\xvec_t - \xvec_{t-1}\| + \|\tilde{\xvec}_t - \xvec_t\| \nonumber \\
    \leq \quad&\lambda f_t(\svec_t) + (1-\lambda)f_t(\tilde{\xvec}_t) + 2\|\xvec_t - \yvec_t\| + \|\yvec_t - \xvec_{t-1}\| + \|\tilde{\xvec}_t - \yvec_t\| \label{bd_eq:consistency_triangle_general} \\
    \leq\quad& f_t(\svec_t) + f_t(\tilde{\xvec}_t) + 2\gamma\cdot \cost_{\adv}(t, t) + \|\yvec_t - \xvec_{t-1}\| + \|\tilde{\xvec}_t - \yvec_t\| \label{bd_eq:xt_yt}
\end{align}
where (\ref{bd_eq:consistency_triangle_general}) follows from convexity of $f_t$ and the triangle inequality, (\ref{bd_eq:xt_yt}) follows via algorithm line \ref{bd_algline:notadvice_general}. Observing that $\xvec_{t-1} = \nu\tilde{\xvec}_{t-1} + (1-\nu)\svec_{t-1}$, we can use the triangle inequality to obtain
\begin{equation} \label{ineq:yt_xtmin1_triange}
    \|\yvec_t - \xvec_{t-1}\| \leq \nu\|\tilde{\xvec}_t - \tilde{\xvec}_{t-1}\| + (1-\nu)\|\svec_t - \svec_{t-1}\|.
\end{equation} 
Moreover, observe
\begin{align*}
    \|\tilde{\xvec}_t - \yvec_t\| &= (1-\nu)\|\tilde{\xvec}_t - \svec_t\| \\
    &\leq (1-\nu)(\|\tilde{\xvec}_t - \tilde{\xvec}_{t-1}\| + \|\tilde{\xvec}_{t-1} - \svec_{t-1}\| + \|\svec_t - \svec_{t-1}\|) \\
    &= (1-\nu)(\|\tilde{\xvec}_t - \tilde{\xvec}_{t-1}\| + \|\svec_t - \svec_{t-1}\|) + \|\tilde{\xvec}_{t-1} - \xvec_{t-1}\| \tageq\label{ineq:xtilt_yt_bnd}
\end{align*}
where the final equality follows by definition of $\nu$. Applying (\ref{ineq:yt_xtmin1_triange}) and (\ref{ineq:xtilt_yt_bnd}) to (\ref{bd_eq:xt_yt}), we obtain
\begin{align} 
    &f_t(\xvec_t) + \|\xvec_t - \xvec_{t-1}\| + \|\tilde{\xvec}_t - \xvec_t\| \nonumber \\
    \leq \quad& 2\cdot \cost_{\rob}(t, t) + (1 + 2\gamma)\cost_{\adv}(t, t) + \|\tilde{\xvec}_{t-1} - \xvec_{t-1}\| \label{bd_eq:robust_phase_bound_general}
\end{align}

Summing (\ref{bd_eq:consistency_alg_advice_case_general}) and (\ref{bd_eq:robust_phase_bound_general}) over time and noting that the left-hand side $\|\tilde{\xvec}_t - \xvec_t\|$ and right-hand side $\|\tilde{\xvec}_{t-1} - \xvec_{t-1}\|$ telescope, we obtain
\begin{align*}
    &\cost_{\binterp}(1, T) \\
    \leq\quad &\sum_{t=1}^T f_t(\xvec_t) + \|\xvec_t - \xvec_{t-1}\| + \|\tilde{\xvec}_T - \xvec_T\|\\
    \leq\quad & \sum_{t \in \{R_j\}_{j=1}^k} 2\cdot \cost_{\rob}(t, t) + (1 + 2\gamma)\cost_{\adv}(t, t) + \sum_{t \in [T] \setminus \{R_j\}_{j=1}^k} \cost_{\adv}(t, t)\\
    &\leq 2\cdot \cost_{\rob}(1, R_k) + (1 + 2\gamma) \cost_{\adv}(1, T) \\
    &\leq 2\delta\cdot \cost_{\adv}(1, R_k) + (1 + 2\gamma) \cost_{\adv}(1, T) \\
    &\leq (1 + \epsilon)\cost_{\adv}(1, T)
\end{align*}
where the second to last inequality follows from the assumption that the algorithm is in the ``$\rob$'' phase at time $R_k$, implying $\cost_{\rob}(1, R_k) \leq \delta\cdot \cost_{\adv}(1, R_k)$; and in the last inequality we use the assumption on the parameters that $2\gamma + 2\delta = \epsilon$. This gives the competitive bound with respect to $\adv$. Note that we can repeat the same argument with truncated time horizon to obtain that $\binterp$ is $(1 + \epsilon)$-competitive with respect to $\adv$ at every timestep. \jmlrQED

\noindent\textbf{Proof of competitiveness with respect to $\rob$.} Define the potential function $\phi_t = c\frac{\|\xvec_t - \svec_t\|}{\|\tilde{\xvec}_t - \svec_t\|}$, with $c > 0$ to be determined later (we set $\phi_t \coloneqq 0$ in the case that $\tilde{\xvec}_t = \svec_t$).

Let $t' \in \{0, \ldots, T\}$ be the last time interval in which the algorithm's decision is determined by line \ref{bd_algline:advice_general} of the algorithm, or equivalently, the greatest $t$ such that $\cost_{\rob}(1, t) \geq \delta\cdot\cost_{\adv}(1, t)$. Applying the competitive bound of $\binterp$ with respect to $\adv$ to the subhorizon $t = 1, \ldots, t'$, we have $\cost_{\binterp}(1, t') \leq (1 + \epsilon)\cost_{\adv}(1, t')$. Thereby we obtain
\begin{align*}
    \cost_{\binterp}(1, t') + \phi_{t'} &\leq (1 + \epsilon)\cost_{\adv}(1, t') + c \\
    &\leq \frac{1 + \epsilon}{\delta}\cost_{\rob}(1, t') + c. \tageq\label{bd_eq:advice_phase_robustness_general}
\end{align*}
where in the first inequality we have used the fact that $\|\xvec_t - \svec_t\| \leq \|\tilde{\xvec}_t - \svec_t\|$ for all $t$.

Now consider arbitrary $t \in \{t'+1, \ldots, T\}$. We distinguish two cases. First, suppose $\xvec_t = \svec_t$ and $\tilde{\xvec}_{t-1} \neq \svec_{t-1}$. Then
\begin{align*}
    f_t(\xvec_t) + \|\xvec_t - \xvec_{t-1}\| + \phi_t - \phi_{t-1} &= f_t(\svec_t) + \|\svec_t - \xvec_{t-1}\| - c\frac{\|\xvec_{t-1} - \svec_{t-1}\|}{\|\tilde{\xvec}_{t-1} - \svec_{t-1}\|} \\
    &\leq f_t(\svec_t) + \|\svec_t - \svec_{t-1}\| + \|\xvec_{t-1} - \svec_{t-1}\| - c\frac{\|\xvec_{t-1} - \svec_{t-1}\|}{\|\tilde{\xvec}_{t-1} - \svec_{t-1}\|} \\
    &\leq \cost_{\rob}(t, t) \tageq\label{bd_eq:robustness_case_1}
\end{align*}
where the final inequality holds so long as $c \geq D$, by $D$-boundedness of $\adv$ and $\rob$. Clearly (\ref{bd_eq:robustness_case_1}) will also hold in the case that $\tilde{\xvec}_{t-1} = \svec_{t-1}$, since this will imply $\xvec_{t-1} = \svec_{t-1}$. 

On the other hand, suppose $\xvec_t \neq \svec_t$. Thus we can assume that $\tilde{\xvec}_t \neq \svec_t$ and $\tilde{\xvec}_{t-1} \neq \svec_{t-1}$. First, note that 
\begin{align*}
    \frac{\|\xvec_t - \svec_t\|}{\|\tilde{\xvec}_t - \svec_t\|} &= \frac{\|\yvec_t - \svec_t\| - \gamma\cdot \cost_{\adv}(t, t)}{\|\tilde{\xvec}_t - \svec_t\|} \tageq\label{eq:yt_xt_bd}\\
    &= \nu - \frac{\gamma\cdot \cost_{\adv}(t, t)}{\|\tilde{\xvec}_t - \svec_t\|} \\
    &\leq \frac{\|\xvec_{t-1} - \svec_{t-1}\|}{\|\tilde{\xvec}_{t-1} - \svec_{t-1}\|} - \frac{\gamma\cdot \cost_{\adv}(t, t)}{D} \tageq\label{eq:nu_and_Dbound}
\end{align*}
where (\ref{eq:yt_xt_bd}) follows from line \ref{bd_algline:notadvice_general} of the algorithm and $\xvec_t \neq \svec_t$, and (\ref{eq:nu_and_Dbound}) follows by definition of $\nu$ and the $D$-boundedness of $\adv, \rob$.

Then noting that by convexity, $\xvec_t = \lambda \svec_t + (1-\lambda) \tilde{\xvec}_t$ for some $\lambda \in [0, 1]$, we have
\begin{align*}
    &f_t(\xvec_t) + \|\xvec_t - \xvec_{t-1}\| + \phi_t - \phi_{t-1} \\
    \leq\quad& \lambda f_t(\svec_t) + (1-\lambda)f_t(\tilde{\xvec}_t) + \|\xvec_t - \xvec_{t-1}\| - c\frac{\gamma\cdot \cost_{\adv}(t, t)}{D} \tageq\label{bd_eq:conv_bnd}\\
    \leq\quad& f_t(\svec_t) + f_t(\tilde{\xvec}_t) + \|\xvec_t - \yvec_t\| + \|\yvec_t - \xvec_{t-1}\| - c\frac{\gamma\cdot \cost_{\adv}(t, t)}{D} \tageq\label{bd_eq:rob_triangle}\\
    \leq\quad& f_t(\svec_t) + f_t(\tilde{\xvec}_t) + \gamma\cdot \cost_{\adv}(t, t) + \nu\|\tilde{\xvec}_t - \tilde{\xvec}_{t-1}\| + (1-\nu)\|\svec_t - \svec_{t-1}\| - c\frac{\gamma\cdot \cost_{\adv}(t, t)}{D} \tageq\label{bd_eq:robustness_case2_triangle_bd} \\
    \leq\quad& \cost_{\rob}(t, t) + \left(1 + \gamma - \frac{c\gamma}{D}\right)\cost_{\adv}(t, t) \\
    \leq\quad& \cost_{\rob}(t, t) \tageq\label{bd_eq:robust_case2_final}
\end{align*}
where (\ref{bd_eq:conv_bnd}) follows from convexity and (\ref{eq:nu_and_Dbound}), (\ref{bd_eq:rob_triangle}) follows from the triangle inequality, and (\ref{bd_eq:robustness_case2_triangle_bd}) follows from (\ref{ineq:yt_xtmin1_triange}) and line \ref{bd_algline:notadvice_general} of the algorithm. The final inequality (\ref{bd_eq:robust_case2_final}) holds as long as $c \geq D + \frac{D}{\gamma}$.

Thus we set $c = D + \frac{D}{\gamma}$; summing (\ref{bd_eq:robustness_case_1}) and (\ref{bd_eq:robust_case2_final}) over times $t'+1, \ldots, T$ and adding to (\ref{bd_eq:advice_phase_robustness_general}), we obtain
\begin{align*}
    \cost_{\interp}(1, T) &\leq \frac{1 + \epsilon}{\delta}\cost_{\rob}(1, t') + D + \frac{D}{\gamma} + \cost_{\rob}(t'+1, T) \\
    &\leq \frac{1 + \epsilon}{\delta}\cost_{\rob}(1, T) + D + \frac{D}{\gamma} \\
    &\leq \left(D + \frac{D}{\gamma} + \frac{1+\epsilon}{\delta}\right)\cost_{\rob}(1, T)
\end{align*}
where in the final inequality we have used the assumption that $\cost_{\rob} \geq 1$. \jmlrQED

\subsection{Parameter optimization}

To conclude, we briefly comment on the optimal selection of parameters $\gamma, \delta$ for $\binterp$.
Optimizing the competitive bound of $\binterp$ with respect to $\rob$ over those $\gamma, \delta > 0$ satisfying $2\gamma + 2\delta = \epsilon$, we obtain the following $\calO(\frac{D}{\epsilon})$-competitive bound with respect to $\rob$:
$$2 + D + \frac{2(1+D) + 4\sqrt{D(1+\epsilon)}}{\epsilon}$$
which is obtained by setting
$$\gamma = \frac{D\epsilon}{2(D + \sqrt{D(1+\epsilon)}}$$
and
$$\delta = \frac{\epsilon}{2} - \gamma.$$
With parameters chosen optimally thus, $\binterp$ is $(1+\epsilon, \calO(D\epsilon^{-1}))$-bicompetitive. Moreover, even if $D$ is not known exactly \emph{a priori}, simply setting $\gamma = \delta = \frac{\epsilon}{4}$ gives an (up to a constant factor) identical $(1+\epsilon, \calO(D\epsilon^{-1}))$-bicompetitiveness.

\section{Robustness and consistency corollaries of Theorems \ref{theorem:consistent_robust_general} and \ref{theorem:bounded_bicompetitive} \label{appendix:laundry_list}}

In this section, we detail the upper bounds on robustness and consistency resulting from Theorems \ref{theorem:consistent_robust_general} and \ref{theorem:bounded_bicompetitive} on $\cfc$ and each of its subclasses defined in Appendix \ref{appendix:cfc_subclasses}. Each of these corollaries follows immediately upon instantiating the robust algorithm $\rob$ provided as input to $\interp$ (Algorithm \ref{alg:interp}) or $\binterp$ (Algorithm \ref{alg:binterp}) with a competitive algorithm whose competitive ratio is listed in Table \ref{table:sota_extended}. We begin with the corollaries of Theorem \ref{theorem:consistent_robust_general}.

\begin{corollary}
    \begin{enumerate}[(i)]
    \itemsep0em 
        \item $\interp$ (Algorithm \ref{alg:interp}) with $\rob$ chosen as the functional Steiner point algorithm (\cite{sellke_chasing_2020}) is $(\mu(\R^d, \|\cdot\|)+\epsilon)$-consistent and $\calO(\frac{d}{\epsilon^2})$-robust for $\cfc$ and $\cbc$ on $\R^d$ with any norm.
        
        \item $\interp$ (Algorithm \ref{alg:interp}) with $\rob$ chosen as the low-dimensional chasing algorithm of \cite{argue_dimension-free_2020} is $(\sqrt{2}+\epsilon)$-consistent and $\calO(\frac{k}{\epsilon^2})$-robust for $\kcbc$ on $(\R^d, \|\cdot\|_{\ell^2})$.
        
        \item $\interp$ (Algorithm \ref{alg:interp}) with $\rob$ chosen as the greedy algorithm (\cite{zhang_revisiting_2021}) is $(\mu(\calX)+\epsilon)$-consistent and $\calO(\frac{1}{\alpha\epsilon^2})$-robust for $\acfc$ on any normed vector space $\calX$.
        
        \item $\interp$ (Algorithm \ref{alg:interp}) with $\rob$ chosen as the greedy OBD algorithm (\cite{lin_personal_2022}) is $(\sqrt{2}+\epsilon)$-consistent and $\calO(\frac{1}{\alpha^{1/2}\epsilon^2})$-robust for $\acfc$ on $(\R^d, \|\cdot\|_{\ell^2})$.
        
        \item $\interp$ (Algorithm \ref{alg:interp}) with $\rob$ chosen as the Move towards Minimizer algorithm (\cite{argue_dimension-free_2020}) is $(\sqrt{2}+\epsilon)$-consistent and $\calO(\frac{2^{\gamma/2}\kappa}{\epsilon^{2}})$-robust for $\kgcfc$ on $(\R^d, \|\cdot\|_{\ell^2})$.
    \end{enumerate}
    In particular, each of the consistency bounds is $\sqrt{2}+\epsilon$ in the case that the decision space is Hilbert.
\end{corollary}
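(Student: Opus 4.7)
The plan is to derive each item of the corollary as an immediate instantiation of Theorem \ref{theorem:consistent_robust_general}, combined with the state-of-the-art competitive ratios summarized in Table \ref{table:sota_extended}. Recall that Theorem \ref{theorem:consistent_robust_general} guarantees that if $\adv$ is an advice algorithm and $\rob$ is a $C$-competitive algorithm for (a subclass of) $\cfc$, then $\interp(\adv, \rob; \epsilon, \gamma, \delta)$ with suitably chosen hyperparameters $\gamma, \delta$ is $(\mu(\calX)+\epsilon)$-consistent and $\calO(C\epsilon^{-2})$-robust on that subclass, via the reduction described in Appendix \ref{appendix:bicompetitive_robust_consistent}. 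Each bullet then follows by substituting an appropriate $\rob$ and reading off the bicompetitive bound.

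Concretely, for (i) I would take $\rob$ to be the functional Steiner point algorithm of \cite{sellke_chasing_2020}, whose competitive ratios are $C = d+1$ for $\cfc$ and $C = d$ for $\cbc$ on $\R^d$ under any norm; the ambient space $\calX = (\R^d, \|\cdot\|)$ has rectangular constant $\mu(\R^d, \|\cdot\|)$, producing the stated bound. For (ii) I would use the $(2k+1)$-competitive $\kcbc$ algorithm of \cite{argue_dimension-free_2020}; since the ambient space is $\ell^2$, we have $\mu(\calX) = \sqrt{2}$. Items (iii)--(v) are parallel: I would plug in, respectively, the greedy algorithm of \cite{zhang_revisiting_2021} with ratio $\max\{1, 2/\alpha\} = \calO(1/\alpha)$ (valid in any normed vector space), the Greedy OBD algorithm of \cite{lin_personal_2022} with ratio $\calO(\alpha^{-1/2})$ (on $\ell^2$), and the Move towards Minimizer algorithm of \cite{argue_dimension-free_2020} with ratio $(2+2\sqrt{2})2^{\gamma/2}\kappa$ (on $\ell^2$). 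In each $\ell^2$ case the rectangular constant equals $\sqrt{2}$, giving the corresponding consistency term.

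The closing ``in particular'' claim reduces to the standard identity $\mu(\calH) = \sqrt{2}$ for any Hilbert space $\calH$ (\cite[Example 1]{joly_caracterisations_1969}); hence $\mu(\calX) + \epsilon$ collapses to $\sqrt{2}+\epsilon$ whenever $\calX$ is Hilbert. No substantive obstacle arises, since each item is a direct substitution of a known competitive ratio into the generic $(\mu(\calX)+\epsilon, \calO(C\epsilon^{-2}))$ template. The one mild nuisance is notational: in item (v) the exponent $\gamma$ from the $\kgcfc$ growth condition collides with the hyperparameter $\gamma$ of $\interp$, so I would rename the latter (say to $\gamma'$) before invoking Theorem \ref{theorem:consistent_robust_general} to avoid ambiguity.
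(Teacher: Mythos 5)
Your proof matches the paper's own argument exactly: the paper likewise derives the corollary by instantiating $\rob$ in Theorem \ref{theorem:consistent_robust_general} with the competitive algorithm from Table \ref{table:sota_extended} for each subclass and reading off the $(\mu(\calX)+\epsilon, \calO(C\epsilon^{-2}))$ bicompetitive bound, plus the fact that $\mu(\calX)=\sqrt{2}$ in Hilbert space. The remark about the $\gamma$ name collision in item (v) is a reasonable housekeeping note but not substantive; nothing is missing.
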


We now present the corollaries of Theorem \ref{theorem:bounded_bicompetitive}.

\begin{corollary} \label{cor:bdinterp_ubs}
    In each of the following, suppose that $(\adv, \rob)$ are $D$-bounded and $\cost_\rob \geq 1$.
    \begin{enumerate}[(i)]
    \itemsep0em 
        \item $\binterp$ (Algorithm \ref{alg:binterp}) with $\rob$ chosen as the functional Steiner point algorithm (\cite{sellke_chasing_2020}) is $(1+\epsilon)$-consistent and $\calO(\frac{dD}{\epsilon})$-robust for $\cfc$ and $\cbc$ on $\R^d$ with any norm. 
        
        \item $\binterp$ (Algorithm \ref{alg:binterp}) with $\rob$ chosen as the low-dimensional chasing algorithm of \cite{argue_dimension-free_2020} is $(1+\epsilon)$-consistent and $\calO(\frac{kD}{\epsilon})$-robust for $\kcbc$ on $(\R^d, \|\cdot\|_{\ell^2})$.
        
        \item $\binterp$ (Algorithm \ref{alg:binterp}) with $\rob$ chosen as the greedy algorithm (\cite{zhang_revisiting_2021}) is $(1+\epsilon)$-consistent and $\calO(\frac{D}{\alpha\epsilon})$-robust for $\acfc$ on any normed vector space. 
        
        \item $\binterp$ (Algorithm \ref{alg:binterp}) with $\rob$ chosen as the Greedy OBD algorithm (\cite{lin_personal_2022}) is $(1+\epsilon)$-consistent and $\calO(\frac{D}{\alpha^{1/2}\epsilon})$-robust for $\acfc$ on $(\R^d, \|\cdot\|_{\ell^2})$. \label{cor:yiheng_tight}
        
        \item $\binterp$ (Algorithm \ref{alg:binterp}) with $\rob$ chosen as the Move towards Minimizer algorithm (\cite{argue_dimension-free_2020}) is $(1+\epsilon)$-consistent and $\calO(\frac{2^{\gamma/2}\kappa D}{\epsilon})$-robust for $\kgcfc$ on $(\R^d, \|\cdot\|_{\ell^2})$.
    \end{enumerate}
\end{corollary}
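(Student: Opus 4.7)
The plan is to derive each item of the corollary as a direct consequence of Theorem \ref{theorem:bounded_bicompetitive} together with the state-of-the-art competitive ratios summarized in Table \ref{table:sota_extended}. Recall that Theorem \ref{theorem:bounded_bicompetitive} asserts that whenever $\adv$ and $\rob$ are $D$-bounded and $\cost_\rob \geq 1$, the output $\binterp(\adv, \rob; \epsilon, \gamma, \delta)$ is $(1+\epsilon)$-consistent with respect to $\adv$ and $\calO(CD/\epsilon)$-robust whenever $\rob$ is $C$-competitive for the relevant subclass of $\cfc$ (via the general observation in Appendix \ref{appendix:bicompetitive_robust_consistent} that bicompetitive meta-algorithms preserve robustness and consistency under subclass restrictions). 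The consistency factor $(1+\epsilon)$ is independent of the choice of $\rob$, so only the robustness factor needs to be computed individually for each item.

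For each of the five items, the plan is simply to substitute the appropriate value of $C$ from Table \ref{table:sota_extended} into the $\calO(CD/\epsilon)$ robustness bound. Specifically, (i) the functional Steiner point algorithm of \cite{sellke_chasing_2020} achieves $C = d+1$ for $\cfc$ and $C = d$ for $\cbc$ on $\R^d$ with any norm, yielding robustness $\calO(dD/\epsilon)$; (ii) the low-dimensional chasing algorithm of \cite{argue_dimension-free_2020} achieves $C = 2k+1$ for $\kcbc$ on $(\R^d, \|\cdot\|_{\ell^2})$, giving $\calO(kD/\epsilon)$; (iii) the greedy algorithm of \cite{zhang_revisiting_2021} achieves $C = \max\{1, 2/\alpha\}$ for $\acfc$ on any normed vector space, so we obtain $\calO(D/(\alpha\epsilon))$; (iv) the Greedy OBD algorithm of \cite{lin_personal_2022} achieves $C = \calO(\alpha^{-1/2})$ for $\acfc$ in the finite-dimensional Euclidean setting, yielding $\calO(D/(\alpha^{1/2}\epsilon))$; and (v) the Move towards Minimizer algorithm of \cite{argue_dimension-free_2020} achieves $C = (2 + 2\sqrt{2})2^{\gamma/2}\kappa$ for $\kgcfc$ on $(\R^d, \|\cdot\|_{\ell^2})$, yielding $\calO(2^{\gamma/2}\kappa D/\epsilon)$.

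The only subtlety in packaging these instantiations is ensuring that the feasibility or subclass-specific structural constraints are respected when invoking Theorem \ref{theorem:bounded_bicompetitive}. This is straightforward for the function-chasing subclasses $\acfc$ and $\kgcfc$, since no hard feasibility constraints are present. For the body-chasing subclasses ($\cbc$ and $\kcbc$), we rely on the reformulation described in Appendices \ref{appendix:cbc_cfc_equiv} and \ref{appendix:bicompetitive_robust_consistent}: we replace the $\{0,\infty\}$-indicators with the rescaled distance-to-body hitting costs, run $\binterp$ on the equivalent $\cfc$ instance, and post-process each decision by metric projection onto $K_t$ to obtain a feasible $\cbc$ decision without increasing cost. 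This guarantees that the bicompetitive bound of Theorem \ref{theorem:bounded_bicompetitive} transfers to a robustness and consistency bound on the subclass, completing each item.

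I do not foresee a main obstacle, since the corollary is entirely a matter of bookkeeping: every ingredient (the meta-algorithm guarantee, the robust algorithms, their competitive ratios, and the body-chasing reduction) is already established in the paper. The only care required is stating the $D$-boundedness and $\cost_\rob \geq 1$ hypotheses explicitly for each instantiation, which is already done in the corollary statement.
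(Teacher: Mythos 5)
Your proposal is correct and matches the paper's own proof: the paper's Appendix F proves the corollary by instantiating $\rob$ in Theorem~\ref{theorem:bounded_bicompetitive} with the competitive algorithms from Table~\ref{table:sota_extended} and substituting the respective competitive ratios into the $\calO(CD/\epsilon)$ robustness bound, with the body-chasing cases handled via the reformulation of Appendices~\ref{appendix:cbc_cfc_equiv} and~\ref{appendix:bicompetitive_robust_consistent}. You have included all the bookkeeping the paper relies on, including the subtlety regarding feasibility for $\cbc$ and $\kcbc$.
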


\end{document}